\documentclass{new_tlp}
\usepackage{graphicx}
\usepackage{multirow}
\usepackage{xcolor}
\usepackage{xspace}
\usepackage{algorithm}
\usepackage{algorithmic}
\usepackage{amsmath}
\usepackage{mathtools}
\usepackage{xspace}
\usepackage{amssymb}
\usepackage{multirow}
\usepackage{todonotes}
\usepackage{subcaption}
\usepackage{tikz}
\usepackage{pgfplots}
\usepackage{url}
\pgfplotsset{compat=1.18}

\newcommand{\ASPQ}{ASP(Q)\xspace}
\newcommand{\ASPQW}{ASP\ensuremath{^w}(Q)\xspace}
\newcommand{\TWOASPQ}{2-ASP(Q)\xspace}
\newcommand{\TWOASPQW}{2-ASP\ensuremath{^w}(Q)\xspace}
\newtheorem{theorem}{Theorem}
\newtheorem{lemma}{Lemma}
\newtheorem{example}{Example}
\newtheorem{definition}{Definition}
\newtheorem{obs}{Observation}
\newtheorem{prop}{Proposition}
\newcommand{\fix}[2]{\mathit{fix_{#1}(#2)}}
\newcommand{\heads}[1]{\mathcal{H}(#1)}
\def\naf{\ensuremath{\raise.17ex\hbox{\ensuremath{\scriptstyle\mathtt{\sim}}}}\xspace}
\newcommand{\updated}[1]{{\color{black} #1}}
\definecolor{darkgreen}{RGB}{26, 72, 21}

  \title[\TWOASPQ with weak constraints]
        {2-\ASPQ programs with weak constraints: Complexity and efficient implementation\thanks{This work has been partially funded by the Italian Ministry of Industrial Development (MISE) under project EI-TWIN n. F/310168/05/X56 CUP B29J24000680005 and also partially by projects  SIGENERA (CUP J29I24001770005) and MOZART (J49I24001740005) selected within the framework of the PR FESR – FSE Calabria 2021/2027 and implemented with the support of the Italian State and the Calabria Region.}}

    \author[A. Cuteri, G. Mazzotta, F. Ricca]
         {ANDREA CUTERI \quad GIUSEPPE MAZZOTTA \quad FRANCESCO RICCA\\
         University of Calabria, Rende, Italy\\
         \email{\{andrea.cuteri,giuseppe.mazzotta,francesco.ricca\}@unical.it}}

\jdate{}
\pubyear{}
\pagerange{}
\doi{}

\begin{document}

\maketitle

    \begin{abstract}
        \ASPQ extends Answer Set Programming (ASP) with Quantifiers over answer sets. In this paper we focus on the class of \ASPQ programs with two quantifiers and weak constraints, denoted as \TWOASPQW. 
        \TWOASPQW is a practically relevant fragment of \ASPQ that is expressive enough to capture optimization problems up to the class $\Delta^P_3$.
        On the theoretical side, we provide a complete complexity characterization of the main computational tasks for \TWOASPQW programs, including tight completeness results and the analysis of nontrivial cases that have not been addressed in previous works.
        On the practical side, we introduce novel strategies for computing (optimal) quantified answer sets in the \textsc{casper} system, that rely on a Counterexample-Guided Abstraction Refinement (CEGAR) technique tailored to \ASPQ.
        An experimental evaluation on hard benchmarks from different application domains shows that the proposed techniques are effective in practice.
    \end{abstract}

    \begin{keywords}
        ASP, Quantified ASP, Combinatorial Optimization Problems
    \end{keywords}

\section{Introduction}
Answer Set Programming~\cite{DBLP:journals/cacm/BrewkaET11} (ASP) is a well-established declarative formalism that has been widely adopted for modeling and solving hard combinatorial problems. Over the years, ASP has been successfully applied in a variety of real-world domains, including planning~\cite{DBLP:journals/tplp/SonPBS23}, scheduling~\cite{DBLP:conf/aiia/DodaroGMP18,DBLP:conf/ruleml/CardelliniNDGGM21}, business process management~\cite{DBLP:conf/padl/ChiarielloFIR24,DBLP:conf/lpnmr/FiondaIR24}, among many others~\cite{DBLP:journals/tods/EiterFGL08}.
A substantial body of research has focused on extending ASP modeling and solving capabilities, leading to the proposal of several language extensions~\cite{DBLP:journals/tplp/AmendolaRT19,DBLP:journals/tplp/BogaertsJT16,DBLP:journals/tplp/FandinnoLRSS21}. 
Among these, Answer Set Programming with Quantifiers~\cite{DBLP:journals/tplp/AmendolaRT19} (\ASPQ) introduces the possibility to quantify over answer sets~\cite{DBLP:journals/ngc/GelfondL91}. 
This extension enhances the expressive power of ASP, allowing for the natural modeling of problems spanning the entire Polynomial Hierarchy (PH).
\ASPQ has found interesting applications in several contexts, including outlier detection~\cite{DBLP:conf/padl/BellusciMR22}, planning~\cite{DBLP:conf/padl/FaberMC22} and related domains~\cite{DBLP:conf/iclp/000124}, establishing it as a promising framework for solving problems beyond NP.
More recently, \ASPQ has been extended with \textit{weak constraints}, denoted by \ASPQW~\cite{DBLP:journals/tplp/MazzottaRT24}, further enhancing its ability to naturally represent hard optimization problems.
Weak constraints in \ASPQW can be applied both locally to enable quantification over optimal answer sets, and globally to express preferences among solutions, thus capturing optimization problems throughout the PH.
While weak constraints improve the modeling capabilities of \ASPQ, they introduce additional sources of computational complexity, making the design of effective solving techniques challenging~\cite{DBLP:conf/padl/AzzoliniLMR25}.

Within this context, we focus on \ASPQ programs with two quantifiers and weak constraints, denoted as \TWOASPQW, which are expressive enough to model complex optimization problems up to $\Delta^P_3$. Indeed, the expressive power of \TWOASPQW is not yet fully characterized, as existing completeness results are available only for specific language fragments~\cite{DBLP:journals/tplp/MazzottaRT24}.
As a consequence, the development of efficient and complexity-aware implementations for the entire \TWOASPQW has so far been left unexplored.
To fill this gap, this paper undertakes a complexity analysis of the main reasoning tasks for \TWOASPQW programs, namely coherence and brave reasoning, and derives tight completeness results for the general case.

Moreover, the paper introduces efficient solving techniques for \TWOASPQW.
In particular, we propose an efficient evaluation technique based in Counterexample-Guided Abstraction Refinement (CEGAR)~\cite{DBLP:journals/ijfcs/ClarkeFHKOST03,DBLP:journals/ai/JanotaKMC16}, which serves as the foundation for two algorithms computing optimal quantified answer sets. 
Although inspired by the lower and upper-bound improving strategies used in ASP solvers~\cite{DBLP:journals/logcom/AlvianoD0R20}, our lower-bound improving approach departs from existing methods by realizing lower-bound improvement via abstraction refinement instead of unsatisfiable-core extraction, thereby yielding a distinctive and novel technique.

Finally, the proposed techniques have been implemented on top of the \textsc{casper} system~\cite{Cuteri_Mazzotta_Ricca_2026}. Experimental results demonstrate the effectiveness of our approach across several benchmarks drawn from diverse application domains, confirming both its practical viability and its alignment with the theoretical complexity results.

\section{Background}
In this section, we provide some preliminaries and notation on ASP and \ASPQ.

\subsection{Answer Set Programming}

\paragraph{Syntax}
A term is a constant (i.e., an integer or a string starting with lowercase letter) or a variable (i.e., a string starting with uppercase letter).
An \textit{atom} is an expression of the form $p(\vec{t})$ with $\vec{t}=t_1,\ldots,t_n$ being a list of terms and $p$ being a \textit{predicate} of arity $n\ge0$. 
An atom $p(\vec{t})$ is \textit{ground} if all terms in $\vec{t}$ are constants. 
A \textit{literal} is either an atom $a$ or its negation $\naf a$.
The \textit{complement} of a literal $l = a$ (resp. $l=\naf a$) is $\overline{l} = \naf a$ (resp. $\overline{l} = a$). 
Given a set of literals $L$, $L^+$ (resp. $L^-$) denotes the set of positive (resp. negative) literals appearing in $L$.
A \textit{rule} is an expression of the form
$h \leftarrow l_1, \ldots, l_n$
where $h$ is an atom referred to as \textit{head}, and $l_1,\ldots,l_n$ with $n\ge0$ is a conjunction of literals referred to as \textit{body}. A rule with empty body is called \textit{fact}, while a rule with empty head is called \textit{hard constraint}.
A \textit{weak constraint} is an expression of the form
$\leftarrow^{\omega} l_1, \ldots, l_n\ [w@l,\vec{t}]$
where $l_1, \ldots, l_n$ is a conjunction of literals referred to as body, $w$ and $l$ are terms, and $\vec{t}= t_1, \ldots, t_m$ is a possibly empty list of terms.
Given a rule (resp. a weak constraint) $r$, $H_r$ denotes the set of atoms appearing in the head of $r$ while $B_r$ denotes the set of literals appearing in the body of $r$.
A rule $r$ (resp. a weak constraint) is \textit{safe} if each variable appears in at least one literal in $B_r^+$.
A \textit{program} $P$ is a set of safe rules and weak constraints.
Given a program $P$, $\mathcal{R}(P)$ and $\mathcal{W}(P)$ denote, respectively, the sets of rules and weak constraints appearing in $P$; while $\heads{P}$ denotes the set of atoms appearing in the head of some rules in $P$.
Given an expression $\epsilon$ (atom, program, etc.), $at(\epsilon)$ denotes the set of atoms appearing in $\epsilon$.

\paragraph{Semantics}
Given an ASP program $P$, the \textit{Herbrand Universe}, $HU_P$, of $P$ is the set of all constants appearing in $P$; the \textit{Herbrand Base}, $B_P$, is the set of all possible ground atoms that can be obtained from predicates and constants in $P$; $ground(P)$ denotes the set of all ground rules that can be obtained from $P$ by proper substitutions of variables in $P$ with constants in $HU_P$.
An \textit{interpretation} $I \subseteq B_P$ is a set of atoms. A ground literal $l=a$ (resp. $l=\naf a$) is true w.r.t. $I$ if $a \in I$ (resp. $a \notin I$), false otherwise.
A conjunction of literals $conj$ is true w.r.t. $I$ if all the literals in $conj$ are true w.r.t. $I$, false otherwise.
An interpretation $I$ is an \textit{answer set} of $P$ iff ($i$) $I$ is a model of $P$, namely for each rule $r \in ground(P)$ either $H_r$ is true w.r.t. $I$ or $B_r$ is false w.r.t. $I$; and ($ii$) $I$ is a minimal model of its GL-reduct~\cite{DBLP:journals/ngc/GelfondL91}.
Let $AS(P)$ be the set of answer set of a program $P$, then $P$ is \textit{coherent} iff $AS(P)\neq \emptyset$. 
For a program $P$ and an interpretation $I$, let the set of weak constraint violations be 
$ws(P,I) = \{(w,l,\vec{t}) \mid \ \leftarrow^{\omega} b_1,\ldots,b_m\ [w@l,\vec{t}] \in ground(P)$ and $b_1,\ldots,b_m$ are true w.r.t. $I\}$, then the cost function of $P$ is defined as
$\mathcal{C}(P,I,k) = \sum_{(w,k,\vec{t}) \in ws(P,I)} w$ for every integer $k$.
Let $M_1,M_2 \in AS(P)$ then $M_1$ is \textit{dominated} by $M_2$ if there exists an integer $l$ such that $\mathcal{C}(P,M_1,l)>\mathcal{C}(P,M_2,l)$ and for each $l'>l$, $\mathcal{C}(P,M_1,l') = \mathcal{C}(P,M_2,l')$.
Let $M \in AS(P)$ then $M$ is an \textit{optimal answer set} iff $M$ is not dominated by any $M' \in AS(P)$. We denote by $OptAS(P)$ the set of optimal answer set of $P$.

\subsection{ASP with Two Quantifiers}
A \TWOASPQW program~\cite{DBLP:journals/tplp/MazzottaRT24} is an expression of the form $\Box_1 P_1 \Box_2 P_2: C : C^\omega$, where $\Box_1,\Box_2$ are quantifiers in $\{\exists^{st}, \forall^{st}\}$, $P_1,P_2$ are ASP programs possibly with weak constraints, $C$ is a stratified program~\cite{DBLP:books/sp/CeriGT90} with hard constraints, and $C^\omega$ is a set of weak constraints such that $B_{C^{\omega}}\subseteq B_{P_1}$.  
Weak constraints appearing $P_1$ and $P_2$ are said \textit{local}; whereas those in $C^{\omega}$ are said \textit{global}.
A \TWOASPQW program $\Pi$ is said to be \textit{existential} if $\Box_1 = \exists^{st}$, otherwise it is \textit{universal}.
Moreover, $\Pi$ is said to be \textit{alternating} if $\Box_1\neq \Box_2$, \updated{and} \textit{plain} if it contains no weak constraints.
We now define the semantics of \TWOASPQW programs.
Let $P$ be an ASP program and $M \subseteq B_P$, then $\fix{P}{M}$ denotes the set of facts and hard constraints of the form $\{a\leftarrow\mid a \in M\} \cup \{\leftarrow a\mid a \in B_P\setminus M\}$.
Then, $M \in AS(P)$ \textit{satisfies} a program $P'$ if $P'\cup\fix{P}{M}$ is coherent.

\updated{At this point the coherence of \TWOASPQW can be defined as follows:}
\begin{itemize}
    \item $\exists^{st} P: C :C^{\omega}$ is coherent iff there exists $M \in OptAS(P)$ such that $M$ satisfies $C$.
    \item $\forall^{st} P: C :C^{\omega}$ is coherent iff for each $M \in OptAS(P)$, $M$ satisfies $C$.
    \item $\exists^{st} P_1 \Box_2 P_2: C :C^{\omega}$ is coherent iff there exists $M_1 \in OptAS(P_1)$ such that $\Box_2 P_2 \cup\fix{P_1}{M_1}: C :C^{\omega}$ is coherent.
    \item $\forall^{st} P_1 \Box_2 P_2: C :C^{\omega}$ is coherent iff for each $M_1 \in OptAS(P_1)$, $\Box_2 P_2 \cup\fix{P_1}{M_1}: C :C^{\omega}$ is coherent.
\end{itemize}
For an existential \TWOASPQW program $\Pi$, an optimal answer set $M_1 \in OptAS(P_1)$ is a \textit{quantified answer set} of $\Pi$ if $\Box_2 P_2\cup\fix{P_1}{M_1}:C:C^{\omega}$ is coherent. 
We denote by $QAS(\Pi)$ the set of quantified answer sets of $\Pi$.
Let $\Pi$ be an existential \TWOASPQW program, $l$ be an integer, and $M \in QAS(\Pi)$, then the cost of $M$ at level $l$ is defined as $\mathcal{C}(M,\Pi,l) = \mathcal{C}(M,P_1\cup C^{\omega},l)$.
Let $M_1,M_2 \in QAS(\Pi)$, then $M_1$ is \textit{dominated} by $M_2$ if there exists $l$ such that $\mathcal{C}(M_1,\Pi,l) > \mathcal{C}(M_2,\Pi,l)$ and for each $l'>l$, $\mathcal{C}(M_1,\Pi,l') = \mathcal{C}(M_2,\Pi,l')$. 
Thus, $M$ is an \textit{optimal quantified answer set} if $M$ is not dominated by any $M'\in QAS(\Pi)$. We denote by $OptQAS(\Pi)$ the set of optimal quantified answer sets.

\updated{
\begin{example}
    Let $\Pi$ be a \TWOASPQW program of the form $\exists^{st} P_1\forall^{st}P_2:C$, where $C=\{\leftarrow nb,\ nc\}$ and
    \begin{minipage}{.49\textwidth}
        \[
            P_1 = \left\{\begin{array}{lr}
                a  \leftarrow \naf na\qquad      & b  \leftarrow \naf nb\\
                na \leftarrow \naf a\qquad       & nb \leftarrow \naf b\\
              \end{array}\right\}
        \]
    \end{minipage}
    \begin{minipage}{.49\textwidth}
        \[
            P_2 = \left\{\begin{array}{ll}
                c  \leftarrow \naf nc\qquad & \leftarrow^{\omega} a,\ \naf c\ [1@1]\\
                nc \leftarrow \naf c\qquad  & \leftarrow^{\omega} b,\ \naf nc\ [1@1]\\
              \end{array}\right\}
        \]    
    \end{minipage}\smallskip

    \noindent Let us consider $M_1 = \{na,nb\} \in AS(P_1)$. In this case, $\{na,nb,nc\} \in OptAS(P_2\cup\fix{P_1}{M_1})$ violates the constraint $\leftarrow nb,nc \in C$. Thus, $M_1\notin QAS(\Pi)$.
    Conversely, $M_1' = \{a,nb\}\in AS(P_1)$ is such that $P_2\cup\fix{P_1}{M_1'}$ admits only one optimal answer set $M_2 = \{a,nb,c\}$ which satisfies the constraint in $C$ and so $M_1\in QAS(\Pi)$.
    If we remove weak constraints from $P_2$, then $AS(P_2\cup\fix{P_1}{M_1}) = \{M_2, M_2'\}$ where $M_2' = \{a,nb,nc\}$. Here, $M_2'$ violates the constraint in $C$ and so $M_1$ is not a quantified answer set anymore.
\end{example}
}

\section{Complexity Results for \TWOASPQW}
In this section, we study the main computational tasks for \TWOASPQW: Coherence and Brave reasoning.
For \TWOASPQW programs, the coherence problem checks whether a program is coherent, while brave reasoning verifies whether an atom occurs in an optimal quantified answer set.
Unlike standard ASP, local weak constraints in \ASPQW programs introduce an additional source of computational complexity, as observed by~\citeN{DBLP:journals/tplp/MazzottaRT24}. 
Even though complexity results for these tasks have been established for specific subclasses of \ASPQW, a complete characterization is still missing. 

In particular, it was shown that verifying the coherence for \TWOASPQW programs ($i$) is in $\Sigma_3^P$ for existential programs and in $\Pi_3^P$ for universal programs, and ($ii$) is hard for $\Sigma_2^P$ and $\Pi_2^P$, respectively. 
Starting from these results, we prove that the coherence problem for \TWOASPQW is complete for the second level of the PH, namely $\Sigma_2^P$ for existential programs and $\Pi_2^P$ for universal programs
(full proofs in~\ref{sec:complexity_res}).
\begin{theorem}[Membership]\label{thm:membership}
    The coherence problem for \TWOASPQW programs of the form $\Box_1 P_1\Box_2 P_2:C$ is in: $\Sigma_2^P$ if $\Box_1 = \exists^{st}$; $\Pi_2^P$ otherwise,
    \updated{
    no matter whether $\Box_2 = \exists^{st}$ or $\Box_2 = \forall^{st}$.
    }
\end{theorem}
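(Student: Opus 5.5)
The plan is a guess-and-check argument. Every non-deterministic choice (the outer answer set, plus possibly an inner witness) is guessed in NP, and everything left over reduces to a polynomial number of coNP checks. This places the existential case in $\mathrm{NP}^{\mathrm{coNP}}=\Sigma_2^P$, and the universal case follows by complementation. We work over $ground(P_1)$ and $ground(P_2)$, as is standard for propositional complexity bounds.

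First I will record three basic facts.
\begin{itemize}
\item Since rules have a single atom in the head (normal programs), checking $M\in AS(P)$ is polynomial.
\item For a stratified $C$ with hard constraints, checking whether $M$ satisfies $C$ is polynomial: $C\cup\fix{P}{M}$ has at most one answer set, computable by stratified evaluation.
\item The cost vector $(\mathcal{C}(P,M,l))_l$ is computable in polynomial time, and two vectors are compared lexicographically by level in polynomial time.
\end{itemize}
The key observation is that all optimal answer sets of a program share the same cost vector. Hence $M\in OptAS(P)$ iff $M\in AS(P)$ and no $M'\in AS(P)$ has a lexicographically smaller cost. This is a coNP check. Moreover, once some $M^*\in OptAS(P)$ is fixed, $OptAS(P)$ is exactly the set of answer sets whose cost vector equals that of $M^*$.

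\emph{Existential case, $\Box_2=\exists^{st}$.} Guess $M_1$ and $M_2$. Then check in polynomial time that $M_1\in AS(P_1)$ and $M_2\in AS(P_2\cup\fix{P_1}{M_1})$, and that $M_2$ satisfies $C$. Finally, make two coNP checks: $M_1\in OptAS(P_1)$ and $M_2\in OptAS(P_2\cup\fix{P_1}{M_1})$.

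\emph{Existential case, $\Box_2=\forall^{st}$.} This is the step I expect to be the main obstacle. The naive complement, ``some optimal $M_2$ violates $C$'', itself needs an optimality test, which would push the problem up a level. To avoid this, guess $M_1$ and then one of two branches.
\begin{itemize}
\item Guess a witness $M_2^*$, verify with a coNP check that $M_2^*\in OptAS(P_2\cup\fix{P_1}{M_1})$, and verify with a second coNP check that every answer set of $P_2\cup\fix{P_1}{M_1}$ with the same cost vector as $M_2^*$ satisfies $C$.
\item Alternatively, verify in coNP that $P_2\cup\fix{P_1}{M_1}$ is incoherent, in which case the universal condition holds vacuously.
\end{itemize}
Combined with the coNP check that $M_1\in OptAS(P_1)$, this is again an NP machine with a coNP oracle, so the problem is in $\Sigma_2^P$.

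\emph{Universal case.} The program $\forall^{st}P_1\Box_2P_2:C$ is incoherent iff some $M_1\in OptAS(P_1)$ makes the inner program $\Box_2 P_2\cup\fix{P_1}{M_1}:C$ incoherent. I will show this complement is in $\Sigma_2^P$ by the same technique.
\begin{itemize}
\item If $\Box_2=\forall^{st}$, inner incoherence means some optimal $M_2$ violates $C$. Guess $M_2$, check that it violates $C$ in polynomial time, and check that it is optimal in coNP.
\item If $\Box_2=\exists^{st}$, inner incoherence means no optimal $M_2$ satisfies $C$. Either the inner program is incoherent (a coNP check), or we guess $M_2^*$, check in coNP that it is optimal, and check in coNP that every answer set with the same cost vector violates $C$.
\end{itemize}
Together with guessing $M_1$ and checking its optimality in coNP, the complement lies in $\Sigma_2^P$. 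Hence coherence of universal programs is in $\Pi_2^P$, which completes the plan for Theorem~\ref{thm:membership}.
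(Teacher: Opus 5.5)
Your proof is correct, but it follows a different route from the paper's. The paper argues by reduction. For $\Box_1=\Box_2$ it applies the translation of Mazzotta et al.\ into an equivalent plain alternating \TWOASPQ program. For $\Box_1\neq\Box_2$ it proceeds in three steps. First, it eliminates the local weak constraints of $P_2$ by adding $sat(clone(P_2))$ to the first subprogram and replacing $C$ by a check program $C'$; this $C'$ computes costs, runs a domination test deriving $dom_{clone}$, and contains a copy of $C$ gated by $equalCost$. Second, it eliminates the weak constraints of $P_1$ with the existing $col_5$/$col_6$ transformations. Third, it invokes the membership results of Amendola et al.\ for plain \TWOASPQ. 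Your direct $\mathrm{NP}^{\mathrm{coNP}}$ argument makes explicit the observation that drives the paper's alternating-case translation, and the components match one to one. The clone answer set chosen at the first level is your witness $M_2^*$. The constraint $\leftarrow dom_{clone}$, checked against all answer sets of $P_2$, is your coNP optimality test. Gating $C$ by $equalCost$ is your test over all answer sets with the same cost vector. Choosing $sat$ is your branch for an incoherent $P_2\cup\fix{P_1}{M_1}$. Your route is shorter, self-contained, and uniform over all four quantifier combinations. It also does not depend on the correctness of several chained program transformations, including ones imported from prior work. The paper's route instead yields a constructive polynomial rewriting into plain \TWOASPQ, which is useful beyond the complexity bound; the implementation applies the cited rewriting to non-alternating programs. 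Both arguments ultimately rest on answer-set checking being polynomial for the ground, aggregate-free normal programs of Section~2, and you rightly state this assumption up front.
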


\begin{proof}[Proof (Sketch)]
    To prove our thesis we need to distinguish three different scenarios.
    
    Uniform quantifiers $\Box_1 = \Box_2$. By applying the transformation (Algorithm 1) by \citeN{DBLP:journals/tplp/MazzottaRT24}, it is possible to obtain a plain 2-\ASPQ program $\Pi'$ such that $\Pi'$ is coherent if and only if $\Pi$ is coherent. 
    From the result of~\citeN{DBLP:journals/tplp/AmendolaRT19}, verifying the coherence of $\Pi'$ is $\Sigma_2^P$-complete if $\Box_1 = \exists^{st}$; otherwise is in $\Pi_2^P$-complete. 
    Thus the thesis holds for uniform quantifiers.

    Case $\Box_1 = \exists^{st}$ and $\Box_2 = \forall^{st}$. 
    In this case, it is possible to translate $\Pi$ into an existential \TWOASPQW program $\Pi'$ where weak constraints appear only in the first subprogram. Since verifying the coherence of $\Pi'$ is $\Sigma_2^P$-complete \updated{(Corollary 3.1 by~\citeN{DBLP:journals/tplp/MazzottaRT24})}, then the thesis holds also in this case.
    
	More precisely, the \TWOASPQW program $\Pi'$ can be obtained by copying the rules of the program $P_2$ in $P_1$ and then adding rules in the program $C$ for verifying the optimality of an answer sets of $P_2$.
	As a result, we obtain $\Pi'$ of the form $\exists P_1' \forall P_2': C'$ where:
	\begin{itemize}
		\item optimal answer sets $M$ of $P_1'$ are of the form $M_1 \cup M_2^c$ where $M_1$ is an optimal answer set of $P_1$ and $M$ corresponds to an answer set of $P_2\cup\fix{P_1}{M_1}$;
		\item answer sets $N$ of $P_2'\cup\fix{P_1'}{M}$ are of the form $M_1 \cup M_2^c \cup M_2$ where $M_1\cup M_2$ is an answer set of $P_2\cup\fix{P_1}{M_1}$;
		\item an answer set $N$ of $P_2'\cup\fix{P_1'}{M}$ satisfies $C'$ if and only if $M_1 \cup M_2^c$ is not dominated by $M_1 \cup M_2$ and one of the following holds:
		\begin{enumerate}
			\item there exists $l$ such that the cost of $M_1\cup M_2$ is different from the cost of $M_1\cup M_2^c$;\label{cond1:sat_c_member}
			\item the cost of $M_1\cup M_2$ is equal to the cost of $M_1\cup M_2^c$ for each level and $M_1 \cup M_2$ satisfies $C$.\label{cond2:sat_c_member}
		\end{enumerate}
	\end{itemize}
	Let $M = M_1 \cup M_2^c$ be an optimal answer set of $P_1'$.
	If $M$ is not an optimal answer set of $P_2\cup \fix{P_1}{M_1}$, then $M$ cannot be a quantified answer set as there exists $N = M_1 \cup M_2^c \cup M_2$ in the answer sets of $P_2'\cup\fix{P}{M}$, where $M_1 \cup M_2$ is an optimal answer set of $P_2\cup\fix{P_1}{M_1}$ and $N$ violates $C'$ since $M_1\cup M_2^c$ is dominated by $M_1 \cup M_2$.
	On the other hand, since $M$ is an optimal answer set of $P_2\cup\fix{P_1}{M_1}$, then for each $N = M_1 \cup M_2^c \cup M_2$ in the answer sets of $P_2'\cup\fix{P_1'}{M}$, $M_1\cup M_2^c$ is not dominated by $M_1\cup M_2$ (i.e., Condition~\ref{cond1:sat_c_member} is always satisfied). 
    Thus, let $N = M_1\cup M_2^c \cup M_2$ be an answer set of $P_2\cup \fix{P_1'}{M}$, then $N$ satisfies $C'$ if and only if one between conditions~\ref{cond1:sat_c_member} and~\ref{cond2:sat_c_member} holds. 
    Here we can observe that  Condition~\ref{cond1:sat_c_member} holds if and only if $M_1\cup M_2$ is not an optimal answer set of $P_2\cup\fix{P_1}{M_1}$; whereas Condition~\ref{cond2:sat_c_member} holds if and only if $M_1\cup M_2$ is an optimal answer set of $P_2\cup\fix{P_1}{M_1}$ and $M_1\cup M_2$ satisfies $C$.
	Thus, $M$ is a quantified answer set of $\Pi'$ if and only if each optimal answer set $M_1\cup M_2$ of $P_2\cup\fix{P_1}{M_1}$ satisfies $C$. Hence, $M$ is a quantified answer set of $\Pi'$ if and only if $M_1$ is a quantified answer set of $\Pi$. 
    Finally, we can conclude that $\Pi$ is coherent iff $\Pi'$ is coherent. 
    
    Case $\Box_1 = \forall^{st}$ and $\Box_2 = \exists^{st}$. 
    By following the same working principle as before, it is possible to encode $\Pi$ into a \TWOASPQW program $\Pi'$ which preserves the coherence of $\Pi$ and in which weak constraints appear only in the first subprogram. In this case, verifying the coherence of $\Pi'$ is $\Pi_2^P$-complete \updated{(Corollary 3.1 by~\citeN{DBLP:journals/tplp/MazzottaRT24})}, then the thesis holds also in this last case.
    In this case, the \TWOASPQW program $\Pi'$ is of the form $\forall P_1' \exists P_2': C'$ where:
    \begin{itemize}
    	\item optimal answer sets $M$ of $P_1'$ are of the form $M_1 \cup M_2^c$ where $M_1$ is an answer set of $P_1$ and $M$ is an answer set of $P_2\cup\fix{P_1}{M_1}$;
    	\item answer sets $N$ of $P_2'\cup\fix{P_1'}{M}$ are of the form $M_1 \cup M_2^c \cup M_2$ where $M_1\cup M_2$ is an answer set of $P_2\cup\fix{P_1}{M_1}$;
    	\item an answer set $N$ of $P_2'\cup\fix{P_1'}{M}$ satisfies $C'$ if and only if one of the following holds:
    	\begin{enumerate}
    		\item $M_1 \cup M_2^c$ is dominated by $M_1 \cup M_2$;\label{cond1:u:sat_c_member}
    		\item the cost of $M_1\cup M_2$ is equal to the cost of $M_1\cup M_2^c$ for each level and $M_1 \cup M_2$ satisfies $C$.~\label{cond2:u:sat_c_member}
    	\end{enumerate}
    \end{itemize}
    According to the \ASPQW semantics, $\Pi'$ is incoherent if and only if there exists an optimal answer set $M_1$ of $P_1'$ such that for every answer set $N$ of $P_2'\cup\fix{P_1'}{M})$, $N$ does not satisfy $C'$.
    Let $M = M_1 \cup M_2^c$ be an optimal answer set of $P_1'$, with $M_1\in OptAS(P_1)$ and $M \in AS(P_2\cup\fix{P_1}{M_1})$.
    
    If $M_1 \cup M_2^c$ is not an optimal answer set of $P_2\cup\fix{P_1}{M_1}$, then there exists
    $M_1 \cup M_2 \in OptAS(P_2\cup\fix{P_1}{M_1})$ such that $M_1 \cup M_2^c$ is dominated by $M_1 \cup M_2$. Consequently,
    $N = M_1 \cup M_2^c \cup M_2 \in AS(P_2'\cup\fix{P}{M})$ is such that 
    Condition~\ref{cond1:u:sat_c_member} is satisfied, and thus $M$ cannot witness the incoherence of $\Pi'$.
    
    Otherwise, $M_1 \cup M_2^c$ is optimal for $P_2\cup\fix{P_1}{M_1}$. 
    In this case, for every
    $N = M_1 \cup M_2^c \cup M_2 \in AS(P_2'\cup\fix{P_1'}{M})$, $N$ satisfies $C'$ if and only if
    Condition~\ref{cond2:u:sat_c_member} holds, that is, if and only if
    $M_1 \cup M_2 $ has the same cost of $M_1\cup M_2^c $ (i.e., $M_1\cup M_2\in OptAS(P_2\cup\fix{P_1}{M_1})$) and $M_1 \cup M_2$ satisfies $C$.
    
    Therefore, $\Pi'$ is incoherent if and only if there exists $M_1 \in AS(P_1)$ such that no $M_1 \cup M_2 \in OptAS(P_2\cup\fix{P_1}{M_1})$ satisfies $C$, which is exactly the condition for $\Pi$ to be incoherent. 
    Hence, $\Pi'$ preserves the coherence of $\Pi$. 
\end{proof}

Note that, the presence of weak constraints in both quantified programs do not allow for a simple quantifier elimination, and the existing translation proposed by~\citeN{DBLP:journals/tplp/MazzottaRT24} requires the introduction of an additional quantifier, resulting in a non necessary jump in complexity to the third level of the PH in case $\Box_1 \neq \Box_2$. 

\begin{theorem}[Hardness]\label{thm:hardness}
    The coherence problem for \TWOASPQW programs of the form $\Box_1 P_1\Box_2 P_2:C$ is hard for: 
    $\Sigma_2^P$ if $\Box_1 = \exists^{st}$; $\Pi_2^P$ otherwise,
    \updated{
    no matter whether $\Box_2 = \exists^{st}$ or $\Box_2 = \forall^{st}$.
    }
\end{theorem}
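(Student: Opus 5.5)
Hardness follows because plain \TWOASPQ programs are a special case of \TWOASPQW programs of the form $\Box_1 P_1\Box_2 P_2:C$: take $P_1,P_2$ without weak constraints and $C^\omega=\emptyset$. Every answer set is then optimal, so $OptAS(P_i)=AS(P_i)$ and the \TWOASPQW semantics reduces to the plain 2-\ASPQ semantics of~\citeN{DBLP:journals/tplp/AmendolaRT19}. The plan is to split into the four combinations of $(\Box_1,\Box_2)$ and, in each, exhibit a polynomial reduction from a canonical complete problem into plain programs with the right quantifier pattern.

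\textbf{Alternating case.} For $\exists^{st}P_1\forall^{st}P_2:C$, I reduce from $\exists X\forall Y\,\phi$ with $\phi$ in DNF, which is $\Sigma_2^P$-complete. Let $P_1$ guess a truth assignment of $X$ through pairs $x\leftarrow\naf nx$, $nx\leftarrow\naf x$. Let $P_2$ guess $Y$ in the same way. Let $C$ be the stratified program that derives $sat$ from each term of $\phi$ and contains the constraint $\leftarrow \naf sat$. Then $\Pi$ is coherent iff some assignment of $X$ is such that every assignment of $Y$ makes $\phi$ true. Dually, for $\forall^{st}P_1\exists^{st}P_2:C$, I reduce from $\forall X\exists Y\,\phi$ with $\phi$ in CNF, using one constraint per clause, which gives $\Pi_2^P$-hardness. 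These are exactly the hardness results for plain 2-\ASPQ from~\citeN{DBLP:journals/tplp/AmendolaRT19}, which I would simply cite.

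\textbf{Uniform case.} This is the main point requiring care, since plain $\exists\exists$ programs are only NP-hard. Here I would use the local weak constraints. For $\exists^{st}P_1\exists^{st}P_2:C$, the idea is to let $P_1$ guess $X$ and let $P_2$ guess $Y$ together with a minimization. I take $\phi$ in CNF and encode $\forall Y\phi$ as "no $Y$ falsifies $\phi$". To do so, $P_2$ derives $unsat$ when some clause is falsified, and $P_2$ contains the weak constraint $\leftarrow^{\omega}\naf unsat\ [1@1]$. An optimal answer set of $P_2\cup\fix{P_1}{M_1}$ therefore contains $unsat$ iff some $Y$ falsifies $\phi$. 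With $C=\{\leftarrow unsat\}$, the program is coherent iff there is an $X$ such that no $Y$ falsifies $\phi$, i.e., $\exists X\forall Y\phi$ holds; hence the problem is $\Sigma_2^P$-hard. Symmetrically, for $\forall^{st}P_1\forall^{st}P_2:C$, I reduce from $\forall X\exists Y\phi$. In the optimal answer sets of $P_2$, $P_2$ prefers assignments of $Y$ that satisfy $\phi$ (weak constraint $\leftarrow^{\omega} unsat\ [1@1]$), and $C=\{\leftarrow unsat\}$; this yields $\Pi_2^P$-hardness. These hardness results for the uniform case are also stated by~\citeN{DBLP:journals/tplp/MazzottaRT24}, and I would cite them alongside the construction.

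\textbf{Checks.} Each reduction is polynomial. The obstacle I expect is the correctness of the weak-constraint argument. In $P_2$ with $\fix{P_1}{M_1}$, the optimal answer sets are exactly the $Y$-assignments of minimal cost. Consequently, every optimal answer set satisfies $C$ (for $\forall$), or some optimal answer set does (for $\exists$), iff the desired quantified condition on $Y$ holds. The remaining task is to verify that each case of this equivalence goes through.
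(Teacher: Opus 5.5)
\textbf{Gap in the $\exists^{st}\exists^{st}$ case.} Your alternating case matches the paper: plain \TWOASPQ is a special case, and you cite Amendola et al. Your $\forall^{st}\forall^{st}$ construction is also essentially the paper's. It reduces from $\forall X\exists Y\phi$ with $\phi$ in 3-CNF; $P_1$ guesses $X$, $P_2$ guesses $Y$ and derives $unsat$ on a falsified clause, $P_2$ contains $\leftarrow^{\omega} unsat$, and $C=\{\leftarrow unsat\}$.

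The problem is the $\exists^{st}\exists^{st}$ case. The paper simply cites Theorem 4 of \citeN{DBLP:journals/tplp/MazzottaRT24} there. You instead reduce from $\exists X\forall Y\phi$ with $\phi$ in CNF, and that source problem is not $\Sigma_2^P$-hard. Since $\forall Y$ distributes over $\wedge$, $\forall Y\phi$ is equivalent to $\bigwedge_i \forall Y c_i$. A single clause $c_i$ holds for every $Y$ iff one of its $X$-literals is true or its $Y$-part contains a complementary pair $y,\neg y$. Hence $\exists X\forall Y\phi$ with a CNF matrix collapses to a SAT instance over $X$ and lies in NP. Your equivalence ``$\Pi$ coherent iff $\exists X\forall Y\phi$'' is correct, but as written it only proves NP-hardness. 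In your own alternating case you correctly used DNF for the $\exists\forall$ pattern; the uniform case needs the same choice.

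\textbf{Repair.} Take $\phi$ in 3-DNF. Let $P_2$ derive $sat$ from each satisfied term and add $unsat\leftarrow\naf sat$, keeping $\leftarrow^{\omega}\naf unsat\ [1@1]$ and $C=\{\leftarrow unsat\}$.
\begin{itemize}
\item If some $Y$ falsifies $\phi$, the answer sets of $P_2\cup\fix{P_1}{M_1}$ containing $unsat$ have cost $0$ and all others cost $1$. So every optimal answer set violates $C$.
\item Otherwise every answer set lacks $unsat$, has cost $1$, is optimal, and satisfies $C$. An answer set always exists because the $Y$-guess is unconstrained.
\end{itemize}
Thus $\Pi$ is coherent iff $\exists X\forall Y\phi$, which is $\Sigma_2^P$-complete for DNF matrices.

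With this fix your proof becomes a self-contained version of the paper's, replacing its citation for the $\exists^{st}\exists^{st}$ case by an explicit reduction. Alternatively, just cite that result as the paper does.
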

\begin{proof}[Proof (Sketch)]
    Let us consider the different combination of quantifiers separately.
    
    \noindent (Case $\Box_1 = \Box_2 = \exists^{st}$). \updated{From Thm 4 by~\citeN{DBLP:journals/tplp/MazzottaRT24}, deciding the coherence of $\Pi$} is $\Sigma_2^P$-complete if $\mathcal{W}(P_1) = \emptyset$. Thus, the thesis holds since this is a particular case.
    
    \noindent (Case $\Box_1 = \Box_2 = \forall^{st}$).
    Let $\Phi = \forall X\exists Y \phi$ be a $2$-QBF, where $X$ and $Y$ are disjoint set of variables and $\phi$ is a boolean formula in 3-CNF. 
    Verifying that $\Phi$ is true is a $\Pi_2^P$-complete problem~\cite{schaefer2002completeness}. 
    We encode in polynomial time any $\Phi$ in a \TWOASPQW program of the form $\forall^{st} P_1 \forall^{st} P_2:C$, as detailed in~\ref{sec:complexity_res}.
    
    \noindent (Case $\Box_1 \neq \Box_2$). 
    Deciding the coherence of plain alternating \TWOASPQ programs $\Sigma_2^P$-complete if $\Box_1 = \exists^{st}$; otherwise it is $\Pi_2^P$-complete~\cite{DBLP:journals/tplp/AmendolaRT19}. Since this is a particular case of \TWOASPQW (i.e. $\mathcal{W}(P_1) = \mathcal{W}(P_2) = \emptyset$), then the thesis follows. 
\end{proof}
\begin{theorem}[Completeness]\label{thm:complete_two_aspq}
    The coherence problem for \TWOASPQW programs of the form $\Box_1 P_1\Box_2 P_2:C$ is: $\Sigma_2^P$-complete if $\Box_1 = \exists^{st}$; $\Pi_2^P$-complete otherwise,
    \updated{no matter whether $\Box_2 = \exists^{st}$ or $\Box_2 = \forall^{st}$.}
\end{theorem}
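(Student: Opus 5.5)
The completeness claim follows by combining the two preceding results, so the plan is a direct conjunction of Theorem~\ref{thm:membership} and Theorem~\ref{thm:hardness}. I fix an arbitrary \TWOASPQW program $\Pi = \Box_1 P_1\Box_2 P_2:C$ (with or without global weak constraints; they do not affect coherence, since coherence is defined without reference to $C^\omega$). I then split on $\Box_1$, and inside each branch on $\Box_2$, giving the four quantifier patterns $\exists\exists$, $\exists\forall$, $\forall\exists$, $\forall\forall$.

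For $\Box_1=\exists^{st}$, Theorem~\ref{thm:membership} places coherence in $\Sigma_2^P$ for both choices of $\Box_2$. It does so via the rewriting into a plain program for uniform quantifiers, or into a program with weak constraints only in the first subprogram for the alternating case, together with Corollary 3.1 of \citeN{DBLP:journals/tplp/MazzottaRT24}. Theorem~\ref{thm:hardness} gives $\Sigma_2^P$-hardness for both choices of $\Box_2$. In the $\exists\exists$ case the hardness comes from the weak-constraint-free-$P_1$ fragment. In the $\exists\forall$ case it comes from plain alternating 2-\ASPQ \cite{DBLP:journals/tplp/AmendolaRT19}. Hence the problem is $\Sigma_2^P$-complete. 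The case $\Box_1=\forall^{st}$ is symmetric. Membership in $\Pi_2^P$ again comes from Theorem~\ref{thm:membership}. Hardness comes from Theorem~\ref{thm:hardness}: the $\forall\exists$ case via plain alternating programs, and the $\forall\forall$ case via the encoding of $\forall X\exists Y\phi$.

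The one point I would check explicitly is that the hardness results are witnessed by programs inside the class being classified. The plain and restricted-weak-constraint programs used in the hardness proofs are themselves syntactically \TWOASPQW programs of the stated form $\Box_1P_1\Box_2P_2:C$. The reductions are polynomial: the QBF encoding in the appendix, and the identity embedding for the special cases. Membership applies uniformly to the whole class. So upper and lower bounds match for the same class of programs.

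The genuine difficulty is already absorbed in the earlier theorems. On the membership side, it is the alternating case, where weak constraints in $P_2$ must be eliminated without adding a quantifier. On the hardness side, it is the $\forall\forall$ encoding. Taking those theorems as given, the completeness statement is immediate.
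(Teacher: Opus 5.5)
Your proposal is correct and takes the same approach as the paper: the paper proves completeness simply by combining Theorem~\ref{thm:membership} and Theorem~\ref{thm:hardness}, and you combine them the same way for each quantifier pattern. Your explicit check that the hardness instances are themselves \TWOASPQW programs of the stated form is left implicit in the paper, and it is a sound addition.
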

\begin{proof}
    The thesis follows from Theorem~\ref{thm:membership} and~\ref{thm:hardness}.
\end{proof}

From Theorem~\ref{thm:complete_two_aspq}, we derive complete results for the brave reasoning task in \TWOASPQW, which asks whether an atom $a$ is true in some optimal quantified answer set.
\begin{theorem}
    Let $\Pi$ be an existential \TWOASPQW program of the form $\Box_1 P_1\Box_2 P_2:C:C^{\omega}$ and $a$ be a ground atom, then verifying whether $a \in M$, with $M \in OptQAS(\Pi)$, is $\Delta_{3}^P$-complete \updated{
    no matter whether $\Box_2 = \exists^{st}$ or $\Box_2 = \forall^{st}$.
    }
\end{theorem}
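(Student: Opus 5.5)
The proof splits into membership in $\Delta_3^P$ and $\Delta_3^P$-hardness. In both directions I would lean on Theorem~\ref{thm:complete_two_aspq}: coherence of existential \TWOASPQW programs (even with local weak constraints in both $P_1$ and $P_2$) is in $\Sigma_2^P$, regardless of $\Box_2$. This makes the setting analogous to brave reasoning for ASP with weak constraints over a $\Sigma_2^P$ oracle. In plain disjunctive ASP with weak constraints, that task is $\Delta_3^P$-complete, with the $\Sigma_2^P$ oracle coming from disjunctive coherence.

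For membership, I would use the standard lexicographic cost-bounding scheme. The integer weights and levels of $P_1\cup C^{\omega}$ are given in the input. I would encode the weighted violation cost of $M$ at all levels as a single integer $c(M)$, for instance via weights scaled by level, as in the ASP literature. This integer has polynomially many bits. Next, I would define the auxiliary program $\Pi_k$. It is obtained from $\Pi$ by adding to $P_1$ a guess that fixes the violated ground weak constraints of $P_1\cup C^{\omega}$ (the bodies of $C^{\omega}$ range over $B_{P_1}$, so this is possible). I would then add, in $C$, a stratified check that the resulting cost is at most $k$. The delicate point is that $P_1$'s own local weak constraints define $OptAS(P_1)$, so the bound must not alter which answer sets of $P_1$ are optimal. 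I would therefore place the cost-bound check in the stratified program $C$ rather than in $P_1$. $C$ sees $M_1$ through $\fix{P_1}{M_1}$ and can recompute the violations of $C^{\omega}$ and of $\mathcal{W}(P_1)$ with polynomially sized counting rules, or equivalently through an aggregate-free encoding with ordered bits.

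Deciding coherence of $\Pi_k$ is then in $\Sigma_2^P$ by Theorem~\ref{thm:membership}. A binary search over $k$ with polynomially many oracle calls finds the optimal cost $c^*$. One more call on $\Pi_{c^*}$ extended with the constraint $\leftarrow \naf a$ (added in $C$) decides whether $a$ belongs to some optimal quantified answer set. This yields a $P^{\Sigma_2^P}$ procedure. The costs at levels are exactly those of $M_1$, so every quantified answer set whose cost equals $c^*$ is optimal.

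For hardness, I would reduce from the $\Delta_3^P$-complete problem of deciding whether the lexicographically maximum assignment to $X$ satisfying $\exists X\forall Y\,\phi$ (in the sense that $\forall Y \phi$ holds) sets the last variable $x_n$ to true. I would encode this as $\exists^{st}P_1\forall^{st}P_2:C:C^{\omega}$ with no local weak constraints. $P_1$ guesses $X$. $P_2$ guesses $Y$, and $C$ checks $\phi$, which is the standard Amendola et al.\ encoding. $C^{\omega}$ contains $\leftarrow^{\omega} \naf x_i\,[1@i]$, with levels ordered so that $x_1$ has the highest priority. The optimal quantified answer sets then correspond exactly to the lex-maximum satisfying $X$, and brave reasoning on $x_n$ decides the problem. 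Because the theorem claims independence from $\Box_2$, I would also treat $\Box_2=\exists^{st}$ with the dual source problem $\exists X\exists Y$ over a $\Sigma_2$ inner check. I would obtain it by instead reducing from the lex-max $X$ such that $\exists Y$ with $P_2$ having a minimality check. The simpler route is to reuse the $\Sigma_2^P$-hardness of $\exists\exists$ coherence under local weak constraints and apply the same lex-max weak-constraint gadget on top of it.

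The main obstacle I expect is the membership step: making the cost bound expressible inside $C$ (a stratified program) in polynomial size, while guaranteeing that the modified program stays within the fragment covered by Theorem~\ref{thm:membership}. If this encoding turns out to be awkward, the fallback is to argue membership directly. I would guess $M_1$, verify $M_1\in OptAS(P_1)$ with a coNP check, verify the inner condition with a $\Sigma_2^P$ or $\Pi_2^P$ check absorbed via Theorem~\ref{thm:membership}, and test the cost bound in polynomial time. This shows the bounded problem is in $\Sigma_2^P$, which suffices for the binary search.
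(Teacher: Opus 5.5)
Your plan has the same skeleton as the paper's proof: binary search on the optimal cost using the $\Sigma_2^P$ coherence oracle of Theorem~\ref{thm:complete_two_aspq}, one extra call for $a$, and hardness inherited from the lexicographic-maximum QBF problem. You also correctly identify the point the paper glosses over, namely how to impose the bound without changing $OptAS(P_1)$.

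\textbf{Membership.} The encoding you commit to fails for $\Box_2=\forall^{st}$. The program $\forall^{st} P_2\cup\fix{P_1}{M_1}:C$ is vacuously coherent whenever $P_2\cup\fix{P_1}{M_1}$ has no answer set. Any such move $M_1$ is therefore a quantified answer set of $\Pi_k$ for every $k$, and it also survives the final $\leftarrow\naf a$ added to $C$. The binary search can then report a wrong optimum.

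Your fallback does not close this as written. An NP guess of $M_1$ followed by a $\Sigma_2^P$ or $\Pi_2^P$ test only gives $\Sigma_3^P$. Theorem~\ref{thm:membership} is a statement about coherence, so it cannot absorb an extra filter on $M_1$. The missing observation is that, once $M_1$ is fixed, the inner condition is brave (resp.\ cautious) reasoning for a normal program with weak constraints. That condition is ``some (resp.\ every) optimal answer set of $P_2\cup\fix{P_1}{M_1}$ satisfies $C$''. It is in $\Delta_2^P$: compute the optimum by binary search with an NP oracle, then make one NP call. Add the coNP test $M_1\in OptAS(P_1)$ and the polynomial cost test. Then both the bounded question and the final question ``is there a quantified answer set of cost $c^*$ containing $a$'' are in $NP^{NP}=\Sigma_2^P$.

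Alternatively, you can stay inside Theorem~\ref{thm:membership}. All optimal answer sets of $P_1$ share one cost vector w.r.t.\ $\mathcal{W}(P_1)$, and this vector is computable in $P^{NP}$. Replace $\mathcal{W}(P_1)$ by hard constraints fixing that vector, and put the bound on $C^{\omega}$ (and $\leftarrow\naf a$) into $P_1$. The answer sets of the modified $P_1$ are then exactly the moves of cost at most $k$.

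\textbf{Hardness.} Your $\exists^{st}P_1\forall^{st}P_2$ reduction is correct; it is what the paper obtains by citing the weak-constraint-free case. For $\Box_2=\exists^{st}$, discard your first suggestion. With normal subprograms and no local weak constraints, a bounded-cost quantified answer set of an $\exists\exists$ program is witnessed by guessing $M_1$ and $M_2$, so brave reasoning there is only in $\Delta_2^P$. The $\forall Y$ must be simulated by a local weak constraint in $P_2$. For the same reason, the paper's one-line inheritance from $\mathcal{W}(P_1)=\mathcal{W}(P_2)=\emptyset$ only covers the alternating combination, so your separate treatment of this case is warranted.

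Your ``simpler route'' is the right one, but state it concretely:
\begin{itemize}
\item $P_1$ guesses $X$.
\item $P_2$ guesses $Y$ and derives $unsat$ when $\phi$ is false.
\item $P_2$ contains $\leftarrow^{\omega}\naf unsat\ [1@1]$, so optimal answer sets are falsifying assignments whenever one exists.
\item $C=\{\leftarrow unsat\}$.
\end{itemize}
The quantified answer sets are then exactly the assignments to $X$ with $\forall Y\phi$, and the same lex-max global weak constraints on $X$ complete the reduction.
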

\updated{These results strengthen those of~\citeN{DBLP:journals/tplp/MazzottaRT24}, as they establish completeness for all combinations of quantifiers, rather than for alternating programs only.}

\section{Solving \TWOASPQW via CEGAR}
Counterexample-Guided Abstraction Refinement (CEGAR)~\cite{DBLP:journals/ijfcs/ClarkeFHKOST03} is an iterative technique that starts from a simplified system representation \updated{(abstraction)} and progressively refines it using counterexamples, until a solution is found or non-existence is proven.
CEGAR-based techniques were successfully applied to QBF-satisfiability~\cite{DBLP:journals/ai/JanotaKMC16} and \TWOASPQ solving~\cite{Cuteri_Mazzotta_Ricca_2026}.
Building on these ideas, we propose a CEGAR-based approach for the evaluation of \TWOASPQW programs. In the following we give a game-theoretic characterization of the semantics of \TWOASPQW and then we focus on the two stages of CEGAR: \emph{abstraction refinement} and \emph{counterexample search}. 

\subsection{The \TWOASPQW Game}\label{sec:game_based_2aspqw}
Let $\Box \in \{\exists^{st},\forall^{st}\}$ be a quantifier, then the \emph{opponent} of $\Box$ is $\overline{\Box} = \forall^{st}$ if $\Box = \exists^{st}$; otherwise $\overline{\Box} = \exists^{st}$. 
Thus, for any alternating \TWOASPQW program $\Box_2$ is the opponent of $\Box_1$, that is, $\Box_2 = \overline{\Box_1}$. 
In what follows, w.l.o.g., we consider  \TWOASPQW programs of the form:
\begin{equation}\label{eq:2_aspqw}
    \Box P_1\overline{\Box} P_2:C:C^{\omega}
\end{equation}   
Let $\Pi$ be a \TWOASPQW program of the form~(\ref{eq:2_aspqw}), then a \emph{move} for $\Box$ is an answer set $M_1 \in OptAS(P_1)$. 
Given a move $M_1$ for $\Box$, for each $M_2 \in OptAS(P_2 \cup \fix{P_1}{M_1})$, $M_2|_{\heads{P_2}}$ is a \emph{candidate countermove} to $M_1$ for $\overline{\Box}$. 
Let $M_2 \in OptAS(P_2\cup\fix{P_1}{M_1})$ be a candidate countermove then $M_2|_{\heads{P_2}}$ is effectively a \emph{countermove} to $M_1$ for $\overline{\Box}$ if: ($i$) $\Box=\exists^{st}$ and $M_2$ does not satisfies $C$; or ($ii$) $\Box=\forall^{st}$ and $M_2$ satisfies $C$.
A \emph{winning move} for $\Box$ is a move $M_1$ such that no countermove to $M_1$ for $\overline{\Box}$ exists.
Thus, $\Box = \exists^{st}$ wins if there exists $M_1 \in OptAS(P_1)$ such that no countermove to $M_1$ for $\overline{\Box}$ exists.
\begin{prop}\label{prop:winning_move}
    Let $\Pi$ be a \TWOASPQW of the form~(\ref{eq:2_aspqw}), there exists a winning move for $\Box$ if and only if ($i$) $\Box=\exists^{st}$ and $\Pi$ is coherent, or ($ii$) $\Box = \forall^{st}$ and $\Pi$ is incoherent. 
\end{prop}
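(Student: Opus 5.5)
The argument is a direct unfolding of the definitions, split by the value of $\Box$. The one fact it needs is that satisfaction of $C$ depends only on $M_1$ together with $M_2|_{\heads{P_2}}$. Here $M_2 \in OptAS(P_2\cup\fix{P_1}{M_1})$. Since $\fix{P_1}{M_1}$ fixes every atom of $B_{P_1}$, such an $M_2$ agrees with $M_1$ on $B_{P_1}$. Every remaining atom of $M_2$ must be supported by a rule of $P_2$, so it lies in $\heads{P_2}$. Hence $M_2$ is determined by $M_1 \cup M_2|_{\heads{P_2}}$, and ``$M_2|_{\heads{P_2}}$ is a countermove'' simply means that $M_2$ itself satisfies, or fails, $C$. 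This lets us work with $M_2$ instead of its projection.

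\textbf{Case $\Box=\exists^{st}$, so $\overline{\Box}=\forall^{st}$.} A move $M_1\in OptAS(P_1)$ is winning iff no $M_2\in OptAS(P_2\cup\fix{P_1}{M_1})$ violates $C$. This says that every such $M_2$ satisfies $C$, i.e.\ $\forall^{st} P_2\cup\fix{P_1}{M_1}:C:C^\omega$ is coherent. By the recursive coherence definition for $\exists^{st} P_1\Box_2 P_2$, a winning move therefore exists iff some $M_1\in OptAS(P_1)$ makes the residual program coherent, which is exactly coherence of $\Pi$.

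\textbf{Case $\Box=\forall^{st}$, so $\overline{\Box}=\exists^{st}$.} A move $M_1$ is winning iff no $M_2\in OptAS(P_2\cup\fix{P_1}{M_1})$ satisfies $C$. This is precisely the statement that $\exists^{st} P_2\cup\fix{P_1}{M_1}:C:C^\omega$ is incoherent. Negating the definition, $\forall^{st}P_1\exists^{st}P_2:C:C^\omega$ is incoherent iff some $M_1\in OptAS(P_1)$ has an incoherent residual. That is exactly the existence of a winning move for $\forall^{st}$.

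The only delicate point is the projection fact above: candidate countermoves are restricted to $\heads{P_2}$, while satisfaction of $C$ is defined on the full $M_2$. I would settle it with the support/minimality argument of the first paragraph, and I expect the rest to be routine bookkeeping. The global weak constraints $C^\omega$ play no role in coherence, so they can be ignored throughout.
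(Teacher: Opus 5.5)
Your proposal is correct and follows the same route as the paper, which justifies the proposition by unfolding the definitions of winning move and of coherence separately for $\Box=\exists^{st}$ and $\Box=\forall^{st}$; your chain of equivalences also covers the converse direction explicitly, whereas the paper's informal argument only spells out that a winning move yields (in)coherence. The projection remark is sound but not strictly needed, since by definition a countermove exists exactly when some $M_2\in OptAS(P_2\cup\fix{P_1}{M_1})$ satisfies (or violates, depending on $\Box$) $C$.
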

Intuitively, the notion of winning moves closely corresponds to the definition of coherence of \TWOASPQW programs. Let $\Pi$ be a \TWOASPQW program of the form~(\ref{eq:2_aspqw}), and let $M_1$ be a winning move for $\Box$.
If $\Box = \exists^{st}$, then there is no $M_2 \in OptAS(P_2 \cup \fix{P_1}{M_1})$ such that $M_2$ violates $C$. This implies that $\forall^{st} P_2 \cup \fix{P_1}{M_1} : C$ is coherent, and hence $\Pi$ is coherent.
Conversely, if $\Box = \forall^{st}$, then there is no $M_2 \in OptAS(P_2 \cup \fix{P_1}{M_1})$ such that $M_2$ satisfies $C$. Therefore, $\exists^{st} P_2 \cup \fix{P_1}{M_1} : C$ is incoherent, and thus $\Pi$ is incoherent.

\subsection{Counterexample search in \TWOASPQW}\label{sec:cegar_for_2aspqw}
Given a \TWOASPQW program $\Pi$ of the form (\ref{eq:2_aspqw}), from Proposition~\ref{prop:winning_move}, the coherence of $\Pi$ can be decided by searching for a winning move for $\Box$. 
Thus, the program $P_1$, whose optimal answer sets coincide with the possible moves of $\Box$, is a natural abstraction for $\Pi$. 
Hence, given $M_1\in OptAS(P_1)$, the counterexample search aims at contradicting $M_1$, which means finding a countermove to $M_1$ for $\overline{\Box}$. 
To this end, we define the \emph{countermove program} whose optimal answer sets correspond to countermoves to $M_1$ for $\overline{\Box}$. 

We recall that countermoves correspond to optimal answer sets of $P_2\cup\fix{P_1}{M_1}$ that either satisfy or not the program $C$ according to $\overline{\Box}$. 
Since $C$ is stratified with hard constraints, it admits one answer set~\cite{DBLP:journals/csur/DantsinEGV01} iff hard constraints are satisfied.   
Thus, the following transformation can be used to capture the incoherence of $C$.

\begin{definition}[Complement of stratified program]\label{def:complement_transformation}
Let $P$ be a stratified ASP program with hard constraints, then the \emph{complement} of $P$, denoted by $\neg P$, is obtained from $P$ by ($i$) transforming hard constraints $\leftarrow l_1,\ldots,l_n$ into rules of the form $v \leftarrow l_1,\ldots,l_n$; and ($ii$) adding an hard constraint of the form $\leftarrow\naf v$.

\end{definition}

\begin{prop}
Given a stratified ASP program $P$, its complement $\neg P$ is coherent iff $P$ is incoherent.
\end{prop}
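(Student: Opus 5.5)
The plan is to compare the answer sets of $P$ and $\neg P$ directly, using the fact recalled just above: a stratified program with hard constraints has exactly one answer set if its hard constraints are satisfied, and none otherwise. Let $R$ be the set of (non-constraint) rules of $P$, and let $K$ be its hard constraints $\leftarrow l_1,\ldots,l_n$. Since $R$ is stratified and has no constraints, it has a unique answer set $S$, its perfect model~\cite{DBLP:journals/csur/DantsinEGV01}. Consequently $P$ is coherent iff $S$ satisfies every constraint in $K$, i.e., no constraint body is true in $S$.

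Next I would analyse $\neg P = R \cup \{v \leftarrow B_r \mid r\in K\}\cup\{\leftarrow \naf v\}$, where $v$ is a fresh atom. The program $R' = R \cup \{v\leftarrow B_r \mid r \in K\}$ is still stratified: $v$ does not occur in $R$ or in any body, so it can be placed in a new topmost stratum. Hence $R'$ has a unique answer set $S'$. I would show $S' = S\cup\{v\}$ if some body $B_r$ with $r\in K$ is true in $S$, and $S'=S$ otherwise.

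This is the one step needing care. The argument uses the splitting set theorem (or directly the stratified construction), with the splitting set $at(R)$, which is closed because $v$ appears in no rule of $R$. The bottom part $R$ yields $S$, and the top part then just evaluates the $v$-rules against $S$. Equivalently, one can check minimality of the reduct directly, since $v$ never occurs negatively.

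Finally, adding the constraint $\leftarrow\naf v$ only filters answer sets. So $\neg P$ is coherent iff $v\in S'$, iff some body of a hard constraint of $P$ is true in $S$, iff $S$ violates $K$, iff $P$ is incoherent. The only point I expect to be even mildly delicate is justifying that the constraint filtering and the stratified uniqueness interact as claimed. That is the standard fact that constraints eliminate answer sets without creating new ones, which I would cite rather than reprove.
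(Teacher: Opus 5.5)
Your proof is correct and takes the same route as the paper. The paper gives no formal proof; it only notes that a stratified program with hard constraints has one answer set iff its constraints are satisfied, that the fresh atom $v$ is derived exactly when some constraint body holds, and that $\leftarrow \naf v$ then filters, and your splitting-set argument (with $v$ placed in a new top stratum) makes that informal justification rigorous.
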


At this point, if $\Box = \exists^{st}$ (resp. $\Box = \forall^{st}$) one might be tempted to define the counterexample program as $P_2 \cup \neg C \cup \fix{P_1}{M_1}$  (resp. $P_2 \cup C \cup \fix{P_1}{M_1}$). 
However, an optimal answer set of such a counterexample program may correspond to a non-optimal answer set of $P_2\cup\fix{P_1}{M_1}$ that violates (resp. satisfies) $C$, which does not correspond to a countermove for $\overline{\Box}$ to $M_1$.
To address this issue, we need to relax hard constraints from $C$ into weak constraints with a lower priority level w.r.t. weak constraints in $P_2$.

\begin{definition}[Relaxed program]\label{def:relaxed_transformation}
Let $P$ be a stratified program with hard constraints, $l$ be an integer, and $unsat$ be a fresh atom not appearing anywhere else. Then $relaxed(P,l)$ is defined as:
    \[
    relaxed(P,l) = \left\{\begin{array}{lllr}
        H_r     & \leftarrow    & B_r.          & \forall r \in P s.t. H_r \ne \emptyset\\
        unsat   & \leftarrow    & B_r.          & \forall r \in P s.t. H_r = \emptyset\\
                & \leftarrow^w  & unsat. [1@l]  & \\
     \end{array}\right\}
    \]
    
\end{definition}
Intuitively, $relaxed(\cdot, \cdot)$ is similar to $\neg C$ but the fresh atom introduced as head of the hard constraints of $P$ is used to assign a penalty if $P$ is incoherent. 
As a result, $relaxed(\cdot, \cdot)$ has the following property, which is fundamental in the computation of countermoves.
\begin{prop}
Given a stratified ASP program $P$ with hard constraints, and an integer $l$, $relaxed(P,l)$ is always coherent.
\end{prop}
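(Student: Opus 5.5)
The plan is to show that removing every hard constraint from $P$ leaves a stratified normal program, and then to exhibit its unique answer set. Consider $relaxed(P,l)$. Its rules are of two kinds: the rules of $P$ with nonempty head, kept unchanged, and, for each hard constraint $\leftarrow B_r$ of $P$, a new rule $unsat \leftarrow B_r$. Weak constraints play no role in coherence, since they only rank answer sets that already exist. So it suffices to prove that $\mathcal{R}(relaxed(P,l))$ has an answer set.

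\textbf{Step 1: the relaxed program is stratified and constraint-free.} Take the stratification of $P$, i.e.\ a level mapping on predicates such that positive body dependencies do not increase the level and negative ones strictly decrease it. Extend it by assigning the fresh atom $unsat$ a level strictly higher than every predicate of $P$. This is a valid stratification for the new rules $unsat \leftarrow B_r$: the head $unsat$ lies above all body predicates, and no rule has $unsat$ in its body, because $unsat$ is fresh and the weak constraint is not a rule. The remaining rules are unchanged, so the stratification of $P$ still works for them. Therefore $\mathcal{R}(relaxed(P,l))$ is a stratified program with no hard constraints.

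\textbf{Step 2: existence of an answer set.} I would invoke the classical result already cited when introducing the complement transformation~\cite{DBLP:journals/csur/DantsinEGV01}: a stratified program has exactly one answer set, namely its perfect model computed stratum by stratum, provided its hard constraints are satisfied. Since $\mathcal{R}(relaxed(P,l))$ has no hard constraints, this condition holds vacuously, so the program has a unique answer set $M$. Concretely, $M$ is the perfect model of the non-constraint part of $P$, extended with $unsat$ exactly when some body of a hard constraint of $P$ is true in it.

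\textbf{Step 3: from answer sets to optimal answer sets.} Every program with at least one answer set has an optimal answer set, because $AS$ is finite for ground programs and dominance is a strict order. Here $AS$ is the singleton $\{M\}$, so $M$ is trivially optimal, and $relaxed(P,l)$ is coherent.

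\textbf{Expected obstacle.} The only delicate point is Step 1: I must check that adding $unsat$ cannot create a negative cycle. This holds because $unsat$ never occurs in any rule body. I would state this explicitly, together with the convention that weak constraints are not considered in the dependency graph.
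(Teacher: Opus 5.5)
Your proof is correct and follows the reasoning the paper relies on. The paper states this proposition without a formal proof; its only justification is the remark that a stratified program has a unique answer set exactly when its hard constraints are satisfied, together with the observation that $relaxed(P,l)$ turns every hard constraint into a rule deriving the fresh atom $unsat$, leaving a constraint-free stratified program. Your Step~3 is unnecessary, since coherence is defined as $AS(\cdot)\neq\emptyset$ and weak constraints do not filter answer sets, but it does no harm.
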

Thanks to the above property, the program $relaxed(\neg C,l)$ (resp. $relaxed(C,l)$) can be combined with $P_2$ in such a way that rules in $\neg C$ (resp. $C$) do not filter out any optimal answer set of $P_2\cup\fix{P_1}{M_1}$. Thus, we are now ready to define the countermove program.

\begin{definition}[Countermove program]\label{def:countermove_program}
Let $\Pi$ be a \TWOASPQW of the form (\ref{eq:2_aspqw}) and $l_{min}$ be the smallest priority level among weak constraints in $P_2$, then the \emph{countermove program} for $\Pi$ is $ctr(\Pi) = P_2 \cup relaxed(\neg C,l_{min}-1)$ if $\Box = \exists^{st}$; otherwise $ctr(\Pi) = P_2 \cup relaxed(C,l_{min}-1)$.
\end{definition}
Note that, the above definition aligns with the countermove definition from Section~\ref{sec:game_based_2aspqw}. Specifically, when $\Box = \exists^{st}$, a countermove must violate $C$, so the countermove program incorporates rules from $relaxed(\neg C, l_{min}-1)$. Conversely, when $\Box = \forall^{st}$, a countermove must satisfy $C$, so the countermove program incorporates rules from $relaxed(C,l_{min}-1)$.
Consequently, countermoves to a move for $\Box$ are given by the optimal answer sets of the counterexample program. 
\begin{prop}\label{prop:ctr_move}
\updated{
Let $\Pi$ be a \TWOASPQW program of the form~(\ref{eq:2_aspqw}) and $M_1 \in OptAS(P_1)$ be a move for $\Box$. There exists $M_2 \in OptAS(ctr(\Pi)\cup \fix{P_1}{M_1})$ such that $unsat \notin M_2$ if and only if $M_2|_{\heads{P_2}}$ is a countermove to $M_1$ for $\overline{\Box}$.
}
\end{prop}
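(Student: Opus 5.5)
The plan is to prove the two directions by comparing the optimal answer sets of $Q = ctr(\Pi)\cup\fix{P_1}{M_1}$ with those of $P = P_2\cup\fix{P_1}{M_1}$. I read the statement as: some $M_2\in OptAS(Q)$ has $unsat\notin M_2$ iff a countermove to $M_1$ for $\overline{\Box}$ exists, and in that case $M_2|_{\heads{P_2}}$ is such a countermove. Write $D$ for $\neg C$ if $\Box=\exists^{st}$ and for $C$ otherwise, so that $Q = P\cup relaxed(D,l_{min}-1)$. By the complement proposition, $D$ is coherent together with a fixed interpretation exactly when that interpretation violates $C$ (case $\exists^{st}$) or satisfies $C$ (case $\forall^{st}$). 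So the target is the set of optimal answer sets of $P$ for which $D$ is coherent.

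The first step is a splitting argument. The atoms of $relaxed(D,l_{min}-1)$ that do not belong to $P$, namely the auxiliary atoms of $C$, $v$ and $unsat$, occur only in heads of this stratified part. By the splitting set theorem, answer sets of $Q$ are in bijection with pairs $(N,N')$, where $N\in AS(P)$ and $N'$ is the unique answer set of the stratified program $relaxed(D,l_{min}-1)\cup\fix{}{N}$. This unique answer set exists because $relaxed$ is always coherent. By construction of $relaxed$, $unsat\in N'$ iff some hard constraint of $D$ fires on the stratified model, that is, iff $D\cup\fix{P}{N}$ is incoherent.

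The second step compares costs. Costs of $Q$ at every level $\ge l_{min}$ coincide with those of $P$, since every weak constraint of $P_2$ has level $\ge l_{min}$ and the only new weak constraint sits at level $l_{min}-1$. Since dominance is lexicographic from higher levels down, $N\cup N'$ is optimal for $Q$ iff (a) $N\in OptAS(P)$, and (b) among optimal answer sets of $P$, $N$ minimizes the level-$(l_{min}-1)$ penalty, i.e.\ whether $unsat$ holds. For (a), it is important that no original weak constraint lies at level $l_{min}-1$; this holds by the choice of $l_{min}$.

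With these two steps the proposition follows. ($\Rightarrow$) If $M_2\in OptAS(Q)$ and $unsat\notin M_2$, then its $P$-part is in $OptAS(P)$ and $D$ is coherent on it. Hence it violates or satisfies $C$ as required by $\overline{\Box}$, and its restriction to $\heads{P_2}$ is a countermove. ($\Leftarrow$) If some $N\in OptAS(P)$ yields a countermove, then $N$ extends to an answer set of $Q$ with $unsat$ false. This answer set has cost $0$ at level $l_{min}-1$ and optimal cost at every higher level, so it is optimal for $Q$.

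I expect the main obstacle to be the splitting step, specifically making sure that $relaxed(\neg C,\cdot)$ correctly detects incoherence of $\neg C$. Relaxing $\neg C$ turns its constraint $\leftarrow \naf v$ into $unsat\leftarrow\naf v$, which keeps the program stratified because $v$ is defined in a lower stratum. I also need $fix$ to act only on atoms of $P_1$, so that the auxiliary atoms remain free. If $P_2$ has no weak constraints, $l_{min}$ is undefined; there I would take any level, and the argument is unchanged.
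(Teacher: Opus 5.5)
Your proof is correct and follows the route the paper indicates. The paper gives no formal proof of this proposition, only an illustrative example in the appendix and two informal observations: $relaxed(\cdot,\cdot)$ is always coherent, so it filters out no answer set of $P_2\cup\fix{P_1}{M_1}$, and its weak constraint sits below every level of $P_2$, so optimality with respect to $P_2$ is preserved. Your splitting step and your lexicographic cost step are rigorous versions of exactly these two observations.
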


\subsection{Refining abstractions in \TWOASPQW}
The next step in designing our CEGAR-based approach for \TWOASPQW is to define a transformation for refining the abstraction according to known countermoves.
Specifically, let $M_1$ be a move for $\Box$ and $M_2$ be a countermove for $\overline{\Box}$, then the goal of the refinement, namely $Ref(\Pi, M_2)$, is to obtain a set of rules such that optimal answer sets of $P_1 \cup Ref(\Pi, M_2)$ correspond to moves $M_1'$ for $\Box$ such that $M_2$ is not a countermove to $M_1'$ for $\overline{\Box}$. 
Note that, $M_2$ is not a countermove to $M_1'$ for $\overline{\Box}$ if one of the these condition holds:
\begin{enumerate}
    \item there is no $M_2' \in AS(P_2\cup\fix{P_1}{M_1'})$ such that $M_2'|_{\heads{P_2}} = M_2$\label{cond:no_ans};
    \item there exists $M_2' \in AS(P_2\cup\fix{P_1}{M_1'})$ such that $M_2'|_{\heads{P_2}} = M_2$ but $M_2'\notin OptAS(P_2\cup\fix{P_1}{M_1'})$ (i.e. $M_2'$ is dominated by some answer set of $P_2\cup\fix{P_1}{M_1'}$);\label{cond:no_opt_ans}
    \item there exists $M_2' \in OptAS(P_2\cup\fix{P_1}{M_1'})$ such that $M_2'|_{\heads{P_2}} = M_2$ but $M_2'$ satisfies $C$ and $\Box = \exists^{st}$ (resp. violates $C$ and $\Box=\forall^{st}$).\label{cond:sat_c}
\end{enumerate}
Thus, we define some transformations encoding these conditions to refine our abstraction.

First of all, we need a predicate substitution function that maps each predicate $p$ to a fresh one of the form $p^{\alpha}_\beta$, where $\alpha$ and $\beta$ are strings, to obtain rules over a fresh signature for each discovered countermove.
More in detail, let $L$ be a set of literals and $\epsilon$ be an ASP expression (i.e. rule, program, etc.), then $\sigma^{\alpha}_\beta(L,\epsilon)$ denotes the expression obtained from $\epsilon$ by mapping each positive (resp. negative) literal $p(\vec{t}) \in L$ (resp. $\naf p(\vec{t}) \in L$) with $p^{\alpha}_\beta(\vec{t})$ (resp. $\naf p^{\alpha}_\beta(\vec{t})$).
Now, we are ready to define the rules which verify Condition~\ref{cond:no_ans}. 
\begin{definition}[Check Answer Set]\label{def:same_transformation}
    Let $P$ be an ASP program and $M$ be an interpretation, then $checkAS(P,M)$ is:
    \[
    checkAS(P, M) = \left\{\begin{array}{lr}
    \sigma_M^-(\heads{P},\{a \leftarrow \vert a \in M\})&\\
    \sigma_M^+(\heads{P}, \sigma_M^-(\overline{\heads{P}},r)) & \forall r \in P, H(r)\neq \emptyset\\
    fail_M \leftarrow \sigma_M^+(\heads{P}, \sigma_M^-(\overline{\heads{P}},B(r))) & \forall r \in P, H(r)=\emptyset\\
    fail_M \leftarrow p(\vec{t})_M^+,\naf p(\vec{t})_M^- & \forall\ p(\vec{t}) \in \heads{P} \\
    fail_M \leftarrow p(\vec{t})_M^-,\naf p(\vec{t})_M^+ & \forall\ p(\vec{t}) \in \heads{P}\\
    as_M \leftarrow \naf fail_M\ & \
    \end{array}\right.
    \]
\end{definition}
Basically, $checkAS(P, M)$ encodes the GL-reduct of a program $P$ w.r.t. an interpretation $M$ and checks whether $M$ is a $\subset$-minimal model of the reduct. 
Specifically, $M$ is encoded as facts over a fresh \emph{negative signature} (i.e., each atom $p(\vec{t}) \in M$ is represented by a fact $p^{-}_M(\vec{t}) \leftarrow$).
Atoms over the negative signature are used to rewrite negative body literals, yielding the GL-reduct of $P$ w.r.t. $M$; whereas head atoms and positive body literals are mapped to a fresh \emph{positive signature} (i.e., $p(\vec{t})$ mapped to $p^{+}_M(\vec{t})$).
Finally, if the true atoms over positive and negative signatures coincide then $M$ is an answer set of $P$.

We can now focus on Condition~\ref{cond:no_opt_ans} which requires checking answer set optimality. To this end, we define a transformation to compute the cost of answer sets of an ASP program.
\begin{definition}[Cost Program]\label{cost_transformation}
    Let $P$ be an ASP program, $L$ be the set of priority levels in $\mathcal{W}(P)$, and $v_P$ and $cl_P$ are fresh predicates not appearing in $P$. Then, $cost(P)$ is defined as:
    \[
    cost(P) = \left\{\begin{array}{lclr}
    v_P(w,l,\vec{t}) & \leftarrow & l_1, \ldots, l_n & \forall  \leftarrow^{\omega} l_1, \ldots, l_n [w@l,\vec{t}] \in P\\
    cl_P(\updated{T},l) &\leftarrow & \#sum\{C,\vec{t}: v_P(C,l,\vec{t})\}=\updated{T} & \forall l \in L\\
    \end{array}\right.
    \]
\end{definition}
Intuitively, weak constraints of the form $\leftarrow^{\omega} l_1, \ldots, l_n [w@l,\vec{t}]$ can be rewritten into rules whose head is a fresh atom encoding violation tuples (i.e. $(w,l,\vec{t})$); and introduces a rule for each level $l$ that sums up the the weights of the violation tuples at level $l$.
To verify the optimality of an answer set $M \in AS(P)$, it is necessary to compare its cost with that of every $M' \in AS(P)$.
To this end, we use $clone(P)=\sigma_{clone}(\heads{P}\cup\overline{\heads{P}}, P)$, which clones the program $P$, enabling the comparison between the cost of $M$ and the cost of all answer sets of the cloned program.

\begin{definition}[Dominated program]\label{dominated_transformation}
    Let $P_1$ and $P_2$ be two ASP programs, then $checkDom(P_1,P_2)$ is the following program: 
    \[
    \begin{array}{lclr}
    diff_{P_1,P_2}(L) & \leftarrow & cl_{P_1}(C1,L), cl_{P_2}(C2,L), C1 \neq C2. & \\
    hasHigher_{P_1,P_2}(L) & \leftarrow & diff_{P_1,P_2}(L), diff_{P_1,P_2}(L1), L<L1.& \\
    highest_{P_1,P_2}(L) & \leftarrow & diff_{P_1,P_2}(L), \naf hasHigher_{P_1,P_2}(L).& \\
    dom_{P_1,P_2} & \leftarrow & highest_{P_1,P_2}(L), cl_{P_1}(C1,L), cl_{P_2}(C2,L), C2 < C1 & \\
    \end{array}
    \]
    where $cl_{P_1}$ and $cl_{P_2}$ are, respectively, the predicates introduced by $cost(P_1)$ and $cost(P_2)$; whereas $diff_{P_1,P_2}$, $hasHigher_{P_1,P_2}$, $highest_{P_1,P_2}$, and $dom_{P_1,P_2}$ are fresh predicates.
\end{definition}
Intuitively, let $M \in AS(P)$ and $M_{c}\in AS(clone(P))$, then $checkDom(P, clone(P))$ can be used to compare the cost of $M$ and $M_{c}$.
More precisely, $cost(P)$ and $cost(clone(P))$ compute, respectively, the cost of $M$ and $M_c$, for each level $l$, as atoms of the form $cl_P(C,l)$ and $cl_{clone(P)}(C,l)$.
Thus, rules from $dom(P,clone(P))$ derive $dom_{P,clone(P)}$ if and only if $M$ is dominated by $M_{c}$, which means $M$ is not an optimal answer set of $P$.

Finally, we can focus on Condition~\ref{cond:sat_c}. 
Recall that Condition~\ref{cond:sat_c} requires to verify the coherence (resp. incoherence) of the program $C$. To this end, it is possible to leverage $relaxed(C,l)$ (resp. $relaxed(\neg C,l)$), as we have seen for counterexample program.

Now that, we have transformations for each condition (i.e., Condition~\ref{cond:no_ans},~\ref{cond:no_opt_ans}, and~\ref{cond:sat_c}), we need to control the activation of the different transformations in the refinement process. 
\begin{definition}[Controlled program]\label{controlled_program_transformation}
    Let $P$ be a program and $l$ a literal s.t. $at(l) \notin B_P$, then $or(P,l) = \{H_r \leftarrow B_r, l \mid r \in P\}$.
\end{definition}
Intuitively, $or(P,l)$ controls the activation of $P$ according to the literal $l$.
If $l$ is true, it can be removed from rules in $or(P,l)$, obtaining back the program $P$; otherwise rules in $or(P,l)$ are satisfied as $l$ falsifies rules' body.
We can now define the refinement program. 
\begin{definition}[Refinement Program]\label{def:refinement_program}
Let $\Pi$ be a \TWOASPQW program of the form~(\ref{eq:2_aspqw}), $M$ be a move for $\Box$ and let $CE$ be a countermove to $M$ for $\overline{\Box}$. The \textit{refinement program} is defined as follows:
     \[
    ref(\Pi, CE) = \left\{\begin{array}{lr}
        checkAS(P_2,CE) & \\
        \leftarrow^w as_{CE}\ [1@l_{min}-1] & \\
        or(cost(P_2^{CE}),as_{CE}) & \\
        or(clone(\mathcal{R}(P_2)),as_{CE}) & \\
        or(cost(clone(P_2)),as_{CE}) & \\
        or(checkDom(P_2^{CE}, clone(P_2)),as_{CE}) & \\
        \leftarrow^w as_{CE}, \naf dom_{P_2^{CE},clone(P_2)}\ [1@l_{min}-2] & \\
        or(\sigma_{CE}^-(lits(P_2)\cup lits(C'), C'),as_{CE}) & \\
    \end{array}\right\}
    \]
where $P_2^{CE} = \sigma_{CE}^-(lits(P_2),P_2)$, $lits(P_2) = \heads{P_2}\cup\overline{\heads{P_2}}$, $as_{CE}$ is the predicate introduced by $checkAS(P_2, CE)$, $C' = relaxed(C,l_{min}-3)$ if $\Box=\exists^{st}$; otherwise $C' = relaxed(\neg C,l_{min}-3)$, $lits(C') = \heads{C'}\cup\overline{\heads{C'}}$, and $l_{min}$ is the smallest priority level among weak constraints of $P_1$.
\end{definition}

Let $\Pi$ be a \TWOASPQW of the form~(\ref{eq:2_aspqw}), $M$ be a move for $\Box$ and $CE$ be a countermove to $M$ for $\overline{\Box}$, then we aim at computing a new move $M'$ for $\Box$ such that $CE$ is not a countermove to $M'$ for $\overline{\Box}$.
To this end, we use $ref(\Pi,CE)$ to check Conditions \ref{cond:no_ans}-\ref{cond:sat_c}. 
More precisely, rules in $checkAS(P_2,CE)$ encode the reduct of $P_2$ w.r.t. $CE$, and derive the atom $as_{CE}$ iff there exists $M_2 \in AS(P_2\cup\fix{P_1}{M'})$ such that $M_2|_{\heads{P_2}} = CE$ (i.e., Condition~\ref{cond:no_ans} is not satisfied).
Thus, the weak constraint $\leftarrow^{\omega} as_{CE}\ [1@l_{min}-1]$ expresses the preference to satisfy Condition~\ref{cond:no_ans}.
If Condition~\ref{cond:no_ans} is not satisfied, then there exists $M_2 \in AS(P_2\cup\fix{P_1}{M'})$ such that $M_2|_{\heads{P_2}} = CE$ and so $as_{CE}$ is derived as true.
At this point, $as_{CE}$ activates the following blocks which check Conditions~\ref{cond:no_opt_ans} and~\ref{cond:sat_c}.
To check Condition~\ref{cond:no_opt_ans}, $or(cost(P_2^{CE}),as_{CE})$ computes the cost of $M_2$ w.r.t. weak constraints in $P_2$; $or(clone(\mathcal{R}(P_2)),as_{CE})$ clones the program $P_2$ to compare pair of answer set of $P_2\cup\fix{P_1}{M'}$; $or(cost(clone(P_2)),as_{CE})$ computes the cost of an answer set of the cloned program; and finally $or(checkDom(P_2^{CE}, clone(P_2)),as_{CE})$ derive the atom $dom_{P_2,clone(P_2)}$ iff $M_2$ is not optimal as $M_2$ is dominated by some answer set of the cloned program.
As a result, the weak constraint $\leftarrow^w as_{CE}, \naf dom_{P_2^{CE},clone(P_2)}\ [1@l_{min}-2]$ prefers answer sets of the cloned program that dominate $M_2$ (i.e., $M_2$ is not optimal).
If $M_2$ cannot be dominated by any answer set of the cloned program then $M_2$ is optimal and so $or(\sigma_{CE}^-(lits(P_2)\cup lits(C'), C'),as_{CE})$ checks  Condition~\ref{cond:sat_c}. 
In particular, if $\Box = \exists^{st}$ (resp. $\Box = \forall^{st}$), then the atom $unsat$ is derived iff $M_2$ does not satisfy $C$ (resp. $\neg C$), which means Condition~\ref{cond:sat_c} is not satisfied.
Thus, the weak constraint $\leftarrow^{\omega} unsat, as_{CE}\ [1@l_{min}-3]$ in $C'$ models the preference of satisfying $C$ (resp. $\neg C$) (i.e. Condition~\ref{cond:sat_c} is satisfied).
Thus, if there exists an optimal answer set of $P_1\cup ref(\Pi,CE)$ that satisfies at least one weak constraint in $ref(\Pi,CE)$ then this corresponds to a move $M'$ that does not admit $CE$ as countermove. 
A detailed example is provided in Appendix~\ref{sec:program_transform_ref}.

\section{Computing Optimal Quantified Answer Sets}\label{sec:opt_qas}
Computing optimal (quantified) answer sets requires algorithms that make multiple oracle calls to search for such answer sets~\cite{DBLP:journals/logcom/AlvianoD0R20,DBLP:conf/padl/AzzoliniLMR25}. 

\paragraph{Quantified Answer Set Computation}
The computation of quantified answer sets in presence of weak constraints can be obtained by plugging the transformations defined in previous section in the \emph{CEGAR for \TWOASPQ} algorithm by \citeN{Cuteri_Mazzotta_Ricca_2026}. 
In particular, we update both counterexample search and refinement procedure with those introduced in Section~\ref{sec:cegar_for_2aspqw} (i.e., Definitions~\ref{def:countermove_program} and~\ref{def:refinement_program}), and update the conditions of existence of a winning move accordingly.
Indeed, in the original algorithm it was sufficient to check for existence of an answer set of the refined or counterexample programs, \updated{whereas} here we have to look at the cost of their optimal answer sets. More in detail, at the least three levels the cost 1 if no winning exists; whereas for the counterexample program at the last level the cost is 1 if the current move is winning (full algorithm in~\ref{sec:extended_cegar}). 

\paragraph{Upper-bound improving}
Optimal quantified answer sets can be computed starting from a quantified answer set and iteratively searching for better ones until the incoherence is met~\cite{DBLP:journals/logcom/AlvianoD0R20}. 
A quantified answer set $M$ of the input program $\Pi$ is computed applying CEGAR for \TWOASPQW, and the cost of $M$ is our initial upper bound. Note that if $M$ does not exist, no optimal quantified answer set exists and the search stops immediately. 
Next, to improve on this bound a constraint is added to $\Pi$ to enforce a preference for answer sets of $P_1$ (i.e. candidate quantified answer sets) with a lower cost, and the solver is called again. The process repeats until the upper bound cannot be improved anymore, and the last quantified answer set is optimal.

\paragraph{Lower-bound improving}
The core idea of these strategies is to fix an initial lower-bound cost and, if no answer set within this bound exists, iteratively relax the program to raise the bound until an optimal solution is obtained.
Traditional ASP solvers start by treating all weak constraints as hard, aiming for a zero-cost lower bound. If no answer set exists, unsatisfiable cores guide the progressive relaxation of the program, incrementally raising the lower bound (roughly  admitting some weak constraint must be violated) until an optimal solution is found~\cite{DBLP:journals/logcom/AlvianoD0R20}. 
Unluckily, this strategy cannot be ported as it is in our setting, since a notion of unsatisfiable core has never been defined for \ASPQW. 
Thus, we propose alternative ways for targeting a lower bound, and also for relaxing the program so that the lower bound improves iteratively until the optimum is found.
Intuitively, we aim to compute an answer set of $P_1$ that is optimal with respect to the global weak constraints. This is achieved by adding the global weak constraints $C^{\omega}$ to $P_1$ with the lowest priority and searching for the optimum answer sets of  the resulting program $P_1^{lower}$.
Note that, an optimum answer set of $P_1^{lower}$ is a reasonable lower bound candidate, since it is either an optimum for $\Pi$ or it does not satisfy the subsequent quantifiers. 
In the latter case, following the CEGAR approach, an oracle call is used to compute a countermove, and the corresponding refinement is  added to $P_1^{lower}$. 
This enables the next iteration to search for a new candidate optimal answer set. 
The procedure repeats until a winning move is found, incrementally improving the lower bound by discarding candidates that are not quantified answer sets of $\Pi$.
To ensure the correctness of the approach the global weak constraints $W$ are moved at the lowest priority levels w.r.t. both local weak constraints in $P_1$ and weak constraints that would be added to $P_1^{lower}$ by the refinement procedure.
More precisely, the initial abstraction is defined as $P_1 \cup W$, where 
$W = \{\leftarrow^\omega l_1,\ldots,l_k\ [w@\lambda+l,\vec{t}] \mid \leftarrow^\omega l_1,\ldots,l_k\ [w@l,\vec{t}] \in C^{\omega}\}$, with $\lambda$ being an integer such that levels from $C^{\omega}$ are remapped to strictly lower priority levels w.r.t. $l_{min}\!-\!3$ (i.e., the smallest priority level added by the refinement), and $l_{min}$ being the lowest priority level in $P_1$.
In this way, the first winning move will be a quantified answer set which is also optimal.
Note that the proposed ordering of priority levels in the (refined) abstraction is essential: it first favors moves that admit no known countermove and only then prefers moves that are optimal with respect to $C^{\omega}$. Under this ordering, the first winning move obtained is an optimal quantified answer set. 

\section{\updated{Implementation and }Experiments}
We run an experimental campaign on an Intel(R) Xeon(R) CPU E7-8880 v4 @ 2.20GHz, running Debian GNU/Linux 12, with memory and CPU (i.e., user+system) limited to 8GB and 800s.
Benchmarks and executables are available at
\url{https://osf.io/gmnjx}

\updated{
\paragraph{Implementation}
The proposed approach was implemented on top of the \textsc{casper}~\cite{Cuteri_Mazzotta_Ricca_2026}. 
More in detail, \textsc{casper} is written in Python and uses \textsc{clingo}~\cite{DBLP:conf/iclp/GebserKKOSW16} as an oracle for computing moves and countermoves.
For non-alternating \TWOASPQW programs, our implementation applies the transformations proposed by~\citeN{DBLP:journals/tplp/MazzottaRT24} for removing weak constraints, as the \TWOASPQW game is defined for alternating programs. Specifically, non-alternating \TWOASPQW programs are translated into \TWOASPQ program with two alternating quantifiers and no local weak constraints.
}
\paragraph{Benchmarks}
We considered several benchmarks from diverse \ASPQ applications ~\cite{DBLP:journals/tplp/FaberMR23,DBLP:conf/padl/AzzoliniLMR25,DBLP:conf/ijcai/AzzoliniMRR25}: Propositional Abduction Problem (PAP)~\cite{DBLP:journals/jacm/EiterG95}, Minmax Clique (MMC)~\cite{cao1995minimax}, Max Term Deletion (MTD)~\cite{schaefer2002completeness}, Most Probable Explanation in Probabilistic ASP (MPE)~\cite{DBLP:conf/ijcai/AzzoliniMRR25}, and Clique Coloring (CC)~\cite{schaefer2002completeness}.
For PAP, we consider three reasoning tasks: \textsc{pap-opt}, \textsc{pap-rel}, and \textsc{pap-nec}, corresponding to computing cardinality-minimal solutions, and relevant and necessary hypotheses, respectively.
For MMC we considered both the decision and optimization variant of the problem, namely \textsc{mmc-bound} and \textsc{mmc-opt}. 
For MTD, we considered the optimization version.
Finally, for MPE we considered both \textsc{coloring} and \textsc{smokers} domains~\cite{DBLP:conf/ijcai/AzzoliniMRR25}.
\updated{For the CC benchmark, we considered graphs of varying size (from 10 to 120 nodes) and edge density (25\%, 50\%, and 75\%), generated according to the Erdős–Rényi model provided by NetworkX library (\url{https://pypi.org/project/networkx}). For each combination of size and density, we generated 10 instances.
For the other benchmarks, instances were drawn from previous experiments~\cite{Cuteri_Mazzotta_Ricca_2026,DBLP:conf/ijcai/AzzoliniMRR25,DBLP:conf/padl/AzzoliniLMR25}.}

\paragraph{Compared methods}
In our evaluation we compared the approach by~\citeN{DBLP:conf/padl/AzzoliniLMR25} implemented in \textsc{pyqasp}.
\updated{Note that, \textsc{pyqasp} implements the upper-bound improving algorithm on top of a rewriting in QBF and it uses \textsc{quabs}~(\url{https://github.com/ltentrup/quabs}) as backend solver.}
\updated{Then, we consider also the proposed techniques denoted as \textsc{casper-u} (upper-bound improving) and \textsc{casper-l} (lower-bound improving).}
\begin{table}[t]
    \scriptsize
    \begin{tabular}{@{\extracolsep{\fill}}cccrr|rrr|rrr}
        \multirow{2}{*}{Bench} & \multirow{2}{*}{\#inst} &\multirow{2}{*}{TT} & \multicolumn{2}{c|}{\textsc{casper-l}} & \multicolumn{3}{c|}{\textsc{casper-u}} & \multicolumn{3}{c}{\textsc{pyqasp}}\\
            \cline{4-11} 
            & & &  Sol. 	& Sum t.(s) 	&  Sol. 	& Sum t.(s) 	& \multicolumn{1}{c|}{\#opt} & Sol. 	& Sum t.(s) 	& \multicolumn{1}{c}{\#opt}\\
        \cline{1-11}
        \textsc{pap-opt}    & 294   & $opt$        & \textbf{123}	 & \updated{\textbf{6450.78}}	  & \updated{33}	          & \updated{9725.87}	           & \updated{367}	& 101           & 18076.72          & 410\\
        \textsc{mtd}	    & 80    & $opt$        & \updated{50}	             & \updated{1698.24}	          & \textbf{74}   & \updated{\textbf{3916.51}}   & 1184   & 48	        & 6451.46	        & 253\\
        \textsc{coloring}   & 94    & $opt$        & 11	             & \updated{896.93}	          & 8	          & \updated{114.35}	           & 105	& \textbf{25}   & \textbf{3621.92}	& 278\\
        \textsc{smokers}    & 99    & $opt$        & \textbf{99}	 & \updated{\textbf{83.54}}	  & 19	          & \updated{3130.97}	           & 3724	& 14	        & 1845.59	        & 152\\
        \textsc{mmc-opt}    & 45    & $opt$        & \textbf{44}	 & \updated{\textbf{3565.79}} 	  & 42	          & \updated{2912.66}           & 106    & 5	            & 904.65            &	5\\
        
        \textsc{pap-nec}	& 294	& $coh$        & \textbf{282}    & \updated{\textbf{2675.68}}	  & \textbf{282}  & \updated{\textbf{2675.68}}   & -      & -             & -                 & -\\
        \textsc{pap-rel}	& 294	& $coh$        & \textbf{294}    & \updated{\textbf{2053.67}}   & \textbf{294}  & \updated{\textbf{2053.67}}   & -      & -	            & -                 & -\\
        \textsc{mmc-bound}  & 225	& $coh$        & \updated{\textbf{225}}    & \updated{\textbf{919.39}}	  & \updated{\textbf{225}}  & \updated{\textbf{919.39}}	   & -      & -	            & -                 & -\\
        \updated{\textsc{cc}}  & \updated{420}	& \updated{$coh$}        & \updated{\textbf{370}}    & \updated{\textbf{5555.22}}	  & \updated{\textbf{370}}  & \updated{\textbf{5555.22}}	   & \updated{-}      & \updated{-}	            & \updated{-}                 & \updated{-}\\
        
        \cline{1-11}
    \end{tabular}

    \caption{Overall results}
    \label{tab:results}
\end{table}

\paragraph{Results}
Obtained results are summarized in Table~\ref{tab:results} which reports, for each benchmark, the number of instances (\#inst), the type of task (TT) ($opt$ for optimal answer set and $coh$ for coherence), and, for each system, the number of solved instances (Sol.), total execution time (Sum t,(s)), and number of optimization steps (\#Opt.). This latter is the sum of the number of quantified answer sets found to reach the solution; it is omitted for lower-bound improving, and is not meaningful for local-weak-constraints-only problems.

We first focus on the benchmarks containing only local weak constraints, namely: \textsc{pap-nec}, \textsc{pap-rel}, and \textsc{mmc-bound}.
Observe that, in these cases, \textsc{pyqasp} could not be run (it does not support local weak constraints), and \textsc{casper-l} and \textsc{casper-u} clearly coincide, since the first winning move is the solution.
For these benchmarks, nearly all instances were solved by \textsc{casper} within 2 minutes (only \updated{12} of 813 timed out), confirming effectiveness of the systems herein introduced.
\updated{To further assess the scalability of \textsc{casper}, we considered the CC benchmark. In particular, we compare \textsc{casper} with an enumeration-based solver, denoted as \textsc{nested-aspq}, which evaluates programs by enumerating answer sets of each subprogram according to the \TWOASPQW semantics.
Figure~\ref{fig:scalability} reports the execution times of the systems for graphs with fixed edge density. Instances are sorted by increasing graph size, and each point $(x, y)$ represents the average runtime $y$ (over 10 generated instances) required by a system on graphs with $x$ nodes and a given edge density.

\begin{minipage}{.49\textwidth}
    \begin{figure}[H]
\begin{tikzpicture}[scale=0.6]
    \begin{axis}[
        tick label style={font=\Large},
        legend style={nodes={scale=0.51, transform shape}, font=\huge},
        x label style={at={(axis description cs:0.5,-0.01)},anchor=north},
        y label style={at={(axis description cs:-0.05,.5)},anchor=south},
        xlabel={{\Large Number of solved instances}},
        ylabel={{\Large Execution time(s)}},
        xlabel style={yshift=-16pt},
        ylabel style={yshift=18pt},
        xmin=0, xmax=340,
        xtick={0,50,100,150,200,250,300},
        ymin=0, ymax=800,
        ytick={0,200,400,600,800},
        ymode=normal,
        legend pos=north west
        ]
        
        \addplot[color=blue, mark=x, mark size = 1.4pt, line width=0.6pt] 
        table{overall_global_casper.bash_time.dat};
        \addlegendentry{\textsc{casper-u}}
        
        \addplot[color=darkgreen, mark=x, mark size = 1.4pt, line width=0.6pt] 
        table{overall_global_casper_lower.bash_time.dat};
        \addlegendentry{\textsc{casper-l}}
        
        \addplot[color=orange, mark=diamond, mark size = 1.1pt, line width=0.6pt] 
        table{overall_global_pyqasp_quabs_weak.bash_time.dat};
        \addlegendentry{\textsc{pyqasp}}
        
    \end{axis}
\end{tikzpicture}
        \caption{\updated{Overall execution time - $opt$}}
        \label{fig:overall_time}
    \end{figure}
    
\end{minipage}
\begin{minipage}{.49\textwidth}
    \begin{figure}[H]
\begin{tikzpicture}[scale=0.6]
    \begin{axis}[
        tick label style={font=\Large},
        legend style={nodes={scale=0.51, transform shape}, font=\huge},
        x label style={at={(axis description cs:0.5,-0.01)},anchor=north},
        y label style={at={(axis description cs:-0.05,.5)},anchor=south},
        xlabel={\Large{Number of nodes}},
        ylabel={\Large{Execution time(s)}},
        xlabel style={yshift=-16pt},
        ylabel style={yshift=26pt},
        xmin=0, xmax=120,
        xtick={0,40,80,120},
        ymin=0, ymax=1600,
        ytick={0,400,800,1200,1600},
        ymode=normal,
        legend pos=north west
        ]
        
        \addplot[color=blue, mark=x, mark size = 1.4pt, line width=0.6pt] 
        table{synt_casper_0.25_time.dat};
        \addlegendentry{\textsc{casper-0.25}}
        
        \addplot[color=cyan, mark=x, mark size = 1.4pt, line width=0.6pt] 
        table{synt_casper_0.50_time.dat};
        \addlegendentry{\textsc{casper-0.50}}
        
        \addplot[color=teal, mark=x, mark size = 1.4pt, line width=0.6pt] 
        table{synt_casper_0.75_time.dat};
        \addlegendentry{\textsc{casper-0.75}}
        
        \addplot[color=brown, mark=diamond, mark size = 1.1pt, line width=0.6pt] 
        table{synt_nested_0.25_time.dat};
        \addlegendentry{\textsc{nested-aspq-0.25}}

        \addplot[color=orange, mark=diamond, mark size = 1.1pt, line width=0.6pt] 
        table{synt_nested_0.50_time.dat};
        \addlegendentry{\textsc{nested-aspq-0.50}}

        \addplot[color=purple, mark=diamond, mark size = 1.1pt, line width=0.6pt] 
        table{synt_nested_0.75_time.dat};
        \addlegendentry{\textsc{nested-aspq-0.75}}

    \end{axis}
\end{tikzpicture}
        \caption{\updated{Solving time for \textsc{CC}}}
    \label{fig:scalability}
    \end{figure}
\end{minipage}\smallskip

The generated instances are hard to solve and \textsc{nested-aspq} is unable to scale beyond graphs with 30 nodes, regardless of the edge density.
In contrast, \textsc{casper} exhibits significantly better scalability, solving instances with up to 120 nodes. Furthermore, the edge density has a noticeable impact on \textsc{casper} runtime. For dense graphs, the runtime is considerably small, as dense graphs admits \textit{few} maximal cliques and so the number of possible counterexample is reduced. Conversely, for sparse graphs, the runtime increases, due to the larger number of maximal cliques, which in turn leads to a higher number of counterexamples to be explored.}  

Let's now shift the attention to optimum quantified answer set search. 
For \textsc{mtd} and \textsc{coloring}, the upper-bound improving strategy, implemented in \textsc{pyqasp} and \textsc{casper-u}, is preferred over the lower-bound improving strategy adopted by \textsc{casper-l}. 
The ASP-based \textsc{casper-u} is preferable to the QBF-based \textsc{pyqasp} in \textsc{mtd}, whereas the opposite holds for \textsc{coloring}.
Diving in the details, \textsc{pyqasp} could compute \updated{for \textsc{coloring}} quantified answer sets that are closer to the optimum so requiring few optimization steps; on the other hand \textsc{casper-u} shows a faster computation of quantified answer sets leading to better performance. On the other hand, \textsc{casper-l} likely computes many locally optimal moves that are not quantified answer sets, thus resulting slower than upper-bound improving alternatives. 
Conversely, for \textsc{pap-opt}, \textsc{mmc-opt}, and \textsc{smokers}, \textsc{casper-l} solves substantially more instances than the others (146 more than \textsc{pyqasp} and \updated{172} more than \textsc{casper-u}), since locally optimal moves frequently correspond to optimal solutions.

Overall, \textsc{casper-l} solves \updated{134} more instances than \textsc{pyqasp}, and \updated{151}  more instances than \textsc{casper-u} with a lower average runtime. Figure~\ref{fig:overall_time} reports a traditional cactus plot on all the instances, confirming \textsc{casper-l} is the most effective system overall. 

\section{Related Work}
Many ASP extensions have been proposed for modeling hard combinatorial optimization problems. The standard ASP construct for expressing optimization problems is weak constraints~\cite{DBLP:journals/tkde/BuccafurriLR00}, which is equivalent to optimize statements~\cite{DBLP:series/synthesis/2012Gebser}. 
The \ASPQW language is based on the same construct, but expands the modeling capabilities of ASP in the entire PH.
For alternative formalism to \ASPQ, such as stable-unstable~\cite{DBLP:journals/tplp/BogaertsJT16} and quantified ASP~\cite{DBLP:journals/tplp/FandinnoLRSS21}, we are not aware of any extension considering  optimization statements explicitly. Optimization in ASP might also be handled within the \textit{asprin} framework~\cite{DBLP:journals/ai/BrewkaDRS23}, which however targets preference modeling. An in-depth comparison of \ASPQ with these formalism was provided by \citeN{DBLP:journals/tplp/AmendolaRT19} and \citeN{DBLP:journals/tplp/FandinnoLRSS21}.

Among \ASPQ systems, we mention \textsc{qasp}~\cite{DBLP:conf/lpnmr/AmendolaCRT22}, \textsc{pyqasp}~\cite{DBLP:journals/tplp/FaberMR23}, and \textsc{casper}~\cite{Cuteri_Mazzotta_Ricca_2026}. 
The first two are based on a translation of \ASPQ in QBF, they support an arbitrary number of quantifiers, but originally lacked support for weak constraints.
Recently, \textsc{pyqasp} was extended to handle global weak constraints using an upper-bound improving strategy~\cite{DBLP:conf/padl/AzzoliniLMR25}, though local weak constraints remain unsupported.
\textsc{casper}~\cite{Cuteri_Mazzotta_Ricca_2026} is the only CEGAR-based \ASPQ system and originally supported only \TWOASPQ programs without weak constraints. 
This paper extends \textsc{casper} to support local and global weak constraints, 
yielding the first implementation capable of evaluating \TWOASPQW programs. 
Solvers for \ASPQW exploit optimization strategies used in ASP solvers~\cite{DBLP:journals/logcom/AlvianoD0R20}, however our lower-bound improving approach departs from existing methods by realizing improvements via abstraction refinement.

\section{Conclusion}
This paper focuses on \TWOASPQW, the class of \ASPQW programs with two quantifiers, for which tight complexity bounds and concrete implementations were previously missing. We fill this gap by providing a detailed complexity analysis and establishing tight completeness results: coherence checking is complete for the second level of the PH (i.e., $\Sigma_2^P$ for existential programs and $\Pi_2^P$ for universal ones), while reasoning over optimal quantified answer sets is $\Delta_3^P$-complete.
Moreover, building on the CEGAR framework, we developed two optimization techniques based on lower- and upper-bound improvement strategies. Experimental results demonstrate the effectiveness of our approach, advancing the state of the art in \TWOASPQW solving. 

Future work includes extending both the complexity analysis and evaluation techniques to \ASPQW programs with an arbitrary number of quantifiers.

\bibliographystyle{acmtrans}
\bibliography{biblio}
\newpage
\appendix

\section{Extended Preliminaries}\label{sec:ext_prel}
\subsection{Complexity Classes}\label{sec:ext_prel_complexity}
In this section, we recall some basic definitions of complexity classes that are used to study the complexity of \TWOASPQW. For further details about $NP$-completeness and complexity theory we refer the reader to dedicated literature~\cite{DBLP:books/daglib/0072413}.  
We recall that the classes $\Delta_k^P$, $\Sigma_k^P$, and $\Pi_k^P$ of the polynomial time hierarchy (PH)\cite{DBLP:journals/tcs/Stockmeyer76} are defined as follows (rf. \citeN{DBLP:books/fm/GareyJ79}):
$$  \Delta_0^P = \Sigma_0^P = \Pi_0^P = P$$
and, for all $k>0$
$$  \Delta_{k+1}^P = P^{\Sigma_k^P},\ \ \Sigma_{k+1}^P = NP^{\Sigma_k^P},\ \ \Pi_{k+1}^P = coNP^{\Sigma_k^P},$$
where, $NP = \Sigma_1^P$, $coNP = \Pi_1^P$, and $\Delta_2=P^{NP}$. 

In general, $P^C$ (resp. $NP^C$) denotes the class of problems that can be solved in polynomial time on a deterministic (resp. nondeterministic) Turing machine with an oracle in the class $C$. 
Note that, the usage of an oracle $O\in C$ for solving a problem $\pi$ is referred to as a subroutine call, during the evaluation of $\pi$, to $O$. The latter is evaluated in a unit of time. 
\subsection{Answer Set Programming}\label{sec:ext_prel_asp}

\paragraph{\textbf{Syntax.}}
A term is a constant (i.e., an integer or a string starting with lowercase letter) or a variable (i.e., a string starting with uppercase letter).
A \textit{standard atom} is an expression of the form $p(\vec{t})$ with $\vec{t}=t_1,\ldots,t_n$ being a list of terms and $p$ being a \textit{predicate} of arity $n\ge0$. A standard atom $p(\vec{t})$ is \textit{ground} if all terms in $\vec{t}$ are constants. 
A \textit{standard literal} is either a standard atom $a$ or its negation $\naf a$.
An \textit{aggregate element} is a pair $t_1,\ldots,t_n : conj$ where $t_1, \ldots, t_n$ is a list of terms and $conj$ is a conjunction of standard literals. An \textit{aggregate atom} is of the form $f\{e_1, \ldots, e_n\}\prec t$, where $f \in \{\#count,\#sum\}$ is an \textit{aggregate function}, $\prec \in \{<, \le, >, \ge, =\}$ is a comparison operator, $t$ is a term called \textit{guard}, and $e_1,\ldots,e_n$ is a list of aggregate elements. An \textit{atom} can be either a standard atom or an aggregate atom. 
A \textit{literal} is either an atom (positive literal) or its negation (negative literal). The \textit{complement} of a literal $l$ is denoted by $\overline{l}$. The complement of a literal $l=a$ is $\naf a$, while the complement of a literal $l=\naf a$ is $a$. Given a set of literals $L$, $L^+$ (resp. $L^-$) denotes the set of positive (resp. negative) literals appearing in $L$.
A \textit{normal rule} is an expression of the form
$h \leftarrow l_1, \ldots, l_n$
where $h$ is standard atom referred to as \textit{head}, and $l_1,\ldots,l_n$ with $n\ge0$ is a conjunction of literals referred to as \textit{body}. A normal rule with empty body is called \textit{fact}, while a normal rule with empty head is called \textit{hard constraint} or \textit{strong constraint}.
A \textit{weak constraint} is an expression of the form
$\leftarrow^w l_1, \ldots, l_n\ [w@l,\vec{t}]$
where $l_1, \ldots, l_n$ are literals referred to as body, $w$ and $l$ are terms, and $\vec{t}= t_1, \ldots, t_m$ is a possibly empty list of terms.
A \textit{rule} $r$ is either a normal rule or a weak constraint.
Given a rule $r$, $H_r$ denotes the set of atoms appearing in the head of $r$ while $B_r$ denotes the set of atoms appearing in the body of $r$.
Given an expression $\epsilon$ (atom, rule, etc.), $\mathcal{V}(\epsilon)$ denotes the set of variables appearing in $\epsilon$.
For a normal rule $r$ the \textit{global variables} of $r$ are the variables appearing in $H_r$, in standard literals in $B_r$ or as the guard of aggregate literals of the form $f\{e_1, \ldots, e_n\}=V$.
A normal rule $r$ is \textit{safe} if ($i$) its global variables appear at least in one positive standard literal, and ($ii$) each variable appearing in an aggregate element $e$ is either a global variable or appears in some positive literal in $e$.
A weak constraint $r$ is \textit{safe} if it respects the safety condition for normal rules and variables in $\mathcal{V}(w) \cup \mathcal{V}(l) \cup \mathcal{V}(\vec{t})$ also appear in at least one positive standard literal in the body.
A \textit{program} $P$ is a set of safe rules.
Given a program $P$, $\mathcal{R}(P)$ and $\mathcal{W}(P)$ denote the sets of normal rules and weak rules appearing in $P$, respectively, while $\heads{P}$ denotes the set of atoms appearing as heads of rules in $P$.
Given an expression $\epsilon$ (atom, rule, program, etc.), $at(\epsilon)$ denotes the set of atoms appearing in $\epsilon$.
A \textit{choice rule} is of the form $\{e_1;\ldots;e_n\}\leftarrow l_1,\ldots, l_m$, where $e_1;\ldots;e_n$ are called \textit{choice elements} and $l_1,\ldots,l_m$ are literals. A choice element $e_i$ is of the form $a^i:b_1^i, \ldots, b_k^i$ with $a^i$ being a standard atom and $b_1^i, \ldots, b_k^i$ being standard literals. A program $P$ with choice rules can be rewritten into a program without choice rules. More in detail, each choice element $e_i$ can be seen as the pair of rules $a^i\leftarrow b_1^i, \ldots b_m^i, l_1, \ldots l_m, \naf na^i$ and $na^i \leftarrow b_1^i, \ldots b_m^i, l_1, \ldots l_m, \naf a^i$, with $na^i$ being a fresh predicate not appearing in $P$.

\paragraph{\textbf{Semantics.}}
Given an ASP program $P$, the \textit{Herbrand Universe}, $HU_P$, of $P$ is the set of all constants appearing in $P$; the \textit{Herbrand Base}, $B_P$, is the set of all possible ground atoms that can be constructed using predicates from $P$ and constants from $HU_B$; $ground(P)$ denotes the set of all ground rules that can be obtained from $P$ by proper substitutions of variables in $P$ with constants in $HU_P$.
An \textit{interpretation} $I \subseteq HB_P$ is a set of standard atoms. A ground standard literal $l=a$ (resp. $l=\naf a$) is true w.r.t. $I$ if $a \in I$ (resp. $a \notin I$), false otherwise.
A conjunction of standard literals $conj$ is true w.r.t. $I$ if all the literals in $conj$ are true w.r.t. $I$, false otherwise.
For a given set of aggregate elements $S=\{e_1; \ldots; e_n\}$, $eval(S,I)$ denotes the set of tuples $(t_1, \ldots, t_m)$ such that there exists an aggregate element $e_i \in S$ of the form $t_1, \ldots, t_m : conj$ and $conj$ is true w.r.t. $I$; $I(S)$ denotes the multi-set $[t_1 \vert (t_1, \ldots, t_m) \in eval(S,I)]$. A ground aggregate literal of the form $f\{e1; \ldots; e_n\}\prec t$ (resp. $\naf f\{e1; \ldots; e_n\}\prec t$) is true w.r.t. $I$ if $f(I({e_1; \ldots; e_n}))\prec t$ holds (resp. does not hold); otherwise is false.
An \textit{interpretation} $I$ is an \textit{answer set} of a program $P$ iff (i) $I$ is a model of $P$, namely for each rule $r \in ground(P)$ either the head of $r$ is true w.r.t. $I$ or the body of $r$ is false w.r.t. $I$; and (ii) $I$ is a minimal model of its FLP-reduct~\cite{DBLP:journals/ai/FaberPL11}.
Let $AS(P)$ be the set of answer set of a program $P$ then $P$ is coherent iff $AS(P)\neq \emptyset$. 
For a program $P$ and an interpretation $I$, let the set of weak constraint violations be 
$ws(P,I) = \{(w,l,T) \mid \ \leftarrow^{\omega} b_1,\ldots,b_m\ [w@l,\vec{t}] \in ground(P)$ and $b_1,\ldots,b_m$ are true w.r.t. $I\}$, then the cost function of $P$ is defined as
$\mathcal{C}(P,I,k) = \sum_{(w,k,T) \in ws(P,I)} w$ for every integer $k$.
Let $M_1,M_2 \in AS(P)$ then $M_1$ is \textit{dominated} by $M_2$ if there exists an integer $l$ such that $\mathcal{C}(P,M_1,l)>\mathcal{C}(P,M_2,l)$ and for each $l'>l$, $\mathcal{C}(P,M_1,l') = \mathcal{C}(P,M_2,l')$.
Let $M \in AS(P)$ then $M$ is an \textit{optimal answer set} iff $M$ is not dominated by any $M' \in AS(P)$. We denote by $OptAS(P)$ the set of optimal answer set of $P$.

\section{Full proof of Complexity Results}\label{sec:complexity_res}
The coherence problem for \ASPQW programs has been investigated in recent years~\cite{DBLP:journals/tplp/MazzottaRT24}, but several complexity-theoretic results are still missing.
\citeN{DBLP:journals/tplp/MazzottaRT24} studied the complexity of \ASPQW when weak constraints are used both globally (i.e. to define optimal quantified answer sets) and locally (i.e. within the different subprograms). In particular, the following results have been established. 
\begin{theorem}[\updated{Thm 2 by}~\citeN{DBLP:journals/tplp/MazzottaRT24}]
	The coherence problem of a 2-\ASPQ with weak constraints is in: $(i)$ $\Sigma_3^P$ for existential programs; $(ii)$ $\Pi_3^P$ for universal programs.
\end{theorem}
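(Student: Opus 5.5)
The statement to prove is the cited result from prior work: coherence of 2-\ASPQ with weak constraints is in $\Sigma_3^P$ (existential) and $\Pi_3^P$ (universal). The plan is a direct guess-and-check argument that follows the semantics clause by clause, using only that deciding optimality of an answer set of an ordinary ASP program with weak constraints is in $\mathrm{coNP}^{NP}=\Pi_2^P$. That bound holds because $M\notin OptAS(P)$ iff some $M'\in AS(P)$ dominates $M$. Guessing $M'$ and checking $M'\in AS(P)$ is in $NP$ for normal programs and in $NP^{NP}$ in general, since minimality of the reduct is a coNP check. Comparing the level-wise costs of $M$ and $M'$ takes polynomial time, because the grounding is given and the costs are sums of polynomially many weights.

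For the existential case $\exists^{st}P_1\Box_2P_2:C:C^\omega$, I would guess $M_1\subseteq B_{P_1}$ and verify two things. First, $M_1\in OptAS(P_1)$, which costs one $\Pi_2^P$ call. Second, $\Box_2 P_2\cup\fix{P_1}{M_1}:C$ is coherent, where $C^\omega$ plays no role in coherence.
\begin{itemize}
\item If $\Box_2=\forall^{st}$, this check is the statement ``for all $M_2$, if $M_2\in OptAS(P_2\cup\fix{P_1}{M_1})$ then $C\cup\fix{}{M_2}$ is coherent.'' The check on $C$ is polynomial because $C$ is stratified. The complement asks for an $M_2$ that is optimal and violates $C$, which is $NP^{\Pi_2^P}=\Sigma_3^P$. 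So the check is in $\Pi_3^P$, and a $\Sigma_3^P$ machine cannot simply absorb it.
\end{itemize}
This case is the main obstacle, and I would handle it by restructuring the quantifiers. An $M_2$ violating $C$ witnesses failure only if no $M_2'$ dominates it, so the failure condition reads $\exists M_2\,\forall M_2'$, a $\Sigma_2^P$ predicate in $(M_1,M_2)$ plus reduct-minimality checks. Its negation, the check we need, is therefore in $\Pi_2^P$ relative to $M_1$. The full condition is then: exists $M_1$ such that [$M_1$ optimal, $\Pi_2^P$] and [$\Pi_2^P$ check]. That is an $NP$ machine with a $\Sigma_2^P$ oracle, hence $\Sigma_3^P$. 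Reduct minimality for $M_2'$ contributes only one more coNP layer inside the universal block, so it stays within the bound.

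\begin{itemize}
\item If $\Box_2=\exists^{st}$, the check is: exists $M_2$ that is optimal and satisfies $C$. This is $\exists\forall$, hence $\Sigma_2^P$, so the whole condition is again $NP^{\Sigma_2^P}=\Sigma_3^P$.
\end{itemize}

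The universal case is dual. $\forall^{st}P_1\Box_2P_2:C$ is incoherent iff there exists $M_1\in OptAS(P_1)$ such that the inner program is incoherent, and by the same counting the inner incoherence is in $\Sigma_2^P$ or $\Pi_2^P$. Incoherence is therefore in $\Sigma_3^P$, so coherence is in $\Pi_3^P$. The only delicate point throughout is to keep the count of alternations at most three. This holds because each oracle query (optimality of $M_1$, the existence or absence of a bad or good optimal $M_2$) is at most second level, and all of them are answered by an $NP$ machine querying a $\Sigma_2^P$ oracle.
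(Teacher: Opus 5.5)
Your argument is correct for the setting of this paper. The paper itself gives no proof of this statement: it quotes it from Mazzotta et al. It only remarks, after Theorem~\ref{thm:membership}, that their translation introduces an extra quantifier, and that this is what pushes the bound to the third level. Your direct count reaches the same bound without any translation. The idea that carries it is writing the failure of the inner $\forall^{st}$ check as $\exists M_2\,\forall M_2'$, i.e.\ an answer set that violates $C$ and is dominated by no answer set. This is a $\Sigma_2^P$ predicate because checking $M\in AS(P)$ is polynomial for normal programs. An $NP$ machine with a $\Sigma_2^P$ oracle can then guess $M_1$ and settle both the optimality of $M_1$ and the inner check.

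What the paper actually proves, in Theorem~\ref{thm:membership}, is the sharper $\Sigma_2^P$/$\Pi_2^P$ bound, which implies this statement trivially. It removes your extra alternation by moving $sat(clone(P_2))$ into the first program. A candidate answer set $M_2^c$ of the clone is then guessed together with $M_1$, or the dummy $\{sat\}$ when $P_2\cup\fix{P_1}{M_1}$ is incoherent. The universal block only has to do two things:
\begin{itemize}
\item check that no answer set dominates $M_2^c$; the constraint $\leftarrow dom_{clone}$ does this and so certifies that $M_2^c$ is optimal;
\item impose $C$ only on answer sets whose level-wise cost equals that of $M_2^c$, i.e.\ exactly the optimal ones.
\end{itemize}
Both are polynomial tests. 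Your route is elementary and self-contained but stops at the third level. The paper's route gains tightness at the price of a program transformation and its correctness proof.

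One correction to an aside. You claim that reduct minimality for $M_2'$ contributes ``only one more coNP layer inside the universal block.'' That is false whenever answer-set checking is coNP-hard, as for disjunctive programs or non-convex aggregates under the FLP reduct. There $M_2'\in AS(\cdot)$ occurs as the antecedent of an implication under $\forall M_2'$, so its negation adds an existential (NP) layer. The failure condition then becomes $\Sigma_3^P$, and your count yields only $\Sigma_4^P$. This does not affect the statement, since the programs here are normal and answer-set checking is polynomial. You should restrict the argument to that case rather than claim it in general.
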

\begin{theorem}[\updated{Thm 3 by}~\citeN{DBLP:journals/tplp/MazzottaRT24}]
	The coherence problem of a 2-\ASPQ with weak constraints is hard for: $(i)$ $\Sigma_2^P$ for existential programs; $(ii)$ $\Pi_2^P$ for universal programs.
\end{theorem}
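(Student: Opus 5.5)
Since the class of programs with weak constraints contains the plain programs as the special case $\mathcal{W}(P_1)=\mathcal{W}(P_2)=C^{\omega}=\emptyset$, the plan is to prove hardness already for plain alternating programs. In that case $OptAS(P)=AS(P)$ for every subprogram, because no answer set can dominate another when all costs are zero. So the semantics coincides with that of plain 2-\ASPQ, and it suffices to reduce the canonical complete QBF problems of the second level to coherence. One could simply invoke the result of \citeN{DBLP:journals/tplp/AmendolaRT19} for plain alternating programs. I would nevertheless give the short direct reduction to keep the argument self-contained.

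\textbf{Existential case.} Reduce from the truth of $\Phi=\exists X\forall Y\,\phi$ with $\phi$ in 3-DNF, which is $\Sigma_2^P$-complete~\cite{schaefer2002completeness}. Build $\exists^{st} P_1\forall^{st} P_2 : C$ as follows.
\begin{itemize}
\item $P_1$ guesses a truth assignment to $X$ with the pair of rules $x\leftarrow\naf nx$ and $nx\leftarrow\naf x$ for each $x\in X$.
\item $P_2$ guesses $Y$ in the same way.
\item $C$ contains one rule $sat\leftarrow \ell_1,\ell_2,\ell_3$ per term of $\phi$, with each literal $\ell_i$ translated to $x$ or $nx$ (resp. $y$ or $ny$), plus the hard constraint $\leftarrow\naf sat$.
\end{itemize}
$C$ is stratified: $sat$ depends only on guessed atoms, and negation is applied only to $sat$ in the constraint. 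By construction, $\exists^{st}P_2\cup\fix{P_1}{M_1}$-answer sets are exactly the extensions of the $X$-assignment encoded by $M_1$ with an arbitrary $Y$-assignment. Such an answer set satisfies $C$ iff $\phi$ is true under the combined assignment. Hence there is $M_1\in AS(P_1)$ all of whose extensions satisfy $C$ iff $\Phi$ is true. The reduction is clearly polynomial.

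\textbf{Universal case.} Reduce from $\forall X\exists Y\,\phi$ with $\phi$ in 3-CNF, which is $\Pi_2^P$-complete, to $\forall^{st}P_1\exists^{st}P_2:C$.
\begin{itemize}
\item The guesses in $P_1$ and $P_2$ are the same as in the existential case.
\item $C$ contains, for each clause $\ell_1\vee\ell_2\vee\ell_3$, the hard constraint $\leftarrow\overline{\ell_1},\overline{\ell_2},\overline{\ell_3}$, again translated to the corresponding atoms.
\end{itemize}
An extension satisfies $C$ iff it satisfies every clause of $\phi$. So the program is coherent iff every $X$-assignment has a satisfying $Y$-extension.

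The only delicate point is checking that the \ASPQW semantics degenerates correctly. Three things need to hold:
\begin{enumerate}
\item $OptAS$ equals $AS$ when there are no weak constraints.
\item $C$ respects the syntactic requirements, namely it is stratified with hard constraints only.
\item $\fix{P_1}{M_1}$ faithfully transmits the outer assignment to $P_2$.
\end{enumerate}
All three are immediate. Therefore hardness for plain programs lifts to programs with weak constraints, yielding the statement for both existential and universal programs.
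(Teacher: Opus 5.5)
Your proposal is correct and follows essentially the paper's route. In the alternating case of its own Hardness theorem, which subsumes this cited result, the paper likewise treats plain alternating \TWOASPQ programs as the special case $\mathcal{W}(P_1)=\mathcal{W}(P_2)=\emptyset$ and inherits $\Sigma_2^P$/$\Pi_2^P$-hardness from Amendola et al.; you additionally spell out the standard 2-QBF reductions. The stray ``$\exists^{st}$'' in front of $P_2\cup\fix{P_1}{M_1}$ is only a typo. The paper's stronger version also covers non-alternating quantifier pairs, but that is not needed for the statement as given.
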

Starting from this results we further studied the complexity for 2-\ASPQW to provide complete results for such class of \ASPQW programs. 
More precisely, we focus on the general case of \ASPQW programs where weak constraints appears locally within each ASP programs. 
To present our result we first introduce some notation which will be instrumental in the proofs.
First of all, we recall the notion of splitting set by~\citeN{DBLP:conf/iclp/LifschitzT94}.
\begin{definition}
	Let $P$ be an ASP program and $U \subseteq B_P$. Then $U$ is a \textit{splitting set} for $P$ if for each rule $r \in P$ such that $\heads{r} \subseteq U$ then also $at(r) \subseteq U$. 
\end{definition}
Intuitively, a splitting set $U$ for a program $P$ allows to split the program $P$ into two programs referred to as \textit{bottom}, denoted by $b_U(P)$, and \textit{top}, denoted by $t_U(P)$. More precisely, $b_U(P)$ contains all those rules $r \in P$ such that $at(r)\subseteq U$, and $t_U(P)$ contains all the remaining rules, that is, $t_U(P) = P \setminus b_U(P)$.
Let $U$ and $X\subseteq U$ be two set of atoms $e_U(P,X)$ denotes the program obtained from rules $r \in P$ such that $B_r^+ \cap U \subseteq X$ and $B_r^-\cap X = \emptyset$, by removing all literals in $B_r$ whose atom is in $U$.
By exploiting the notion of splitting set, the task of computing an answer set of $P$ can be divided  in two stages. The following theorem formalizes such intuition.
\begin{theorem}[\citeN{DBLP:conf/iclp/LifschitzT94}]
	Let $P$ be a program and $U$ be a splitting set for $P$ then $M \in AS(P)$ if and only if $M = X\cup Y$ where $X \in AS(b_U(P))$ and $Y \in e_U(t_U(P),X)$.
\end{theorem}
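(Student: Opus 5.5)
Since the statement is the splitting set theorem of \citeN{DBLP:conf/iclp/LifschitzT94}, which the paper simply imports, I would reprove it directly from the answer set definition. I would work with ground normal programs and the GL-reduct; aggregates under the FLP-reduct behave the same way because the splitting condition makes the reduct decompose in the same fashion. I read the conclusion as $Y \in AS(e_U(t_U(P),X))$ with $Y \cap U = \emptyset$. That disjointness holds because every rule of $t_U(P)$ has its head outside $U$: if its head were in $U$, the splitting condition would put all its atoms in $U$, and the rule would then belong to $b_U(P)$.

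First I would set up the key observation. For any interpretation $M$, write $X = M \cap U$ and $Y = M \setminus U$. Then $P^M$ splits: $b_U(P)^M = b_U(P)^X$, since those rules mention only atoms of $U$, and every rule of $t_U(P)^M$ has a head outside $U$. Next I would show that $e_U(t_U(P),X)^Y$ is exactly what one gets from $t_U(P)^M$ by first deleting the rules whose $U$-body is falsified by $X$ and then stripping the remaining $U$-literals. This is a rule-by-rule check: negative literals on $U$ are evaluated by $X$, and negative literals outside $U$ are evaluated by $Y$.

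($\Rightarrow$) Let $M \in AS(P)$. Then $X$ is a model of $b_U(P)^X$. If some $X' \subsetneq X$ were also a model, then $X' \cup Y$ would be a model of $P^M$, because rules in the top part whose positive $U$-body was satisfied by $X$ can only lose support under $X'$. That contradicts minimality, so $X \in AS(b_U(P))$. Similarly, $Y$ models $e_U(t_U(P),X)^Y$. If some $Y' \subsetneq Y$ were also a model, then $X \cup Y'$ would model $P^M$: the bottom is unaffected, and each top rule is either removed or reduces to a rule of $e_U(\cdot)^Y$ satisfied by $Y'$. Again minimality is contradicted.

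($\Leftarrow$) Given $X$ and $Y$ as in the statement, I would first check that $M = X \cup Y$ models $P^M$, using the decomposition above. For minimality, suppose $N \subseteq M$ models $P^M$. Then $N \cap U$ models $b_U(P)^X$, so $N \cap U = X$ by minimality of $X$. Knowing that, $N \setminus U$ models $e_U(t_U(P),X)^Y$, so $N \setminus U = Y$. Hence $N = M$.

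The main obstacle is the bookkeeping in the ($\Rightarrow$) minimality argument for $X$. One has to make sure that shrinking $X$ to $X'$ does not create a top rule whose body becomes true while its head is false. This works because positive $U$-literals in the reduct are monotone and negative literals have already been fixed by $M$.
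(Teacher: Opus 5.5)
The paper does not prove this statement. It imports it from Lifschitz and Turner, who prove it in greater generality, and only illustrates it with an example, so there is no in-paper route to compare against. Your argument is a correct, self-contained proof for ground normal programs under the GL-reduct. Reading the conclusion as $Y\in AS(e_U(t_U(P),X))$ is the right repair of the missing $AS$.

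The two facts everything rests on are the ones you isolate:
\begin{itemize}
\item $b_U(P)^M=b_U(P)^X$;
\item $e_U(t_U(P),X)^Y$ is $t_U(P)^M$ restricted to rules with $B_r^+\cap U\subseteq X$ and stripped of $U$-atoms. Here $X\subseteq U$ and $Y\cap U=\emptyset$ let you split $B_r^-\cap M=\emptyset$ into the two filters applied by $e_U$ and by the reduct w.r.t.\ $Y$.
\end{itemize}
Both minimality arguments then go through as you describe.

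Two small improvements:
\begin{itemize}
\item In the $(\Rightarrow)$ minimality step for $X$, state explicitly what makes it work: heads of top rules lie outside $U$. If $X'\cup Y$ satisfies the positive body of such a rule in $P^M$, then so does $M$, so the head is in $M\setminus U=Y\subseteq X'\cup Y$. Monotonicity of positive bodies alone is not the reason.
\item The paper's literal definition ($\heads{r}\subseteq U$, vacuous for an empty head) forces every constraint into $b_U(P)$. The usual formulation triggers the condition only when the head meets $U$, so constraints may sit in the top part. Your argument covers both readings, since in $P^M$ a top constraint whose positive body holds in a subset of $M$ would already be violated by $M$.
\end{itemize}

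Remove the aside that aggregates under the FLP-reduct ``behave the same way.'' You do not argue it, and with the paper's $e_U$, which only strips standard literals whose atom is in $U$, it is false. Take $P=\{a\leftarrow;\ b\leftarrow \#count\{1:a\}\geq 1\}$ and $U=\{a\}$. The aggregate atom is not in $U$, so $e_U(t_U(P),\{a\})$ keeps it unevaluated. That program has the single answer set $\emptyset$, giving $X\cup Y=\{a\}$, whereas $AS(P)=\{\{a,b\}\}$. Handling aggregates would require a partial evaluation that substitutes the truth values of $U$-atoms inside aggregate elements. For the statement as written, with bodies given by $B_r^+$ and $B_r^-$ over standard atoms, your normal-program proof is all that is needed.
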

\begin{example}
	Let $P$ be the following program:
	$$
	\begin{array}{l}
		a\leftarrow b, not\ c\\
		b\leftarrow c, not\ a\\
		c\leftarrow 
	\end{array}
	$$
	Then $U = \{c\}$ is a splitting set for $P$, with $b_U(P) = \{c\leftarrow\}$.
	Here the program $b_U(P)$ has only one answer set that is $X = \{c\}$ and the program $e_U(t_U(P),X)$ contains only the rule $b\leftarrow not\ a$.
	As a result, the program $e_U(t_U(P),X)$ has only one answer set that is $Y=\{b\}$, and so, $M = X \cup Y = \{c,b\}$ is the unique answer set of $P$.
\end{example}
In what follows, we leverage the notion of splitting set to obtain some relevant properties of the program transformation we are going to introduce.
The first program transformation guarantees the coherence of an ASP program.
\begin{definition}\label{def:sat_prg_app}
	Let $P$ be an ASP program then $sat(P)$ denotes the program:
	$$
	sat(P) = \left\{\begin{array}{lr}
		H_r \leftarrow B_r, not\ sat &  \forall r \in P\\
		\{sat\}\leftarrow& 
	\end{array}\right.
	$$
\end{definition}
Intuitively, for each ASP program $P$, the transformation $sat(P)$ prevents $P$ from being incoherent as it introduces a choice rule over the fresh atom $sat$ which controls the activation of the rules in $P$. If $sat$ is chosen as true, then all the rules of $P$ are trivially satisfied as $not\ sat$ appears in all rules' body. On the contrary, if $sat$ is chosen as false, then $sat(P)$ is coherent if and only if $P$ is coherent as $not\ sat$ can be removed from the body of each rule.
\begin{obs}
	Let $P$ be an program, then $AS(sat(P)) = \{\{sat\}\}\cup AS(P)$.
\end{obs}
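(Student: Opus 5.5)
The statement is the Observation that $AS(sat(P)) = \{\{sat\}\}\cup AS(P)$. I will prove it with the splitting set theorem of \citeN{DBLP:conf/iclp/LifschitzT94}, taking $U=\{sat\}$. This is a splitting set for $sat(P)$: the only rule whose head is contained in $U$ is the choice rule $\{sat\}\leftarrow$. Read as the pair $sat\leftarrow \naf nsat$ and $nsat\leftarrow \naf sat$, its atom set also contains the fresh atom $nsat$, so I will put $nsat$ into $U$ as well and project it away at the end. Equivalently, I can treat the choice rule natively, so that $b_U(sat(P))=\{\{sat\}\leftarrow\}$. In either reading the bottom part has exactly two answer sets (projected onto $sat$): $X=\{sat\}$ and $X=\emptyset$.

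Next I compute $e_U(t_U(sat(P)),X)$ for each bottom answer set. The top part consists of the rules $H_r\leftarrow B_r,\naf sat$ for $r\in P$.
\begin{itemize}
\item If $X=\{sat\}$, every such rule has $sat\in B^-_r\cap X$, so all of them are deleted. The top program is empty, its unique answer set is $Y=\emptyset$, and we obtain the answer set $\{sat\}$.
\item If $X=\emptyset$, the condition $B^-_r\cap X=\emptyset$ holds for every rule, and removing the literal $\naf sat$ returns exactly $r$. So $e_U(t_U(sat(P)),\emptyset)=P$, and the answer sets obtained are exactly $AS(P)$ (with $nsat$ added, if the choice rule is unfolded).
\end{itemize}
Hard constraints in $P$ are handled in the same way: they are rules with empty head, and they are deleted or kept exactly as above.

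By the splitting theorem, $AS(sat(P))$ is the union of these two cases, which gives the claim. Strictly, the equality holds up to the auxiliary atom $nsat$ introduced by the choice rewriting, and I will state that explicitly.

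The main obstacle is this bookkeeping around the choice rule and its auxiliary atom, together with making sure the splitting theorem still applies when $P$ contains aggregates and weak constraints. For weak constraints I will note that they do not affect $AS(\cdot)$ at all, so they can be ignored. For aggregates I will argue that they mention only atoms of $P$, which lie outside $U$, so the splitting argument is unaffected.
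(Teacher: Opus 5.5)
Your proof is correct and is essentially the paper's own reasoning made rigorous. The paper justifies the Observation only informally, by the same case split on whether $sat$ is true (every rule of $P$ is switched off, leaving just $\{sat\}$) or false ($\naf sat$ can be dropped, leaving $P$), with $nsat$ treated as a hidden atom, and your splitting with $U=\{sat,nsat\}$ formalizes exactly this through the splitting theorem the paper uses in the next lemma; the only care needed is to use the standard Lifschitz--Turner condition (if $H_r\cap U\neq\emptyset$ then $at(r)\subseteq U$), since under the paper's literal wording ``$\heads{r}\subseteq U$'' the empty-headed constraints $\leftarrow B_r,\naf sat$ of $sat(P)$ would be forced into $U$, whereas you correctly place them in the top part.
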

\begin{example}
	Let $P$ and $sat(P)$ be the following programs:
	
	\begin{minipage}{.49\textwidth}
		$$
		P = \left\{\begin{array}{l}
			a \leftarrow not\ b  \\
			b \leftarrow not\ a  \\
			c \leftarrow not\ d  \\
			d \leftarrow not\ c  \\ 
		\end{array}\right\}
		$$    
	\end{minipage}
	\hfill
	\begin{minipage}{.49\textwidth}
		$$
		sat(P) = \left\{\begin{array}{l}
			\{sat\}\leftarrow\\
			a \leftarrow not\ b,not\ sat \\
			b \leftarrow not\ a,not\ sat \\
			c \leftarrow not\ d,not\ sat \\
			d \leftarrow not\ c,not\ sat \\ 
		\end{array}\right\}
		$$    
	\end{minipage}
	Here it is important to observe that the choice rule $\{sat\}\leftarrow$ is a shorthand for the normal rules $sat \leftarrow not\ nsat$ and $nsat\leftarrow not\ sat$ where $nsat$ is a fresh atom not appearing anywhere else. 
	In particular, $\{sat\}$ is trivially an answer set. On the other hand, if we pick $\{a,c\} \in AS(P)$ it is to see the $\{nsat, a, c\} \in AS(sat(P))$. 
	In general, since the atom $nsat$ does not appear anywhere else then it is customary in the literature to consider it as an hidden atom and so, with a slightly abuse of notation we can state that $AS(sat(P)) = \{\{sat\}\}\cup AS(P)$.    
\end{example}
Thanks to such a property, we can deal with incoherent ASP programs and preserve the \ASPQ semantics by proper program transformations. 
\begin{lemma}\label{lemma:prg_union_app}
	Let $P_1$ and $P_2$ be two ASP programs such that $\heads{P_2}\cap at(P_1) = \emptyset$, then
	$M \in AS(P_1 \cup sat(P_2))$ iff $M$ is of the form $M_1 \cup M_2$ where $M_1 \in AS(P_1)$ and $M_2\subseteq \heads{sat(P_2)}$ is either equal to $\{sat\}$ or $M_2$ is such that $M_1\cup M_2 \in AS(P_2\cup\fix{P_1}{M_1})$.
\end{lemma}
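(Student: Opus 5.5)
The proof applies the Splitting Set Theorem of \citeN{DBLP:conf/iclp/LifschitzT94} to $P_1 \cup sat(P_2)$ with $U = at(P_1)$, and then uses the Observation on $sat(\cdot)$.

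\textbf{Step 1: $U$ is a splitting set.} Every rule of $P_1$ has all its atoms in $U$. Every rule of $sat(P_2)$ has its head in $\heads{P_2}\cup\{sat,nsat\}$, and these sets are disjoint from $at(P_1)$, by hypothesis for $\heads{P_2}$ and by freshness for $sat,nsat$. Hence no rule of $sat(P_2)$ has its head in $U$. So $U$ is a splitting set, with $b_U(P_1\cup sat(P_2)) = P_1$ and $t_U(P_1\cup sat(P_2)) = sat(P_2)$; rules of $P_2$ with empty head (constraints) go to the top part as well.

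\textbf{Step 2: apply the Splitting Set Theorem.} It gives $M\in AS(P_1\cup sat(P_2))$ iff $M = M_1\cup M_2$ with $M_1\in AS(P_1)$ and $M_2 \in AS(e_U(sat(P_2),M_1))$.

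\textbf{Step 3: rewrite the evaluated top part.} Partial evaluation w.r.t.\ $M_1$ commutes with the $sat$ wrapper. The rules $\{sat\}\leftarrow$ (i.e.\ $sat\leftarrow \naf nsat$ and $nsat\leftarrow\naf sat$) contain no atoms of $U$ and are unchanged. A rule $H_r\leftarrow B_r,\naf sat$ is kept, with its $U$-literals removed, exactly when the corresponding rule $r$ of $P_2$ survives in $e_U(P_2,M_1)$. Thus
\[
e_U(sat(P_2),M_1) = sat(e_U(P_2,M_1)).
\]
By the Observation, $AS(sat(e_U(P_2,M_1))) = \{\{sat\}\}\cup AS(e_U(P_2,M_1))$, reading $nsat$ as a hidden atom as in the example. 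So either $M_2=\{sat\}$ or $M_2\in AS(e_U(P_2,M_1))$. In both cases $M_2\subseteq \heads{sat(P_2)}$, since answer sets only contain head atoms.

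\textbf{Step 4: translate the second case to $P_2\cup\fix{P_1}{M_1}$.} It remains to show that $M_2\in AS(e_U(P_2,M_1))$ iff $M_1\cup M_2\in AS(P_2\cup\fix{P_1}{M_1})$, for $M_2\subseteq\heads{P_2}$. Take the splitting set $U' = B_{P_1}$ (or $U$ extended with the atoms of $P_1$'s Herbrand base) for $P_2\cup\fix{P_1}{M_1}$. The bottom part is $\fix{P_1}{M_1}$ together with any rules of $P_2$ whose atoms all lie in $U'$. Such rules have empty head, because $\heads{P_2}\cap at(P_1)=\emptyset$, so they are constraints.

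The bottom part $\fix{P_1}{M_1}$ has the unique answer set $M_1$. A constraint is satisfied by $M_1$ iff its evaluated version in $e_U$ does not become an empty-body constraint. Hence both sides coincide, again by the Splitting Set Theorem.

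\textbf{Main obstacle.} The delicate point is this bookkeeping in Step 4. One must align $U=at(P_1)$ with the Herbrand base $B_{P_1}$ used in $\fix{P_1}{M_1}$. One must also treat correctly the constraints of $P_2$ that mention only $P_1$-atoms, which fall into the bottom part. I would handle this by noting that atoms of $B_{P_1}\setminus at(P_1)$ are forced false by $\fix{P_1}{M_1}$ and never occur in heads, so they can be added to $U$ without changing anything.
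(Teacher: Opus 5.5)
Your proposal is correct and is essentially the paper's proof. Like the paper, you split $P_1\cup sat(P_2)$ on $U=at(P_1)$, note $e_U(sat(P_2),M_1)=sat(e_U(P_2,M_1))$, apply the Observation on $sat(\cdot)$, and split $P_2\cup\fix{P_1}{M_1}$ once more. Your handling of the constraints of $P_2$ over $P_1$-atoms, which land in the bottom part, is actually more careful than the paper's claim that this bottom part is exactly $\fix{P_1}{M_1}$.

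One caveat, which the paper's proof shares. Your assertion that atoms of $B_{P_1}\setminus at(P_1)$ never occur in heads needs $\heads{P_2}\cap B_{P_1}=\emptyset$, not merely $\heads{P_2}\cap at(P_1)=\emptyset$. For $P_1=\{p(a)\leftarrow,\ q(b)\leftarrow\}$ and $P_2=\{p(b)\leftarrow\}$, the answer set $\{p(a),q(b),p(b)\}$ of $P_1\cup sat(P_2)$ is not of the stated form, because $\fix{P_1}{M_1}$ contains $\leftarrow p(b)$. The stronger disjointness does hold where the paper uses the lemma, namely on $clone(P_2)$, whose head predicates are fresh.
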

\begin{proof}
	Since $\heads{P_2}\cap at(P_1) = \emptyset$, then $U = at(P_1)$ is a splitting set of $P = P_1 \cup sat(P_2)$.
	Thus $M \in AS(P)$ iff $M = X \cup Y$, where $X \in AS(b_U(P)) = AS(P_1)$ and $Y \in AS(e_U(t_U(P),X)) = AS(e_U(sat(P_2),X))$.
	Let $X \in AS(P_1)$, then the program $e_U(sat(P_2),X)$ is obtained from rules in $sat(P_2)$ that have no body literals over $at(P_1)$ which are false w.r.t. to $X$ and by removing all the literals over atoms in $at(P_1)$. 
	Since the atom $sat$ does not appear in $P_1$, then the choice rule $\{sat\}$ remains as it is and the literal $not\ sat$ is not removed from any rule. As a result, $e_U(sat(P_2),X)$ is of the form:
	$$
	\begin{array}{lr}
		\{sat\} &  \\
		H_r \leftarrow B_r, not\ sat & \forall r \in e_U(P_2,X) 
	\end{array}
	$$
	Thus, $e_U(sat(P_2),X) = sat(e_U(P_2,X))$, and so $Y \in AS(sat(e_U(P_2,X)))$ iff $Y = \{sat\}$ or $Y \in AS(e_U(P_2,X))$. As a result, $M \in AS(P)$ iff $M = X \cup Y$ where $X \in AS(P_1)$ and $Y$ is either equal to $\{sat\}$ or $Y \in AS(e_U(P_2,X))$.
	
	In particular, if $Y \in AS(e_U(P_2,X))$, then $M = X \cup Y \in AS(P_2\cup \fix{P_1}{X})$ since $U$ is also a splitting set for $P_2' = P_2\cup \fix{P_1}{X}$ and so $M \in AS(P_2')$ iff $X \in AS(b_U(P_2')) = AS(\fix{P_1}{X}) = \{X\}$ and $Y\in AS(e_U(t_U(P_2'),X)) = AS(e_U(P_2,X))$.
	
	Thus, the thesis follows.
\end{proof}
\begin{lemma}\label{lemma:prg_union_weak_app}
	Let $P_1$ and $P_2$ be two ASP programs such that $\heads{P_2}\cap at(P_1) = \emptyset$ and $W_1$ be a set of weak constraints such that $at(W_1)\subseteq at(P_1)$ then
	$M \in OptAS(P_1\cup W_1 \cup sat(P_2))$ iff $M$ is of the form $M_1 \cup M_2$ where $M_1 \in OptAS(P_1\cup W_1)$ and $M_2\subseteq \heads{sat(P_2)}$ is either equal to $\{sat\}$ or $M_2$ is such that $M_1\cup M_2 \in AS(P_2\cup\fix{P_1}{M_1})$.
\end{lemma}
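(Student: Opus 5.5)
The plan is to reduce the statement to Lemma~\ref{lemma:prg_union_app} and then argue directly about costs. The key observation is that the weak constraints in $W_1$ only mention atoms of $P_1$, while the added program $sat(P_2)$ carries no weak constraints at all. Hence, for any answer set $M = M_1 \cup M_2$ of $P_1 \cup sat(P_2)$, the set of violation tuples satisfies $ws(P_1\cup W_1\cup sat(P_2), M) = ws(P_1\cup W_1, M_1)$. The same holds with $P_1$ replaced by $P_1\cup W_1$, since $W_1$ contributes no rules. Consequently, for every level $l$ we have $\mathcal{C}(P_1\cup W_1\cup sat(P_2),M,l) = \mathcal{C}(P_1\cup W_1,M_1,l)$. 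Along the way I would note that weak constraints do not affect $AS(\cdot)$, so Lemma~\ref{lemma:prg_union_app} applies verbatim to $P_1\cup W_1$ and $sat(P_2)$: the condition $\heads{P_2}\cap at(P_1)=\emptyset$ is unchanged, because $W_1$ has no heads and $at(W_1)\subseteq at(P_1)$.

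Next comes the key step, which is surjectivity of the projection onto $M_1$. For every $M_1\in AS(P_1)$, the set $M_1\cup\{sat\}$ is an answer set of $P_1\cup sat(P_2)$ by Lemma~\ref{lemma:prg_union_app}. So the map $M\mapsto M\cap at(P_1)=M_1$ sends $AS(P_1\cup W_1\cup sat(P_2))$ onto $AS(P_1\cup W_1)$, and by the previous paragraph it preserves the cost at every level. Domination depends only on these level-wise costs, so $M$ is dominated by $M'$ in the extended program iff $M_1$ is dominated by $M_1'$ in $P_1\cup W_1$.

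Both directions then follow. ($\Rightarrow$) Take $M$ optimal and decompose it as $M_1\cup M_2$ via Lemma~\ref{lemma:prg_union_app}. If some $M_1'\in AS(P_1\cup W_1)$ dominated $M_1$, then $M_1'\cup\{sat\}$ would dominate $M$, a contradiction. Hence $M_1\in OptAS(P_1\cup W_1)$, and the stated form of $M_2$ comes from the lemma. ($\Leftarrow$) Given $M_1$ optimal and $M_2$ of the stated form, $M=M_1\cup M_2$ is an answer set by Lemma~\ref{lemma:prg_union_app}. Any $M'$ dominating $M$ would project to some $M_1'$ dominating $M_1$, which is impossible.

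The only delicate point I expect is the cost-equality claim. It needs the violation tuples to be determined solely by the atoms of $P_1$, and this is where $at(W_1)\subseteq at(P_1)$ is used. It also relies, as in the paper's convention, on the hidden atom $nsat$ not occurring in any weak constraint. Everything else is bookkeeping on top of the splitting-set lemma.
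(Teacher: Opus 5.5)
Your proposal is correct and follows the same route as the paper's proof. Both decompose via Lemma~\ref{lemma:prg_union_app}, observe that the cost of $M$ is determined by $M_1$ alone because $at(W_1)\subseteq at(P_1)$, and obtain a contradiction from an answer set that dominates $M$. Your version is more complete: it names the dominating witness $M_1'\cup\{sat\}$ explicitly, which the paper leaves unspecified, and it also proves the ($\Leftarrow$) direction, which the paper's proof does not argue.
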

\begin{proof}
	Let $P = P_1 \cup W_1 \cup sat(P_2)$. From Lemma~\ref{lemma:prg_union_app}, $M\in AS(P)$ iff $M$ is of the form $M_1\cup M_2$, where $M_1 \in AS(P_1)$ and $M_2$ is either equal to $\{sat\}$ or $M_2$ is such that $M_1\cup M_2 \in AS(P_2\cup\fix{P_1}{M_1})$.
	Thus, by contradiction, let us assume that there exists $M = M_1 \cup M_2 \in OptAS(P)$ such that $M_1 \notin OptAS(P_1)$.
	At this point, we can construct an answer set $M' = M_1'\cup M_2' \in AS(P)$ such that $M_1 \in OptAS(P_1)$. Since $at(W_1)\subseteq B_{P_1}$ then the cost of $M$ and $M'$ is given, respectively, by $M_1$ and $M_1'$. Since $M_1\notin OptAS(P_1)$ and $M_1' \in OptAS(P_1)$, then $M_1$ is dominated by $M_1'$ and consequently $M$ is dominated by $M'$. This is a contradiction as $M \in OptAS(P)$ and so $M$ cannot be dominated by any answer set of $P$.
	Thus, the thesis follows.
\end{proof}

The next transformation we propose is used to obtain a clone of an ASP program, so that we can duplicate an ASP program in order to compare the cost of two answer sets of a given program. 
To this end, we need a predicate substitution function that maps each predicate $p$ with a fresh ones of the form $p^{\alpha}_\beta$, where $\alpha$ and $\beta$ are strings.
More in detail, let $L$ be a set of literals and $\epsilon$ be an ASP expression (i.e. rule, program, etc.), then $\sigma^{\alpha}_\beta(L,\epsilon)$ denotes the expression obtained from $\epsilon$ by mapping each positive (resp. negative) literal $p(\vec{t}) \in L$ (resp. $\naf p(\vec{t}) \in L$) with $p^{\alpha}_\beta(\vec{t})$ (resp. $\naf p^{\alpha}_\beta(\vec{t})$).
Let $P$ be an ASP program, then $clone(P)=\sigma_{clone}(\heads{P}\cup\overline{\heads{P}}, P)$, which generates a clone of the program $P$.
Intuitively, a clone of an ASP program $P$ is obtained by mapping all the predicates defined in $P$ (i.e. appearing in some rule head) to a clone signature (i.e. $p_{clone}$) not appearing anywhere else; whereas all the other predicates are kept in their original form as they can be seen as input predicates.
As a result, the clone transformation preserves the answer sets of the program $P$ by mapping atoms in $\heads{P}$ to fresh atoms over the clone signature.
\begin{obs}\label{obs:rename_app}
	Let $P$ be an ASP program, then $M \in AS(P)$ if and only if $\{p_{clone}(\ldots)\mid p(\ldots) \in \heads{P}\cap M\}$. 
\end{obs}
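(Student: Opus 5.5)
The plan is to read the (slightly garbled) statement as follows: $M \in AS(P)$ if and only if $\sigma(M) := \{p_{clone}(\vec{t}) \mid p(\vec{t}) \in \heads{P}\cap M\} \cup (M\setminus\heads{P})$ belongs to $AS(clone(P))$. I will prove this by showing that the predicate renaming underlying $clone(\cdot)$ is a bijection on ground atoms that commutes with every ingredient of the answer set semantics.

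\textbf{Step 1: fix the renaming.} Let $\rho$ be the map on ground atoms defined by $\rho(p(\vec{t})) = p_{clone}(\vec{t})$ if $p(\vec{t}) \in \heads{P}$, and $\rho(a)=a$ otherwise. The clone predicates are fresh, so $\rho$ is injective. Moreover, $ground(clone(P)) = \{\rho(r) \mid r \in ground(P)\}$, where $\rho(r)$ applies $\rho$ to every atom of $r$; this includes atoms inside aggregate elements.

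\textbf{Step 2: invariance of truth.} For every interpretation $I$ and every ground literal or aggregate literal $\ell$, $\ell$ is true w.r.t.\ $I$ iff $\rho(\ell)$ is true w.r.t.\ $\rho(I)$. This follows by induction on the structure of literals, using injectivity of $\rho$ for standard atoms; aggregates are covered because $eval(S,I)$ depends only on the truth of the conjunctions. Consequently:
\begin{itemize}
\item $I$ is a model of $ground(P)$ iff $\rho(I)$ is a model of $ground(clone(P))$;
\item the FLP-reduct satisfies $\rho(P^I) = clone(P)^{\rho(I)}$;
\item $\subseteq$-minimality of models is preserved, since $\rho$ is an order isomorphism between the interpretations over $B_P$ and their images.
\end{itemize}
Together these give $I \in AS(P) \iff \rho(I) \in AS(clone(P))$.

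\textbf{Step 3: match with the formula in the statement.} It remains to check that answer sets of $clone(P)$ are exactly the images $\rho(I)$ of interpretations over $B_P$, and that $\rho(M)$ coincides with the set written in the statement. For this I use that answer sets are supported: every atom of an answer set is the head of some applicable rule. Hence $M \subseteq \heads{P}$ and $M\setminus\heads{P}=\emptyset$, so $\rho(M)$ is exactly $\{p_{clone}(\vec{t}) \mid p(\vec{t})\in \heads{P}\cap M\}$. Likewise, every answer set of $clone(P)$ contains only clone atoms, so it lies in the image of $\rho$.

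The only step needing care is Step 2 for aggregates and the reduct, that is, checking that the FLP-reduct commutes with renaming. Since the reduct keeps exactly those rules whose body is true in $I$, this reduces directly to the truth invariance already established, so I expect no genuine obstacle.
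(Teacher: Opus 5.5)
Your proof is correct and is the rigorous version of the paper's one-line justification: the paper gives no proof, only the remark that $clone(\cdot)$ renames the predicates defined in $P$ to fresh ones and so preserves answer sets. Your repair of the garbled statement, adding the term $M\setminus\heads{P}$, is the right one, since the version with only the clone part fails right-to-left when $M$ contains atoms that occur in no head. As a minor tidy-up, define $\rho$ by predicate (rename every atom whose predicate occurs in a head of $P$) rather than by membership of the ground atom in $\heads{P}$; then $ground(clone(P))=\{\rho(r)\mid r\in ground(P)\}$ holds literally for non-ground $P$ and under the paper's predicate-level reading of $clone(\cdot)$.
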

\begin{lemma}\label{lemma:fix_app}
	Let $P_1$ and $P_2$ be two ASP programs such that $\heads{P_2}\cap at(P_1) = \emptyset$, $M_1 \in AS(P_1)$, and $M = M_1\cup M_2$ be an answer set of $P = P_1\cup sat(clone(P_2))$.
	Then, $M' \in AS(P_2\cup\fix{P}{M})$ iff $M'$ is of the form $M_2 \cup M_2'$, with $M_2'\in AS(P_2\cup\fix{P_1}{M_1})$. 
\end{lemma}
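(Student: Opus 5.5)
The plan is to argue, as in the proof of Lemma~\ref{lemma:prg_union_app}, through the splitting set theorem of \citeN{DBLP:conf/iclp/LifschitzT94}, applied to both $P_2\cup\fix{P}{M}$ and $P_2\cup\fix{P_1}{M_1}$. I read the statement with $M_2$ being the part of $M$ outside $at(P_1)$. This part is either $\{sat\}$ or the set of clone atoms of an answer set of $P_2\cup\fix{P_1}{M_1}$; this follows from Lemma~\ref{lemma:prg_union_app} and Observation~\ref{obs:rename_app}. The claim then says that the answer sets of $P_2\cup\fix{P}{M}$ are exactly $M_2\cup M_2'$ with $M_2'\in AS(P_2\cup\fix{P_1}{M_1})$. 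Since $M_1\subseteq M_2'$, this is the same as $M\cup M_2'$.

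First I will fix the atom sets. Let $U = at(P)$, which contains $at(P_1)$, the clone atoms $\sigma_{clone}(\cdot)$ of $\heads{P_2}$, and the fresh atoms $sat$ and $nsat$. By hypothesis $\heads{P_2}\cap at(P_1)=\emptyset$. Clone atoms and $sat$ are fresh, so $\heads{P_2}\cap U=\emptyset$. Hence $U$ is a splitting set for $P' = P_2\cup\fix{P}{M}$ with $b_U(P')=\fix{P}{M}$. This bottom program has the unique answer set $M$: it consists of the facts for $M$ and constraints forbidding every other atom of $U$. By the splitting theorem, $AS(P') = \{M\cup Y \mid Y\in AS(e_U(P_2,M))\}$.

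Next I will compute $e_U(P_2,M)$. The only atoms of $U$ that occur in $P_2$ lie in $at(P_1)$, because clone atoms and $sat$ do not occur in $P_2$. So the conditions $B_r^+\cap U\subseteq M$ and $B_r^-\cap M=\emptyset$ only inspect atoms of $at(P_1)$, on which $M$ agrees with $M_1$. Therefore $e_U(P_2,M)=e_{U_1}(P_2,M_1)$, where $U_1=at(P_1)$. The same splitting argument with $U_1$ applied to $P_2\cup\fix{P_1}{M_1}$ gives $AS(P_2\cup\fix{P_1}{M_1})=\{M_1\cup Y\mid Y\in AS(e_{U_1}(P_2,M_1))\}$. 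This is exactly the second paragraph of the proof of Lemma~\ref{lemma:prg_union_app}.

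Combining the two characterisations, $M'\in AS(P')$ iff $M'=M\cup Y=M_2\cup(M_1\cup Y)$ with $M_1\cup Y\in AS(P_2\cup\fix{P_1}{M_1})$, which is the thesis. The main obstacle is bookkeeping rather than a new idea. One point is that $\fix{\cdot}{\cdot}$ is defined via the Herbrand base $B_P$ rather than $at(P)$. The other is showing that the hidden atom $nsat$ and the clone atoms cannot interfere, i.e., that they are genuinely absent from $P_2$. I will first check that taking $U=B_P$ still yields a valid splitting set, since the constraints $\leftarrow a$ for $a\notin M$ only mention atoms in $U$.
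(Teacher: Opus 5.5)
Your argument is correct. It reaches the statement through a different choice of splitting set than the paper, and that changes the shape of the proof.

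The paper splits $P_2\cup\fix{P}{M}$ only on the fresh atoms $A=\heads{sat(clone(P_2))}$ (clone atoms, $sat$, $nsat$). It writes $\fix{P}{M}=\fix{P_1}{M_1}\cup\fix{A}{M_2}$ and notes that no atom of $A$ occurs in $P_2\cup\fix{P_1}{M_1}$. So the program is a disjoint union of two parts:
\begin{itemize}
\item $\fix{A}{M_2}$, whose unique answer set is $M_2$;
\item $P_2\cup\fix{P_1}{M_1}$, which the partial evaluation leaves untouched.
\end{itemize}
One application of the splitting theorem then yields $M'=M_2\cup M_2'$ with $M_2'\in AS(P_2\cup\fix{P_1}{M_1})$ directly.

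You instead put all of $\fix{P}{M}$, including its $P_1$-part, into the bottom. You must therefore actually evaluate $P_2$ against $M$ and identify the result with $e_{U_1}(P_2,M_1)$. You then split a second time, reusing the computation from Lemma~\ref{lemma:prg_union_app}, to recognise $M_1\cup Y$ as an answer set of $P_2\cup\fix{P_1}{M_1}$. Your route never needs to decompose $\fix{P}{M}$ and recycles an existing calculation. The paper's route is shorter because it exploits that the cloned part shares no atoms with $P_2$ at all, so no second splitting is needed.

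Two small bookkeeping points.

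First, $U=at(P)$ also contains body atoms of $P_2$ lying outside $\heads{P_2}\cup at(P_1)$, because clone leaves them unrenamed. So your claim that every atom of $U$ occurring in $P_2$ lies in $at(P_1)$ can fail. The identity $e_U(P_2,M)=e_{U_1}(P_2,M_1)$ then holds only up to such atoms. This is harmless: they are false in $M$ and underivable in $P_2$, so the answer sets agree.

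Second, switching to $U=B_P$ does not by itself settle the Herbrand-base concern you raised. You would additionally need $\heads{P_2}\cap B_P=\emptyset$, which does not follow from the hypothesis. It has to be assumed by reading $B_P$ as the set of atoms of the ground program, exactly as the paper does tacitly when it writes $B_P=B_{P_1}\cup A$.
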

\begin{proof}
	Let $P = P_1\cup sat(clone(P_2))$ and $A = \heads{sat(clone(P_2))}$, then $B_P = B_{P_1}\cap A$, with $B_{P_1}\cup A = \emptyset$.
	To this end, for each $M\in AS(P)$, $M = M_1 \cup M_2$ where $M_1 \subseteq B_{P_1}$ and $M_2 \subseteq A$.
	More precisely, from Lemma~\ref{lemma:prg_union_app}, $M_1 \in AS(P_1)$ and $M_2$ is either equal to $\{sat\}$ or $M_2$ is such that $M_1\cup M_2$ is an answer set of $clone(P_2)\cup\fix{P_1}{M_1}$.
	Thus, let $M = M_1\cup M_2 \in AS(P)$, with $M_1 \in AS(P_1)$, then $\fix{P}{M} = \fix{P_1}{M_1}\cup \fix{A}{M_2}$.
	Since atoms in $A$ do not appear neither in $P_2$ nor in $\fix{P_1}{M_1}$, then $A$ is a splitting set for $P_2' = P_2\cup\fix{P}{M} = P_2\cup \fix{P_1}{M_1} \cup \fix{A}{M_2}$. 
	Thus, $M' \in AS(P_2')$ iff $M' = X \cup Y$ where $X$ is an answer set of $b_A(P_2) = \fix{A}{M_2}$ and $Y$ is an answer set of $e_A(t_U(P_2'),X) = e_A(P_2\cup\fix{P_1}{M_1},X)$. 
	Here, $\fix{A}{M_2}$ has only one answer set, that is $M_2$; whereas 
	$e_A(P_2\cup\fix{P_1}{M_1},X) = P_2\cup\fix{P_1}{M_1}$ since none of the atoms in $X$ appears in $P_2\cup\fix{P_1}{M_1}$ and so no rule can be simplified according to $X$.
	Thus, $M' \in AS(P_2')$ iff $M' = M_2 \cup Y$ where $Y \in AS(P_2\cup\fix{P_1}{M})$.
\end{proof}
Finally, the last transformation is used to compute the cost of an answer set as atoms over a fresh predicate $cost/2$.
\setcounter{definition}{4}
\begin{definition}[Cost Program]\label{cost_transformation_app}
    Let $P$ be an ASP program, $L$ be the set of priority levels in $\mathcal{W}(P)$, and $v_P$ and $cl_P$ are fresh predicates not appearing in $P$. Then, $cost(P)$ is defined as:
    \[
    cost(P) = \left\{\begin{array}{lclr}
    v_P(w,l,\vec{t}) & \leftarrow & l_1, \ldots, l_n & \forall  \leftarrow_{\omega} l_1, \ldots, l_n [w@l,\vec{t}] \in P\\
    cl_P(\updated{T},l) &\leftarrow & \#sum\{C,\vec{t}: v_P(C,l,\vec{t})\}=\updated{T}. & \forall l \in L\\
    \end{array}\right.
    \]
\end{definition}
\setcounter{definition}{10}
Basically, Definition~\ref{cost_transformation_app} is a revised version of the $check$ transformation by \citeN{DBLP:journals/tplp/MazzottaRT24}, where each weak constraint is transformed into a rule whose head atoms represent the violation tuples of the weak constraint, and then the each rule with aggregate sums up all the weights assigned to each violation at level $l$.
\begin{example}
	Let $P$ and $W$ be of the form:
	
	\begin{minipage}{.49\textwidth}
		$$
		P = \left\{\begin{array}{l}
			a \leftarrow not\ b  \\
			b \leftarrow not\ a  \\
			c \leftarrow not\ d  \\
			d \leftarrow not\ c  \\
		\end{array}\right\}
		$$
	\end{minipage}
	\hfill
	\begin{minipage}{.49\textwidth}
		$$
		W = \left\{\begin{array}{l}
			\leftarrow^{\omega} a,c\ [1@1]  \\
			\leftarrow^{\omega} a,d\ [2@1]  \\
			\leftarrow^{\omega} b,c\ [1@2]  \\
			\leftarrow^{\omega} b,d\ [2@2]  \\
		\end{array}\right\}
		$$
	\end{minipage}
	
	In this case, the optimization levels are $1$ and $2$ and so $cost(P\cup W)$ is of the form:
	$$
	\begin{array}{l}
		v_P(1,1)\leftarrow a,c\\ 
		v_P(2,1)\leftarrow a,d\\
		v_P(1,2)\leftarrow b,c\\
		v_P(2,2)\leftarrow b,d\\
		cost_P(C,1)\leftarrow \#sum\{1:v_P(1,1),2: v_P(2,1)\}=C\\
		cost_P(C,2)\leftarrow \#sum\{1:v_P(1,2),2: v_P(2,2)\}=C\\
	\end{array}
	$$
\end{example}
As a result, the $cost(P)$ transformation returns a set of rules which serves as post processing to compute the cost of an answer set of $P$ w.r.t. a set of weak constraints $W$. The cost of the answer set at each level $l$ is denoted by auxiliary atoms of the form $cost(c,l)$.

We are now ready to study the complexity of the coherence problem of 2-\ASPQ.
\setcounter{theorem}{0}
\begin{theorem}[Membership]\label{lemma:membership_app}
	The coherence problem for \TWOASPQW programs of the form $\Box_1 P_1\Box_2 P_2:C$ is in: $\Sigma_2^P$ if $\Box_1 = \exists^{st}$; $\Pi_2^P$ otherwise,
    \updated{
    no matter whether $\Box_2 = \exists^{st}$ or $\Box_2 = \forall^{st}$.
    } 
\end{theorem}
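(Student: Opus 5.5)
The plan is to argue by cases on the quantifiers, as in the sketch in the main text, and to reduce every case to a setting whose complexity is already known. There are two such targets: plain 2-\ASPQ programs, for which \citeN{DBLP:journals/tplp/AmendolaRT19} give $\Sigma_2^P$ (resp. $\Pi_2^P$) membership, and \TWOASPQW programs whose weak constraints appear only in the first subprogram, covered by Corollary 3.1 of \citeN{DBLP:journals/tplp/MazzottaRT24}.

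\textbf{Uniform quantifiers.} If $\Box_1=\Box_2$, we apply Algorithm 1 of \citeN{DBLP:journals/tplp/MazzottaRT24}. It produces in polynomial time a plain 2-\ASPQ program with the same coherence status, so membership follows at once. An alternative, direct argument for $\exists\exists$ is also available. Guess $M_1$ and $M_2$, and use one $\Sigma_2^P$ check: a single coNP test certifies that $M_1$ is optimal for $P_1$, that $M_1\cup M_2$ is optimal for $P_2\cup\fix{P_1}{M_1}$, and that $C$ holds. Optimality is coNP, since a counterexample is simply a dominating answer set. The $\forall\forall$ case is the dual.

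\textbf{Alternating, $\exists\forall$.} The key idea is to move $P_2$'s choice of a candidate optimal model up into the first level. The construction has three parts.
\begin{enumerate}
\item Build $P_1' = P_1 \cup sat(clone(P_2))$ together with the weak constraints of $P_1$. By Lemma~\ref{lemma:prg_union_weak_app}, its optimal answer sets are $M_1\cup M_2^c$ with $M_1\in OptAS(P_1)$, where $M_2^c$ is either $\{sat\}$ or a cloned answer set of $P_2\cup\fix{P_1}{M_1}$.
\item Let $P_2'$ consist of $P_2$ with its weak constraints stripped off, plus $cost(clone(P_2))$ and $cost(P_2)$. By Lemma~\ref{lemma:fix_app}, the answer sets of $P_2'\cup\fix{P_1'}{M}$ are $M\cup M_2$ with $M_1\cup M_2\in AS(P_2\cup\fix{P_1}{M_1})$.
\item Let $C'$ be built from $checkDom$ in both directions, comparing $M_2^c$ against $M_2$. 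It forbids $sat$ and forbids that $M_2^c$ is dominated by $M_2$. It also requires that, whenever the two costs coincide at every level, $C$ holds on $M_1\cup M_2$. Because $C$ only needs stratified rules over $M_2$, $C'$ remains stratified.
\end{enumerate}

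We then verify correctness. A witness $M$ of $\Pi'$ must have $M_2^c$ optimal, since otherwise an optimal $M_2$ dominates it and violates $C'$. Given that, every optimal $M_2$ ties with $M_2^c$ and so must satisfy $C$, while non-optimal $M_2$ are strictly worse and are admitted vacuously. Hence $M$ is a witness of $\Pi'$ iff $M_1\in QAS(\Pi)$. For the converse, given $M_1\in QAS(\Pi)$, we pick any optimal $M_2$ to build $M$; this requires $P_2\cup\fix{P_1}{M_1}$ to be coherent. If it is incoherent, $\forall$ holds vacuously, and in that case the translation must instead let $sat$ succeed. So I would allow $sat$ when the universal program is incoherent, which can be handled by the rule $sat \leftarrow$ together with $C'$ accepting $sat$ only if no $M_2$ exists; when no $M_2$ exists, the universal quantifier ranges over nothing and $C'$ is checked trivially.

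\textbf{Alternating, $\forall\exists$.} This case is dual. We use the same $P_1'$. The constraint $C'$ now \emph{accepts} $N$ when $M_2^c$ is dominated by $M_2$, or when the costs tie and $C$ holds. A non-optimal guess is then refuted by its dominator, so only optimal guesses $M_2^c$ are relevant, and these are accepted exactly when some tied optimal $M_2$ satisfies $C$.

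Since $\Pi'$ carries weak constraints only in its first subprogram, Corollary 3.1 of \citeN{DBLP:journals/tplp/MazzottaRT24} gives membership in $\Sigma_2^P$, respectively $\Pi_2^P$. The \textbf{main obstacle} is the bookkeeping in these alternating cases: the $sat$ escape and incoherent inner programs must be handled correctly, and the stratification of $C'$ with its aggregates must be verified.
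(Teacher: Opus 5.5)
Your route is the paper's. You use the same case split and Algorithm~1 of \citeN{DBLP:journals/tplp/MazzottaRT24} for uniform quantifiers. For the alternating cases you use the same $P_1\cup W_1\cup sat(clone(P_2))$, in which the first player guesses a cloned inner answer set, a constraint program comparing its cost with that of the inner answer set, and a final appeal to the result for weak constraints occurring only in the first subprogram. Your direct guess-and-check for $\exists^{st}\exists^{st}$ and $\forall^{st}\forall^{st}$ is a correct, more elementary alternative to Algorithm~1: for normal programs answer-set checking is polynomial, so optimality is in coNP. Placing the $cost(\cdot)$ rules in $P_2'$ instead of $C'$, or testing ties via $checkDom$ in both directions, is immaterial. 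What goes wrong is the $sat$ bookkeeping, which you flag but do not resolve correctly.

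In the $\forall^{st}\exists^{st}$ case your $C'$ is wrong as stated. Because $P_1'$ contains the choice on $sat$, $M_1\cup\{sat\}$ is an optimal answer set of $P_1'$ for \emph{every} $M_1\in OptAS(P_1)$ (Lemma~\ref{lemma:prg_union_weak_app}), so the universal player can always pick it. All clone atoms are then false and the value computed by $cost(clone(P_2))$ is meaningless. Your acceptance test (``$M_2^c$ dominated by $M_2$, or tie and $C$'') can therefore reject every $N$. Concretely, take $P_1=\emptyset$, $P_2=\{a\leftarrow\}$ with weak constraint $\leftarrow^{\omega} a\ [1@1]$, and $C=\emptyset$. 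Then $\Pi$ is coherent. But for $M=\{sat\}$ the empty clone has cost $0$ at level $1$ while the unique $N$ has cost $1$, so nothing is accepted and $\Pi'$ is incoherent. The fix, which the paper uses, is to make $C'$ accept unconditionally when $sat$ is true: its extra constraint and every rule coming from $C$ carry $\naf sat$ in the body. This is harmless when the inner program is coherent. When the inner program is incoherent there is no $N$ at all, so $M_1\cup\{sat\}$ correctly witnesses incoherence.

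In the $\exists^{st}\forall^{st}$ case, drop the rule $sat\leftarrow$. Read as a fact, it makes $sat$ true in every answer set of $P_1'$ and switches the clone off. $\Pi'$ would then be coherent only when some inner program is incoherent. Nothing beyond the choice $\{sat\}\leftarrow$ already in $sat(\cdot)$ and the constraint $\leftarrow sat$ in $C'$ is needed. When $P_2\cup\fix{P_1}{M_1}$ is incoherent, the universal check on $M_1\cup\{sat\}$ ranges over the empty set. Otherwise, any inner answer set violates $\leftarrow sat$. With these two corrections your argument coincides with the paper's proof.
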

\setcounter{theorem}{7}
\begin{proof}
    For the sake of the presentation, in what follows we keep separate weak constraints from rules of ASP subprogram. 
    Thus, we consider \TWOASPQW programs of the form $\Box_1 P_1\cup W_1\Box_2 P_2\cup W_2:C$, where $P_1$ and $P_2$ are ASP programs without weak constraints; whereas $W_1$ and $W_2$ are set of weak constraints.
	To prove our thesis we need to distinguish three different scenarios.
	
	Uniform quantifiers $\Box_1 = \Box_2$. By applying the transformation (Algorithm 1) by \citeN{DBLP:journals/tplp/MazzottaRT24}, it is possible to obtain an alternating plain 2-\ASPQ program $\Pi'$ of the form $\Box_1 P_1'\overline{\Box_1} P_2':C'$ such that $\Pi'$ is coherent if and only if $\Pi$ is coherent. 
	From the result of~\citeN{DBLP:journals/tplp/AmendolaRT19}, verifying the coherence of $\Pi'$ is in $\Sigma_2^P$ if $\Box_1 = \exists^{st}$; otherwise is in $\Pi_2^P$. Thus the thesis holds for uniform quantifiers.
	
	Case $\Box_1 = \exists^{st}$ and $\Box_2 = \forall^{st}$. 
	In this case, $\Pi$ can be encoded into a plain 2-\ASPQ program. 
	Such translation requires two steps: first of all we remove weak constraints in $W_2$, thus obtaining a 2-\ASPQ program $\Pi'$ where weak constraint occurs only in the first program. Then, by applying the transformation (Algorithm 1) by \citeN{DBLP:journals/tplp/MazzottaRT24}, we get rid of weak constraint in $W_1$ and obtain a plain 2-\ASPQ program that is coherent iff and only if $\Pi$ is coherent.
	
	Let us now focus on the first step. To remove weak constraint in $W_2$ while preserving the coherence of $\Pi$ we can construct a 2-\ASPQ $\Pi'$ of the form $$ \exists^{st} P_1 \cup W_1 \cup sat(clone(P_2))\forall^{st} P_2: C',$$
	where $C' = cost(clone(P_2\cup W_2)) \cup cost(P_2\cup W_2) \cup C^{opt}$, and $C^{opt}$ is of the form:
	$$
	\begin{array}{lr}
		diff(L)\leftarrow cost(C1,L), const_{clone}(C2,L), C1\neq C2      & \\
		hasHigher(L1) \leftarrow diff(L1),diff(L2), L2>L1           & \\
		highest(L) \leftarrow diff(L),not\ hasHigher(L)           & \\
		dom_{clone} \leftarrow highest(L), const_{clone}(C1,L),cost(C2,L), C1>C2            & \\
		diffCost \leftarrow diff(L)                           & \\
		equalCost \leftarrow not\ diffCost                         & \\
		H_r \leftarrow B_r, equalCost                              & \forall r \in C \\
		\leftarrow dom_{clone}                                           & \\
		\leftarrow sat                                             & \\
	\end{array}
	$$
	Intuitively, $P_1\cup W_1\cup sat(clone(P_2))$ augments $P_1\cup W_1$ with a clone of $P_2$ which always admits an answer set, the program $P_2$ is kept in its original form, and, finally, the program $C'$ computes the cost of answer sets of $sat(clone(P_2))$ and $P_2$, respectively. Moreover, the program $C'$ contains also the rules which:
	\begin{itemize}
		\item compare the cost of the two answer sets and derive an atom of the form $diff(l)$ iff the cost of the compared answer sets differs at level $l$, and as consequence the atom $diffCost$ is derived as well;
		\item derive an atom of the form $highest(l)$ if and only if $l$ is the highest level at which the cost of the two answer sets differs;
		\item derive an atom $dom_{clone}$ iff the answer set of $clone(P_2)$ is dominated by the answer set of $P_2$;
		\item derive an atom $equalCost$ if and only if there is no level at which the cost of the two answer sets differs, and thus the two answer sets have the same cost.
	\end{itemize}
	Finally, the atom $equalCost$ is used to control the activation of the rules from the original constraint program $C$. More precisely, if $equalCost$ is derived as false, then all the rules of $C$ are trivially satisfied. 
	On the contrary, if $equalCost$ is derived as true, then all the rules of $C$ are obtained back as $equalCost$ can be removed from all rules bodies. 
	We now prove that $\Pi$ is coherent if and only if $\Pi'$ is coherent.
	
	\noindent($\Rightarrow$) We assume $\Pi$ to be coherent and we prove that $\Pi'$ is coherent as well.
	Since $\Pi$ is coherent, then there exists $M_1\in OptAS(P_1\cup W_1)$ such that $\forall^{st} P_2 \cup W_2\cup\fix{P_1}{M_1}:C$ is coherent.
	
	This means that either $P_2\cup\fix{P_1}{M_1}$ is incoherent (if there are no answer sets then the forall check is trivially satisfied) or there is no optimal answer set of $P_2\cup W_2 \cup \fix{P_1}{M_1}$ that violates the program $C$. 
	In both cases, we will show that it is possible construct a quantified answer set of $\Pi'$, and so $\Pi'$ is coherent as well.
	We recall that $\Pi'$ is of the form $\exists^{st} P_1\cup W_1\cup sat(clone(P_2))\forall^{st} P_2 : C'$. 
	
	\begin{itemize}
		\item $P_2\cup\fix{P_1}{M_1}$ is incoherent.
		
		In this case, we show that $M = M_1\cup\{sat\}$ is a quantified answer set of $\Pi'$. 
		
		From Lemma~\ref{lemma:prg_union_weak_app}, $M$ is an optimal answer set of $P = P_1\cup W_1\cup sat(clone(P_2))$, and so, to prove that $M$ is also a quantified answer set of $\Pi'$, we show that $\forall^{st} P_2\cup\fix{P}{M}:C$ is coherent.
		From Lemma~\ref{lemma:fix_app}, $M' \in AS(P_2\cup\fix{P}{M})$ iff $M'$ is of the form $\{sat\}\cup M_2'$, with $M_2' \in AS(P_2\cup\fix{P_1}{M_1})$.
		Since $P_2\cup\fix{P_1}{M_1}$ is incoherent, then $AS(P_2\cup\fix{P_1}{M_1}) = \emptyset$ and so $P_2\cup\fix{P}{M}$ is incoherent as well. Thus, $\forall^{st} P_2\cup\fix{P}{M}:C$ is coherent and $M$ is a quantified answer set of $\Pi'$.
		
		\item No optimal answer set of $P_2\cup W_2 \cup \fix{P_1}{M_1}$ violates $C$.
		
		Let $M_2^c \subseteq \heads{sat(clone(P_2))}$ be such that $M_1 \cup M_2^c \in OptAS(clone(P_2\cup W_2)\cup\fix{P_1}{M_1})$, then we are going to show that $M = M_1\cup M_2^c$ is a quantified answer set of $\Pi'$.
		
		Let $P = P_1\cup W_1\cup sat(clone(P_2))$, then, from Lemma~\ref{lemma:prg_union_app}, $M\in OptAS(P)$ and so $M$ is a quantified answer set of $\Pi'$ iff $\forall^{st} P_2\cup\fix{P}{M}:C'$ is coherent.
		From Lemma~\ref{lemma:fix_app}, $M'$ is an answer set of $P_2\cup\fix{P}{M}$ iff $M'$ is of the form $M_2^c \cup M_2'$, where $M_2' \in AS(P_2\cup\fix{P_1}{M_1})$.
		Thus, let $M' = M_2^c\cup M_2' \in AS(P_2\cup\fix{P}{M})$, we show that $M'$ satisfies $C'$.
		
		First of all, we observe that the constraint $\leftarrow sat \in C'$ is trivially satisfied as $sat$ is false w.r.t. $M$.
		Moreover, we observe that since $M$ is optimal w.r.t. weak constraint in $W_2$, then $M$ is not dominated by $M'$. This means that $dom_{clone}$ is derived as false and the constraint $\leftarrow dom_{clone}$ is satisfied.
		Finally, we need to prove that all the rules of the form $H_r\leftarrow B_r,equalCost$ in $C'$, with $r \in C$, are satisfied w.r.t. $M'$.
		To this end, we need to distinguish two cases depending on whether $M_2'$ is optimal or not w.r.t. weak constraint $W_2$.
		If $M_2'$ is not optimal then $M'$ is dominated by $M$ and so there exists a level $l$ such that the cost of $M'$ is greater than the cost of $M$. Thus, the cost of $M'$ is not equal to the cost of $M$ and so the atom $equalCost$ is derived as false. 
		Since $equalCost$ appears positively in all the remaining rules, then they are satisfied w.r.t. $M'$.
		On the other hand, when $M_2$ is optimal, then $M'$ and $M$ have the same cost for each level $l$ and so the atom $equalCost$ is derived as true. This means that for each $r \in C$, $H_r\leftarrow B_r,equalCost \in C'$ is satisfied w.r.t. $M'$ iff $H_r\leftarrow B_r$ is satisfied w.r.t. $M'$ which means the constraint program $C$ is satisfied w.r.t. $M'$. 
		Since none of the atoms in $\heads{sat(clone(P_2))}$ appears in $C$, then $C$ is satisfied w.r.t. $M' = M_2^c \cup M_2'$ iff $C$ is satisfied w.r.t. $M_2'$. By assumption, no optimal answer set of $P_2\cup W_2\cup\fix{P_1}{M_1}$ violates $C$ and so $C$ is satisfied w.r.t. $M_2'$ and $C'$ is satisfied also w.r.t. $M'$.
		Here we observe that $M'$ was chosen arbitrarily among answer set of $P_2\cup\fix{P}{M}$ and $M'$ satisfies $C'$. Thus, this means that $C'$ is satisfied w.r.t. each $M' \in AS(P_2\cup\fix{P}{M})$ and so $M$ is a quantified answer set of $\Pi'$. 
	\end{itemize}
	\noindent($\Leftarrow$) We assume $\Pi'$ to be coherent and we prove that $\Pi$ is coherent as well.
	Let $P = P_1\cup W_1\cup sat(clone(P_2))$, then there exists $M \in AS(P)$ such that $\forall^{st} P_2\cup\fix{P}{M}:C'$ is coherent.
	From Lemma~\ref{lemma:prg_union_weak_app}, $M$ is of the form $M_1\cup M_2^c$ where $M_1 \in OptAS(P_1\cup W_1)$ and $M_2^c$ is either equal to $\{sat\}$ or $M_2^c$ is such that $M_1\cup M_2^c \in AS(clone(P_2)\cup\fix{P_1}{M_1})$.
	Thus, to prove the coherence of $\Pi$, we are going to show that $M_1$ is quantified answer set of $\Pi$.  
	
	\begin{itemize}
		\item $M_2^c = \{sat\}$.
        
		Since $\forall^{st} P_2\cup\fix{P}{M}: C'$ is coherent and $C'$ contains the constraint $\leftarrow sat$, then $P_2\cup\fix{P}{M}$ is incoherent.
		From Lemma~\ref{lemma:fix_app}, we know that $M' \in AS(P_2\cup\fix{P}{M})$ if and only if $M'$ is of the form $\{sat\} \cup M_2'$, with $M_2' \in AS(P_2\cup\fix{P_1}{M_1})$.
		Since $P_2\cup\fix{P}{M}$ is incoherent, then $P_2\cup W_2\cup\fix{P_1}{M_1}$ is incoherent as well, and thus, $\forall^{st} P_2\cup W_2\cup\fix{P_1}{M_1}:C$ is coherent and $M_1$ is a quantified answer set of $\Pi$.
		
		\item $M = M_1 \cup M_2^c \in AS(clone(P_2)\cup\fix{P_1}{M_1})$.
		
		Let $M_2 \in OptAS(P_2\cup W_2 \cup \fix{P_1}{M_1})$.
		From Lemma~\ref{lemma:fix_app}, we know that $M' = M_2^c \cup M_2 \in AS(P_2\cup\fix{P}{M})$, and so, by assumption, $M'$ satisfies $C'$. So this means the each rule of $C'$ is satisfied. 
		More precisely, the constraint $\leftarrow sat$ is satisfied as the atom $sat$ is false w.r.t. $M$; to satisfy the constraint $\leftarrow dom_{clone}$ then $M$ cannot be dominated by $M'$ and so $M$ is optimal w.r.t. weak constraints in $clone(P_2\cup W_2)$.
		At this point, the last rules to satisfy from $C'$ are of the form $H_r\leftarrow B_r, equalCost$, with $r \in C$.
		Since both $M$ and $M'$ are optimal, then the cost of $M$ is equal to the cost of $M'$ for each level $l$ and thus the atom $equalCost$ is derived as true.
		This means that, $H_r\leftarrow B_r, equalCost$ is satisfied w.r.t. $M'$ iff $H_r\leftarrow B_r$ is satisfied w.r.t. $M'$. Since none of the atoms in $\heads{sat(clone(P_2))}$ occurs in $C$, then $H_r\leftarrow B_r$ is satisfied w.r.t. $M' = M_2^c\cup M_2$ iff $H_r\leftarrow B_r$ is satisfied w.r.t. $M_2$.
		So this means the the constraint program $C$ is satisfied w.r.t. $M_2$.
		
		Since $M_2$ was chosen arbitrarily among optimal answer sets of $P_2\cup W_2\cup \fix{P_1}{M_1}$, it follows that every optimal answer set of $P_2\cup\fix{P_1}{M_1}$ satisfies $C$, and so $M_1$ is a quantified answer set of $\Pi$.
	\end{itemize}
	
	Thus, we can conclude that $\Pi$ is coherent iff $\Pi'$ is coherent. At this point, we observe that $\Pi'$ contains weak constraints only in the first program. Thus, by applying the transformations by~\citeN{DBLP:journals/tplp/MazzottaRT24} (i.e. $col_5(\cdot)$) it is possible to obtain a plain 2-\ASPQ program $\Pi''$ of the form $\exists^{st} P_1'' \forall^{st} P_2'': C''$ which is coherent iff $\Pi'$ is coherent. Since verifying the coherence of $\Pi''$ is a $\Sigma_2^P$-complete problem~\cite{DBLP:journals/tplp/AmendolaRT19}, then the thesis holds in this case.
    
	Case $\Box_1 = \forall^{st}$ and $\Box_2 = \exists^{st}$. 
	As in the previous case, $\Pi$ can be encoded into a plain 2-\ASPQ program. Again, the translation requires two steps: first of all we remove weak constraints in $W_2$, thus obtaining a 2-\ASPQ program $\Pi'$ where weak constraints occur only in the first program. Then, by applying the transformations by \citeN{DBLP:journals/tplp/MazzottaRT24}, we get rid of weak constraints in $W_1$ and obtain a plain 2-\ASPQ program which is coherent iff and only if $\Pi$ is coherent.
	
	More precisely, the 2-\ASPQ program $\Pi$ can be encoded into a $2$-\ASPQ program $\Pi'$ of the form $\forall^{st} P_1 \cup sat(clone(P_2))\exists^{st} P_2: C'$, where $C' = cost(clone(P_2\cup W_2)) \cup cost(P_2\cup W_2) \cup C^{opt}$, and $C^{opt}$ is of the form:
	$$
	\begin{array}{lr}
		diff(L)\leftarrow cost(C1,L), const_{clone}(C2,L), C1\neq C2      & \\
		hasHigher(L1) \leftarrow diff(L1),diff(L2), L2>L1           & \\
		highest(L) \leftarrow diff(L),not\ hasHigher(L)           & \\
		dom_{clone} \leftarrow const_{clone}(C1,L),cost(C2,L), C1>C2            & \\
		diffCost \leftarrow highest(L)                           & \\
		equalCost \leftarrow not\ diffCost                         & \\
		\leftarrow not\ dom_{clone}, not\ equalCost, not\ sat                                           & \\
		
		H_r \leftarrow B_r, not\ dom_{clone}, not\ sat                              & \forall r \in C \\
	\end{array}
	$$
	More precisely, $\Pi$ is incoherent iff $\Pi'$ is incoherent.
	
	\noindent($\Rightarrow$) Let us assume $\Pi$ to be incoherent, then we show that $\Pi'$ is incoherent as well. Since $\Pi$ is incoherent, then there exists $M_1 \in OptAS(P_1\cup W_1)$ such that $\exists^{st} P_2\cup W_2\cup \fix{P_1}{M_1}:C$ is incoherent. Thus, to prove the incoherence of $\Pi'$, we construct a witness of its incoherence, that is $M \in OptAS(P)$, with $P = P_1\cup W_1\cup sat(clone(P_2))$, such that $\exists^{st}P_2\cup\fix{P}{M}:C'$ is incoherent.
	
	More in detail, since $\exists^{st} P_2\cup W_2\cup \fix{P_1}{M_1}: C$ is incoherent, then either $P_2\cup\fix{P_1}{M_1}$ is incoherent or there is no optimal answer set of $P_2\cup W_2\cup\fix{P_1}{M_1}$ that satisfies $C$. Let us distinguish the two cases separately.
	\begin{itemize}
		\item $P_2 \cup \fix{P_1}{M_1}$ is incoherent.
		
		In this case, we construct an answer set $M = M_1 \cup \{sat\}$ that serve as witness for the incoherence of $\Pi'$.
		Specifically, from Lemma~\ref{lemma:prg_union_weak_app}, $M = M_1 \cup \{sat\} \in OptAS(P)$.
		Moreover, from Lemma~\ref{lemma:fix_app}, $M' \in AS(P_2\cup\fix{P}{M})$ if and only if $M'$ is of the form $\{sat\}\cup M_2'$, with $M_2'\in AS(P_2\cup\fix{P_1}{M_1})$. 
		Since $P_2\cup\fix{P_1}{M_1}$ is incoherent, then $AS(P_2\cup\fix{P_1}{M_1}) = \emptyset$, and $AS(P_2\cup\fix{P}{M})= \emptyset$. As a result, $P_2\cup\fix{P_1}{M_1}$ is incoherent and $M$ is a witness for the incoherence of $\Pi'$.
		
		\item There is no optimal answer set of $P_2\cup W_2\cup\fix{P_1}{M_1}$ that satisfies $C$.
		
		Let $M_2^c \subseteq \heads{sat(clone(P_2))}$ be such that $M_1\cup M_2^c \in OptAS(clone(P_2\cup W_2)\cup\fix{P_1}{M_1})$.
		Then, from Lemma~\ref{lemma:prg_union_weak_app}, $M = M_1\cup M_2^c \in OptAS(P)$, thus we need to prove that $M$ is a witness for the incoherence of $\Pi'$. 
		To this end, we show that there is no $M' \in AS(P_2\cup\fix{P}{M})$ that satisfies $C'$.
		
		More precisely, from Lemma~\ref{lemma:fix_app}, $M' \in AS(P_2\cup\fix{P}{M})$ iff $M'$ is of the form $M_2^c \cup M_2'$, with $M_2' \in AS(P_2\cup\fix{P_1}{M_1})$.
		Thus, let $M' = M_2^c\cup M_2' \in AS(P_2\cup\fix{P}{M})$, with $M_2'\in AS(P_2\cup\fix{P_1}{M_1})$, we show that $M'$ violates $C'$.
		To this end, we distinguish two possible cases depending on whether $M_2'$ is optimal w.r.t. weak constraints in $W_2$ or not.
		
		If $M_2'$ is optimal, then the cost of $M$ is equal to the cost of $M'$ for each level $l$, and so the atom $equalCost$ is derived as true. Since $equalCost$ is true, then the constraint $\leftarrow not\ dom_{clone},\ not\ equalCost,\ not\ sat$ is satisfied.
		Moreover, since the atom $sat$ is false w.r.t. $M$ and the atom $dom_{clone}$ is derived as false as $M$ is not dominated by $M'$, then each rule $H_r\leftarrow B_r,not\ dom_{clone},not\ sat \in C'$, with $r \in C$, is satisfied w.r.t. $M'$ iff $H_r \leftarrow B_r$ is satisfied w.r.t. $M'$. Since none of the atoms in $\heads{sat(clone(P_2))}$ appears in $C$, then $H_r \leftarrow B_r$ is satisfied w.r.t. $M' = M_2^c\cup M_2'$ iff $H_r \leftarrow B_r$ is satisfied w.r.t. $M_2'$ which means $M_2'$ satisfies the constraint program $C$. By assumption, $M_2'$ does not satisfy $C$ and so $C'$ is not satisfied w.r.t. $M'$.
		
		Conversely, if $M_2'$ is not optimal then $M'$ is dominated by $M$ and so, there exists at least on level $l$ such that the cost of $M'$ at level $l$ is greater than the cost of $M$ at level $l$. As a result, the atom $equalCost$ is derived as false.
		Moreover, both atoms $sat$ and $dom_{clone}$ are derived as false since $sat$ is false w.r.t. $M$ and $M$ is optimal w.r.t. weak constraints in $clone(P_2\cup W_2)$ and so $M$ cannot be dominated by $M'$.
		As a result, the constraint $\leftarrow not\ dom_{clone},\ not\ equalCost,\ not\ sat$ is violated and so $C'$ is not satisfied w.r.t. $M'$.
		
		Since in both cases $M'$ violates $C'$ and $M'$ can be any answer set of $P_2\cup\fix{P}{M}$, then no answer set of $P_2\cup\fix{P}{M}$ satisfies $C'$ and thus $M = M_1\cup M_2^c$ is a witness for the incoherence of $\Pi'$.
	\end{itemize}
	
	\noindent ($\Leftarrow$) Let us assume that $\Pi'$ is incoherent, then we show that $\Pi$ is incoherent as well. More precisely, since $\Pi'$ is incoherent, then there exists $M \in OptAS(P)$, with $P = P_1\cup W_1\cup sat(clone(P_2))$, such that $\exists^{st} P_2\cup\fix{P}{M}:C'$ is incoherent.
	
	From Lemma~\ref{lemma:prg_union_weak_app}, $M$ is of the form $M_1\cup M_2^c$, where $M_1 \in OptAS(P_1\cup W_1)$ and $M_2^c\subseteq \heads{sat(clone(P_2))}$ is either equal to $\{sat\}$ or it is such that $M_1 \cup M_2^c \in AS(clone(P_2)\cup\fix{P_1}{M_1})$.
	Thus, to prove the incoherence of $\Pi$, we are going to show that $M_1$ is a witness for the incoherence of $\Pi$ in both cases.
    
	\begin{itemize}
		\item ($M = M_1\cup \{sat\}$)
        
        Since the atom $sat$ is true w.r.t. $M$ and $sat$ appears negatively in each constraint in $C'$, then the only way for making $\exists^{st} P_2\cup\fix{P}{M}:C'$ be incoherent is that $P_2\cup\fix{P}{M}$ is incoherent.
		From Lemma~\ref{lemma:fix_app}, $M' \in AS(P_2\cup\fix{P}{M})$ iff $M'$ is of the form $\{sat\}\cup M_2'$, with $M_2'\in AS(P_2\cup\fix{P_1}{M_1})$. Since $P_2\cup\fix{P}{M}$ is incoherent, then $AS(P_2\cup\fix{P}{M}) = \emptyset$ and so, also $AS(P_2\cup\fix{P_1}{M_1}) = \emptyset$. Thus, $P_2\cup\fix{P_1}{M_1}$ is incoherent and so $M_1$ is a witness for the incoherence of $\Pi$.
		
		\item ($M = M_1\cup M_2^c \in AS(clone(P_2)\cup\fix{P_1}{M_1})$)

		Here we observe that the atom $dom_{clone}$ appears negatively in each constraint of $C'$. 
		This means that if $M$ is not optimal w.r.t. weak constraint in $clone(P_2\cup W_2)$, then $M$ would be dominated by some answer set of $P_2\cup\fix{P}{M}$ and the program $C'$ would be satisfied. 
		Since by assumption $\exists^{st} P_2\cup\fix{P}{M}:C'$ is incoherent, then $M$ is optimal w.r.t. weak constraints in $clone(P_2\cup W_2)$. 
		Moreover, we observe that the atom $sat$ is false w.r.t. $M$ and so it cannot trivially satisfy the constraints in $C'$.
		
		Let $M_2 \in OptAS(P_2\cup W_2\cup \fix{P_1}{M_1})$, then, from Lemma~\ref{lemma:fix_app}, $M' = M_2^c\cup M_2$ is an answer set of $P_2\cup\fix{P}{M}$. Since $M_2$ is optimal, then the cost of $M$ is equal to the cost of $M'$ for each level $l$ and so the atom $equalCost$ is derived as true and thus the constraint $\leftarrow not\ dom_{clone},not\ equalCost,not\ sat \in C'$ is satisfied.
		By assumption, $M'$ violates $C'$ which means that there exists at least one rule $H_r\leftarrow B_r,not\ dom_{clone},not\ sat \in C'$, with $r \in C$, that is violated w.r.t. $M'$. Since $dom_{clone}$ and $sat$ are derived as false and none of the atoms in $\heads{sat(clone(P_2))}$ appears in $H_r\leftarrow B_r \in C$, then $H_r\leftarrow B_r$ is violated w.r.t. $M' = M_2^c\cup M_2$ iff $H_r\leftarrow B_r$ is violated w.r.t. $M_2$.
		Thus, $M_2$ violates the original constraint program $C$.
		
		Since $M_2$ can be any optimal answer set of $P_2\cup W_2\cup\fix{P_1}{M_1}$ and $M_2$ violates $C$, then no optimal answer set of $P_2\cup W_2\cup\fix{P_1}{M_1}$ satisfies $C$.
		Thus, $\exists^{st} P_2\cup W_2\cup \fix{P_1}{M_1}:C$ is incoherent and so $M_1$ is a witness for the incoherence of $\Pi$. 
	\end{itemize}
	At this point we have showed that $\Pi'$ preserves the (in)coherence of $\Pi$. As in the previous case, we observe that $\Pi'$ contains only weak constraints (i.e., $W_1$) in the first program and so, to get rid of such weak constraints it is possible to use the transformation by~\citeN{DBLP:journals/tplp/MazzottaRT24} (i.e. $col_6(\cdot)$) which returns a plain 2-\ASPQ $\Pi''$ of the form $\forall^{st} P_1''\exists P_2'':C''$ that is cohrent iff $\Pi'$ is coherent. Since verifying the coherence of $\Pi''$ is a $\Pi_2^P$-complete problem~\cite{DBLP:journals/tplp/AmendolaRT19}, then the thesis holds also in this last case.
\end{proof}
\setcounter{theorem}{1}
\begin{theorem}[Hardness]\label{lemma:hardness_app}
	The coherence problem for \TWOASPQW programs of the form $\Box_1 P_1\Box_2 P_2:C$ is hard for: 
    $\Sigma_2^P$ if $\Box_1 = \exists^{st}$; $\Pi_2^P$ otherwise,
    \updated{
    no matter whether $\Box_2 = \exists^{st}$ or $\Box_2 = \forall^{st}$.
    } 
\end{theorem}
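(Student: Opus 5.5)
The plan is to split on the four quantifier combinations. Three of them follow from known results on special cases; the case $\forall^{st}\forall^{st}$ needs a dedicated reduction.

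For $\exists^{st}\exists^{st}$, I invoke Thm 4 by~\citeN{DBLP:journals/tplp/MazzottaRT24}: coherence is already $\Sigma_2^P$-hard when $\mathcal{W}(P_1)=\emptyset$. Since every such program is a \TWOASPQW program, hardness transfers directly. For the alternating cases $\exists^{st}\forall^{st}$ and $\forall^{st}\exists^{st}$, I take the plain 2-\ASPQ programs of~\citeN{DBLP:journals/tplp/AmendolaRT19}, whose coherence is $\Sigma_2^P$-complete and $\Pi_2^P$-complete respectively. These are \TWOASPQW programs with $\mathcal{W}(P_1)=\mathcal{W}(P_2)=\emptyset$, so hardness again follows immediately.

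The remaining case is $\forall^{st}\forall^{st}$. This case is genuinely nontrivial. Without local weak constraints, two universal quantifiers collapse into a single universal block, which is only in coNP. The second universal must therefore simulate an existential through weak constraints in $P_2$. I reduce from the truth of $\Phi=\forall X\exists Y\phi$, with $\phi$ in 3-CNF, which is $\Pi_2^P$-complete~\cite{schaefer2002completeness}.

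In $P_1$, I guess an assignment to $X$ with the rules $x\leftarrow \naf nx$ and $nx\leftarrow \naf x$, and use no weak constraints. Every answer set is then optimal, so $OptAS(P_1)$ ranges over all assignments of $X$.

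In $P_2$, I guess an assignment to $Y$ in the same way. For each clause $c_j$, I add rules deriving $unsat_j$ from the complements of its three literals, and derive $unsat\leftarrow unsat_j$. I then add the weak constraint $\leftarrow^{\omega} unsat\ [1@1]$. Finally, I set $C=\{\leftarrow unsat\}$.

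For a fixed $X$-assignment, the optimal answer sets of $P_2\cup\fix{P_1}{M_1}$ are as follows. If some $Y$ satisfies $\phi$, they are exactly the $Y$-assignments satisfying $\phi$; these have cost $0$ and satisfy $C$. Otherwise, they are all the $Y$-assignments, each of which has $unsat$ true and violates $C$. Hence every optimal answer set satisfies $C$ if and only if $\exists Y\phi[X]$ holds. The universal condition over $M_1$ then gives that the program is coherent if and only if $\Phi$ is true.

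I must check two points. First, $P_2$ always has answer sets, so the universal check is never vacuous. Second, $C$ is stratified with hard constraints, as the definition requires. The main obstacle is exactly this simulation of $\exists$ via optimality. The key correctness step is arguing that when $\phi[X]$ is unsatisfiable, the set of optimal answer sets is nonempty and contains only violators. The construction is clearly polynomial. Combining the four cases yields the statement.
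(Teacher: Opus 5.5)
Your proposal is correct and follows the paper's proof. It uses the same case split and the same citations for the $\exists^{st}\exists^{st}$ and alternating cases. For $\forall^{st}\forall^{st}$ it uses essentially the same $\forall X\exists Y\phi$ reduction: guess $X$ in $P_1$, guess $Y$ in $P_2$ with $\leftarrow^{\omega} unsat$, and set $C=\{\leftarrow unsat\}$. You also check explicitly that $P_2\cup\fix{P_1}{M_1}$ always has answer sets, so the inner universal check is never vacuous; the paper leaves this point implicit.
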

\setcounter{theorem}{7}
\begin{proof}
    For the sake of the presentation, in what follows we keep separate weak constraints from rules of ASP subprogram. 
    Thus, we consider \TWOASPQW programs of the form $\Box_1 P_1\cup W_1\Box_2 P_2\cup W_2:C$, where $P_1$ and $P_2$ are ASP programs without weak constraints; whereas $W_1$ and $W_2$ are set of weak constraints.
    
	As in the previous case, to prove our thesis we need to distinguish different scenarios according to possible pairs of quantifiers.
	
	Case $\Box_1 = \Box_2 = \exists^{st}$. In this case, \citeN{DBLP:journals/tplp/MazzottaRT24} proved that the coherence problem for $\Pi$ is $\Sigma_2^P$-complete if $W_1$ is empty. Since this is a particular case, then the thesis holds also for the general one.
	
	Case $\Box_1 = \Box_2 = \forall^{st}$. In this case, we prove the hardness for the particular case in which $W_1$ is empty which inherently gives us the hardness for the general one.
	
	Let $\Phi = \forall X\exists Y \phi$ be a $2$-QBF, where $X$ and $Y$ are disjoint set of variables and $\phi$ is a boolean formula in 3-CNF. 
	We recall that a formula in 3-CNF is of the form $c_1 \wedge \ldots\wedge c_n$, where each $c_i$, with $1\leq i\leq n$, is a disjunction of the form $l_1^i\vee l_2^i\vee l_3^i$ and each $l_j^i$ is either a positive (i.e., $a \in X\cup Y$) or negative (i.e., $\neg a$, with $a \in X\cup Y$) literal.
	Given the 2-QBF $\Phi$, verifying that $\Phi$ is true is a $\Pi_2^P$-complete problem~\cite{schaefer2002completeness}. 
	Such a problem can be encoded as a 2-\ASPQ program $\Pi$ of the form $\forall^{st} P_1 \forall^{st} P_2\cup W_2:C$ that is coherent if and only if $\Phi$ is true.
	More precisely, the program $P_1$ is made of rules of the form $tau'(x,t)\leftarrow not\ tau'(x,f)$ and $tau'(x,f)\leftarrow not\ tau'(x,t)$ for each $x \in X$.
	The program $P_2$ is of the form
	$$
	\begin{array}{lr}
		tau(x,t)\leftarrow tau'(x,f) &  \forall x \in X\\
		tau(x,t)\leftarrow tau'(x,t) &  \forall x \in X\\
		tau(y,t)\leftarrow not\ tau(y,f) &  \forall y \in Y\\
		tau(y,f)\leftarrow not\ tau(y,t) &  \forall y \in Y\\
		sat(c_i)\leftarrow tau(a,f)         & \forall c_i \in \phi, l_j^i=\neg a \in c_i\\
		sat(c_i)\leftarrow tau(a,t)         & \forall c_i \in \phi, l_j^i=x \in c_i, a \in a\cup Y\\
		unsat \leftarrow not\ sat(c_i)      & \forall c_i \in \phi
	\end{array}
	$$
	The set of weak constraints $W_2 = \{\leftarrow^{\omega} unsat\ \ [1]\}$, and the constraint program $C$ contains only the constraint $\leftarrow unsat$.
	In this case, the answer set of the program $P_1$ are in one-to-one correspondence with possible truth assignment $\tau'$ for the variables in $X$. Thus, there exists $M_1 \in AS(P_1)$ if and only if there exists a truth assignment $\tau'$ for variables in $X$.
	Then, the program $P_2' = P_2 \cup W_2\cup \fix{P_1}{M_1}$ guesses a truth assignment $\tau$ for the variables in $Y$ by extending the assignment $\tau'$ encoded by $M_1$.
	Moreover, the program $P_2$ checks whether $\tau$ satisfies $\phi$. To this end, for each clause $c_i \in \phi$, $sat(c_i)$ is derived iff there exists a positive (resp. negative) literal $l_i^j = a$ (resp. $l_i^j = \neg a$), with $a \in X\cup Y$, such that $\tau(a)$ is true (resp. $\tau(a)$ is false).
	So, $sat(c_i)$ is derived as true if and only if the clause $c_i$ in $\phi$ is satisfies w.r.t. $\tau$.
	Then, the atom $unsat$ is derived if and only if there is at least one clause that is not satisfied w.r.t. $\tau$ that is $\phi$ is false w.r.t. $\tau$.
	Thus, let $M_2 \in AS(P_2')$ then $unsat \in M_2$ iff there exists $\tau$ which extends $\tau'$ and $\phi$ is false w.r.t. $\tau$.
	In this case, the weak constraint $\leftarrow^{\omega} unsat\ \ [1]$ add a penalty to $M_2$. Thus, answer sets of $P_2'$ that do not contain $unsat$ are preferred as they do not add any penalty.
	So, let $M_2 \in OptAS(P_2')$ if $unsat \in M_2$, then the program $C \cup\fix{P_2'}{M_2}$ is incoherent and so also the program $\Pi$ is incoherent. 
	At the same time, we observe that for every $M_2'\in AS(P_2')$, $unsat \in M_2'$ because otherwise $M_2$ would not be optimal. Thus, for each assignment $\tau$ of variables in $Y$ which extends $\tau'$, $\phi$ is false w.r.t. $\tau$ and so for the truth assignment $\tau'$ of variables in $X$, it does not exists an assignment $\tau$ of variables in $Y$ that make $\phi$ true. 
	Thus $\Phi$ is not true.  
	On the other hand, if $unsat\notin M_2$ then for each $M_2' \in OptAS(P_2')$, $unsat \notin M_2'$ because otherwise $M_2'$ would be dominated by $M_2$ and it would not be optimal. 
	Thus, for each $M_2 \in OptAS(P_2')$, $C\cup\fix{P_2'}{M_2}$ is coherent and so $M_1$ is not a witness for the incoherence of $\Pi$.
	At the same, since $unsat \notin M_2$ then $\phi$ is true w.r.t. $\tau$ and so for the assignment $\tau'$ of variables in $X$ there exists the assignment $\tau$ of variables in $Y$ which extends $\tau'$ and make $\phi$ true, and so $\tau$ is not a witness for $\Phi$ not being true.
	
	Thus, we can conclude that $\Pi$ is incoherent iff $\Phi$ is not true, and so the thesis follows. 
	
	Case $\Box_1 \neq \Box_2$. Here the results follows from the complexity of the coherence problem for plain 2-\ASPQ programs. Indeed, a plain 2-\ASPQ of the form $\Box_1 P_1 \Box_2 P_2:C$ is a 2-\ASPQ program of the form $\Box_1 P_1\cup W_1\Box_2 P_2\cup W_2:C$, where $W_1$ and $W_2$ are empty. 
	\citeN{DBLP:journals/tplp/AmendolaRT19} proved that verifying the coherence of $\Box_1 P_1 \Box_2 P_2:C$ is $\Sigma_2^P$-complete if $\Box_1 = \exists^{st}$; otherwise it is $\Pi_2^P$-complete. Thus, the thesis follows.
\end{proof}
\setcounter{theorem}{2}
\begin{theorem}[Completeness]\label{thm:complete_two_aspq_app}
	The coherence problem for \TWOASPQW programs of the form $\Box_1 P_1\Box_2 P_2:C$ is: $\Sigma_2^P$-complete if $\Box_1 = \exists^{st}$; $\Pi_2^P$-complete otherwise,
    \updated{
    no matter whether $\Box_2 = \exists^{st}$ or $\Box_2 = \forall^{st}$.
    }
\end{theorem}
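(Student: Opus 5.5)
The completeness statement is simply the conjunction of the two directions already established in this appendix, so the plan is to derive it from Theorem~\ref{lemma:membership_app} and Theorem~\ref{lemma:hardness_app}. I will treat the two cases $\Box_1=\exists^{st}$ and $\Box_1=\forall^{st}$ separately. In each case I will range over both possible values of $\Box_2$, so that the clause ``no matter whether $\Box_2 = \exists^{st}$ or $\Box_2 = \forall^{st}$'' is covered.

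\textbf{Upper bound.} For $\Box_1=\exists^{st}$, Theorem~\ref{lemma:membership_app} places coherence in $\Sigma_2^P$. There are two sub-cases.
\begin{itemize}
\item For $\Box_2=\exists^{st}$ this follows from the uniform-quantifier translation (Algorithm~1 of \citeN{DBLP:journals/tplp/MazzottaRT24}).
\item For $\Box_2=\forall^{st}$ it follows from the construction $\exists^{st} P_1\cup W_1\cup sat(clone(P_2))\,\forall^{st} P_2:C'$. This removes $W_2$, after which $col_5(\cdot)$ removes $W_1$.
\end{itemize}
Symmetrically, for $\Box_1=\forall^{st}$ the same theorem gives membership in $\Pi_2^P$. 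The sub-cases are the uniform translation and the encoding via $C^{opt}$ followed by $col_6(\cdot)$. All these translations are polynomial, so the upper bounds hold for every $\Box_2$.

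\textbf{Lower bound.} For $\Box_1=\exists^{st}$, Theorem~\ref{lemma:hardness_app} gives $\Sigma_2^P$-hardness. When $\Box_2=\exists^{st}$, this is the $\mathcal{W}(P_1)=\emptyset$ special case from \citeN{DBLP:journals/tplp/MazzottaRT24}. When $\Box_2=\forall^{st}$, it is the plain alternating case of \citeN{DBLP:journals/tplp/AmendolaRT19}. For $\Box_1=\forall^{st}$ it gives $\Pi_2^P$-hardness, again in two sub-cases.
\begin{itemize}
\item For $\Box_2=\forall^{st}$, it comes from the reduction of $\forall X\exists Y\phi$ using the local weak constraint $\leftarrow^{\omega} unsat\ [1]$.
\item For $\Box_2=\exists^{st}$, it comes from the plain alternating case.
\end{itemize}
Since a class of programs restricted to empty weak-constraint sets is a subclass of \TWOASPQW, hardness of the subclass transfers upward.

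\textbf{Combining.} Membership plus hardness yields $\Sigma_2^P$-completeness for existential programs and $\Pi_2^P$-completeness for universal ones, for each choice of $\Box_2$. The substantive work, and thus the main obstacle, is not in this combination step. It lies in the two alternating membership translations, where one must check that the $equalCost$/$dom_{clone}$ guards exactly capture optimality of answer sets of $P_2$ without a third quantifier. Those translations are already proven in Theorem~\ref{lemma:membership_app}, so here it suffices to invoke them, and the only care needed is to check that every one of the four quantifier combinations is covered by both theorems.
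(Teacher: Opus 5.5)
Your proposal is correct and follows the paper's route exactly: the paper proves completeness in a single line by combining Theorem~\ref{lemma:membership_app} (membership) with Theorem~\ref{lemma:hardness_app} (hardness). Your case-by-case check that each of the four quantifier combinations is covered by both theorems unpacks what those two proofs already establish.
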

\begin{proof}
	The thesis follows from Lemma~\ref{lemma:membership_app} and~\ref{lemma:hardness_app}.
\end{proof}
Theorem~\ref{thm:complete_two_aspq_app} gives as an important results which allows to give completeness results for the brave reasoning tasks in \TWOASPQW, which consists of verifying whether an atom $a$ is true in at least one optimal quantified answer set.
\begin{theorem}
    Let $\Pi$ be an existential \TWOASPQW program of the form $\Box_1 P_1\Box_2 P_2:C:C^{\omega}$ and $a$ be a ground atom, then verifying whether $a \in M$, with $M \in OptQAS(\Pi)$, is $\Delta_{3}^P$-complete \updated{
    no matter whether $\Box_2 = \exists^{st}$ or $\Box_2 = \forall^{st}$.
    }
\end{theorem}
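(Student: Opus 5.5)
We prove membership in $\Delta_3^P$ with a polynomial-time algorithm that calls a $\Sigma_2^P$ oracle, and hardness by reducing from a known $\Delta_3^P$-complete problem. Both directions lean on Theorem~\ref{thm:complete_two_aspq_app}.

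\emph{Membership.} Let $\Pi=\exists^{st}P_1\Box_2P_2:C:C^{\omega}$. The first step is to define the decision problem ``does $\Pi$ have a quantified answer set whose cost vector (over the levels of $P_1\cup C^{\omega}$) is lexicographically at most a given bound $\vec{k}$, optionally containing $a$?'' We encode it as coherence of an existential \TWOASPQW program $\Pi_{\vec{k}}$. To build $\Pi_{\vec{k}}$, we leave $\Box_2P_2$ and the local weak constraints of $P_1$ untouched and add a verification of the bound on the global cost. Doing this naively inside $P_1$ would interfere with optimality of $P_1$ answer sets. We therefore place the check in $C$: using the $cost(\cdot)$ transformation together with a lexicographic-comparison program in the style of $checkDom$, $C$ derives a violation whenever the global cost of the $P_1$ part exceeds $\vec{k}$, or whenever $a$ is absent. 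For $\Box_2=\forall^{st}$ this is sound only when $P_2\cup\fix{P_1}{M_1}$ is coherent. To close this loophole we first apply the $sat(\cdot)$ trick from the membership proof, which forces at least one answer set; alternatively we guard with an extra $\exists$ step collapsed by Theorem~\ref{thm:membership}'s translations. Theorem~\ref{thm:complete_two_aspq_app} then places each query in $\Sigma_2^P$.

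Weights are written in binary, so there are at most polynomially many levels and each level's sum is exponentially bounded. A binary search level by level, from the highest level down, therefore computes the optimal cost vector $\vec{c}^*$ with polynomially many oracle calls. One further call asks whether some quantified answer set with cost exactly $\vec{c}^*$ contains $a$; since $\vec{c}^*$ is optimal, any such set is automatically in $OptQAS(\Pi)$. The whole procedure is in $P^{\Sigma_2^P}=\Delta_3^P$.

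\emph{Hardness.} We reduce from the canonical $\Delta_3^P$-complete problem: given $\Phi=\exists X\forall Y\phi$, decide whether the lexicographically maximum assignment $\tau$ to $X=\{x_1,\dots,x_n\}$ such that $\forall Y\phi[\tau]$ holds sets $x_n$ true (Krentel-style, lifted to the third level). The encoding is as follows.
\begin{itemize}
\item $P_1$ guesses $X$ with no local weak constraints.
\item $\Box_2=\forall^{st}$, and $P_2$ guesses $Y$ and derives $unsat$ iff $\phi$ is false.
\item $C=\{\leftarrow unsat\}$.
\item $C^{\omega}$ contains $\leftarrow^{\omega} \naf x_i\,[1@i]$ for each $i$, so that earlier variables have higher priority.
\end{itemize}
With this encoding, quantified answer sets correspond exactly to the good assignments, the optimal quantified answer set is the lex-maximum one, and it contains $x_n$ iff the QBF instance is a yes instance. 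This covers $\Box_2=\forall^{st}$ directly. For $\Box_2=\exists^{st}$ we dualize, taking $\Phi=\exists X\exists\ldots$? No: we instead use $\exists X\neg\exists Y\neg\phi$. Here $P_2$ guesses $Y$ and is filtered by a weak constraint $\leftarrow^{\omega} \naf viol\,[1@1]$ that prefers falsifying assignments, and $C$ requires $\naf viol$. An optimal answer set of $P_2$ then avoids $viol$ only if no falsifying $Y$ exists. This mirrors the $\forall\forall$ hardness trick already used in the appendix.

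The main obstacle is the membership encoding. The bound check must not alter which answer sets of $P_1$ are optimal, since local weak constraints already restrict $P_1$, and it must interact correctly with the incoherent-$P_2$ corner case for $\Box_2=\forall^{st}$. Placing the check in $C$ handles the first concern. For the second, I would first try conjoining the cost check to $P_2$ through a $sat(clone(\cdot))$-style wrapper.
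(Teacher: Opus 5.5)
Your membership argument follows the paper's route: binary search on the cost vector with a $\Sigma_2^P$ coherence oracle, then one extra call for $a$. You make it more explicit than the paper does. The paper never says how the cost bound enters the coherence query, and you correctly notice that for $\Box_2=\forall^{st}$ a check placed in $C$ is vacuous when $P_2\cup\fix{P_1}{M_1}$ is incoherent.

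Your hardness argument takes a genuinely different route. The paper inherits $\Delta_3^P$-hardness from the case $\mathcal{W}(P_1)=\mathcal{W}(P_2)=\emptyset$. You instead reduce directly from the lexicographically-maximal-assignment problem for $\exists X\forall Y\phi$. For $\Box_2=\exists^{st}$ your route is the one that actually works. For a plain $\exists^{st}P_1\exists^{st}P_2:C:C^{\omega}$, asking for a quantified answer set of cost at most $\vec k$ is in NP (guess $M_1,M_2$ and check). So brave reasoning there is in $\Delta_2^P$, and the paper's special-case argument covers only $\Box_2=\forall^{st}$. Your local weak constraint $\leftarrow^{\omega}\naf viol\ [1@1]$ in $P_2$ makes the existential check behave universally, which is the missing ingredient, and your analysis of it is correct.

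Two steps still need fixing.

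\textbf{(i) Priority levels in the reduction.} With $\leftarrow^{\omega}\naf x_i\ [1@i]$, the variable $x_n$ sits at the highest level. In the paper's dominance relation higher levels are more important, so the optimum maximizes $x_n$ first. Then $x_n$ belongs to an optimal quantified answer set iff some good assignment sets $x_n$ true. That is a single $\Sigma_2^P$ question, not the lexicographic question you reduce from. Use $[1@n{-}i{+}1]$, or query $x_1$ instead.

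\textbf{(ii) The $\forall^{st}$ corner case in membership.} Your repair is not finished, and the $sat(\cdot)$ wrapper as stated is unsound. The extra answer set $M_1\cup\{sat\}$ takes part in the optimization of $P_2$'s local weak constraints; for example, it has cost $0$ if they are guarded by $\naf sat$. It can therefore dominate the genuine answer sets and hide them from the universal check. This is the same interference you were careful to avoid in $P_1$. Any of the following works:
\begin{itemize}
\item Penalize $sat$ by a weak constraint at a level above all levels of $P_2$, so that $M_1\cup\{sat\}$ is optimal only when $P_2\cup\fix{P_1}{M_1}$ is incoherent. Then guard the original rules of $C$, but not the bound check, by $\naf sat$.
\item More simply, let $P_2$ derive a flag $bad$ (bound violated or $a$ false) by stratified rules over atoms of $P_1$. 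Add $\naf bad$ to every other rule and weak constraint of $P_2$, and add $\leftarrow bad$ to $C$. If $bad$ holds, $M_1\cup\{bad\}$ is the only answer set and it violates $C$; otherwise nothing changes.
\item Drop the encoding altogether, since the bounded query is directly in $\Sigma_2^P$. Guess $M_1$, then check $M_1\in OptAS(P_1)$, the bound, and the inner condition with polynomially many NP calls. That means a binary search for the optimum of $P_2\cup\fix{P_1}{M_1}$, then one call asking for an answer set of that cost violating $C$ (for $\forall^{st}$), resp.\ satisfying $C$ (for $\exists^{st}$). This gives $NP^{P^{NP}}=\Sigma_2^P$.
\end{itemize}
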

\begin{proof}
    (Hardness) If we consider the particular case where $\mathcal{W}(P_1) = \mathcal{W}(P_2) = \emptyset$ then verifying whether there exists $M \in OptQAS(\Pi)$ such that $a \in M$ is $\Delta_{3}^P$-complete~\cite{DBLP:journals/tplp/MazzottaRT24}. Thus, it holds also for the general case.

    \noindent(Membership)
    As it has been observed by~\citeN{DBLP:journals/tplp/MazzottaRT24}
    an optimal quantified answer set of $\Pi$ can be obtained with binary search on the value of maximum possible cost, namely $k$.
    Since $k$ can be exponential in the general case, then an optimal quantified answer set of $\Pi$ can be obtained with a polynomial number of calls to the oracle in $\Sigma_2^P$ which decides the coherence of a \TWOASPQW program. 
    Finally, an extra oracle call checks that the atom $a$ appears in some optimal quantified answer sets. 
\end{proof}
\setcounter{theorem}{7}
\section{Examples: Program Transformation}\label{sec:program_transform}
In this section, we redefine all the transformations required by our CEGAR-based approach for solving \TWOASPQW. We then illustrate each transformation through examples, with particular emphasis on \emph{counterexample search} and \emph{abstraction refinement} stages.

We recall that the \TWOASPQW programs we consider in our approach are of the form:
\begin{equation}\label{eq:2_aspqw_app}
    \Box P_1\overline{\Box} P_2:C:C^{\omega}
\end{equation}   

\subsection{Counterexample Search}\label{sec:program_transform_ctr}
\setcounter{definition}{0}
\begin{definition}[Complement of stratified program]\label{def:complement_transformation_app}
Let $P$ be a stratified ASP program with hard constraints, then the \emph{complement} of $P$, denoted by $\neg P$, is obtained from $P$ by $(i)$ transforming hard constraints $\leftarrow l_1,\ldots,l_n$ into rules of the form $v \leftarrow l_1,\ldots,l_n$; and $(ii)$ adding an hard constraint of the form $\leftarrow\naf v$.
\end{definition}
\setcounter{definition}{10}

The complement of a stratified program $P$ with constraints is obtained by ($i$) transforming each hard constraint into a rule having as head the fresh atom $v$ and by ($ii$) adding an hard constraint which forces the fresh atom $v$ to be true. As a result, at least one hard constraint from $P$ must be violated for $v$ to be derived as true and thus for $\neg P$ to be incoherent.
The following example clarifies how the complement of a stratified program works. 
\begin{example}\label{complement_example_app}
    Consider the following stratified program $C$ and its complement $\neg C$ where $v \in B_{\neg C}$ is the fresh symbol introduced to capture constraint violations from $C$.
    
    \begin{minipage}{0.20\textwidth}
        \[
            C = \left\{\begin{array}{lcr}
            l& \leftarrow & d,\ c.\\
            & \leftarrow & l,\ b\\
              \end{array}\right\}
        \]        
    \end{minipage}
    \hspace{0.16\textwidth}
    \begin{minipage}{0.20\textwidth}
        \[
            \neg C = \left\{\begin{array}{lcr}
            l & \leftarrow & d,\ c.\\
            v & \leftarrow & l,\ b.\\
            & \leftarrow & \naf v.\\
              \end{array}\right\}
        \]
    \end{minipage}
    
    Intuitively, whenever the constraint $\leftarrow l, b$ in $C$ is violated, the rule $v \leftarrow l, b$ in $\neg C$ derives $v$, thereby preventing $\neg C$ from being coherent. Conversely, when the constraint $\leftarrow l, b$ in $C$ is satisfied, $v$ cannot be derived in $\neg C$, and hence $\neg C$ is incoherent.
\end{example}
\setcounter{prop}{1}
\begin{prop}
Given a stratified ASP program $P$, its complement $\neg P$ is coherent iff $P$ is incoherent.
\end{prop}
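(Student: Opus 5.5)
The plan is to use the standard fact already invoked in the paper: a stratified program $P$ whose only constraints are hard constraints has at most one answer set. Let $R$ be the set of rules of $P$ with nonempty head and $K$ the set of its hard constraints. The program $R$ is stratified and constraint-free, so it has exactly one answer set, its perfect model $M_R$. The answer sets of $P$ are then $\{M_R\}$ if $M_R$ satisfies every constraint in $K$, and $\emptyset$ otherwise. In other words, $P$ is incoherent if and only if there is some constraint $\leftarrow l_1,\ldots,l_n$ in $K$ whose body is true in $M_R$.

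Next, I analyse $\neg P$ through a splitting argument, using the splitting-set theorem recalled above. Take $U = at(R)$. Since $v$ is fresh and occurs only in the heads of the new rules $v\leftarrow B_r$, $r\in K$, and in the constraint $\leftarrow \naf v$, the set $U$ is a splitting set. The bottom program is $b_U(\neg P)=R$, with unique answer set $M_R$. The partially evaluated top program $e_U(t_U(\neg P),M_R)$ consists of the fact $v\leftarrow$ for each $r\in K$ whose body is true in $M_R$, together with $\leftarrow \naf v$. (Rules whose body is false in $M_R$ are deleted by the definition of $e_U$.)

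This top program has an answer set if and only if at least one such fact is present. In that case its unique answer set is $\{v\}$. When no such fact is present, the only candidate is $\emptyset$, which violates $\leftarrow \naf v$. Also, $\neg P$ is still stratified, because $v$ sits in a stratum above all atoms of $R$ and is used negatively only in a constraint. Combining the two parts, $\neg P$ is coherent if and only if some constraint of $K$ has its body true in $M_R$. By the first paragraph, this holds if and only if $P$ is incoherent.

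The only delicate step is the first one, namely that a stratified program with constraints has a unique candidate model, which is then filtered by the constraints. I would cite this directly, as the paper already does via \cite{DBLP:journals/csur/DantsinEGV01}, rather than reprove it. The one point worth making explicit is that adding or removing hard constraints does not change the perfect model of $R$, since constraints only eliminate models.
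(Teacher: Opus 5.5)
Your proof is correct and follows essentially the paper's own reasoning. The paper gives no formal proof: it only notes that a stratified program with hard constraints has a unique candidate model, which survives iff the constraints hold, and that $v$ is derived exactly when some constraint body is true, which $\leftarrow\naf v$ then requires. Your splitting argument makes this precise.

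One small fix: take $U=at(P)$ rather than $at(R)$. A constraint body may mention atoms that do not occur in $R$, and then $e_U$ leaves residual literals over underivable atoms rather than pure facts $v\leftarrow$. This is harmless, but it is not what you wrote. Alternatively, drop the splitting and apply your first-paragraph fact directly to $\neg P$, which you already observe is stratified.
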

\setcounter{prop}{4}
\setcounter{definition}{1}
\begin{definition}[Relaxed program]\label{def:relaxed_transformation_app}
Let $P$ be a stratified program with hard constraints, $l$ be an integer, and $unsat$ be a fresh atom not appearing anywhere else. Then $relaxed(P,l)$ is defined as:
\[
    relaxed(P,l) = \left\{\begin{array}{lllr}
        H_r     & \leftarrow    & B_r.          & \forall r \in P s.t. H_r \ne \emptyset\\
        unsat   & \leftarrow    & B_r.          & \forall r \in P s.t. H_r = \emptyset\\
                & \leftarrow^w  & unsat. [1@l]  & \\
     \end{array}\right\}
    \]
\end{definition}
\setcounter{definition}{10}

Intuitively, $relaxed(\cdot, \cdot)$ is similar to $\neg C$ but the fresh atom introduced as head of hard constraints of $P$ is used to assign a penalty if $P$ is incoherent. 

The following example better clarifies how the transformation works.  
\begin{example}
    Let us consider the following stratified program $C$ and $relaxed(C,0)$
    
    \begin{minipage}{.35\textwidth}
    \[
        C=\left\{\begin{array}{lll}
        a       & \leftarrow    & d,\ c\\
                & \leftarrow    & a,\ b\\
          \end{array}\right\}
    \]    
    \end{minipage}
    \hfill
    \begin{minipage}{.65\textwidth}
    \[
        relaxed(C,0) = \left\{\begin{array}{lll}
        a       & \leftarrow    & d,\ c\\
        unsat   & \leftarrow    & a,\ b\\
                & \leftarrow^{\omega}  & unsat\ [1@0]\\  
          \end{array}\right\}
    \]        
    \end{minipage}
    In this case, $relaxed(C,0)$ is obtained from $C$ by introducing the atom $unsat$ in the head of each hard constraint in $C$. Such an atom explicitly represents constraint violations. Moreover, the weak constraint $\leftarrow^{\omega} unsat\ [1@l]$ assigns a penalty of $1$ at priority level $0$ whenever $unsat$ is true, thereby enforcing a preference to satisfy all hard constraints in $C$.
\end{example}

\setcounter{definition}{2}
\begin{definition}[Countermove program]\label{def:countermove_program_app}
Let $\Pi$ be a \TWOASPQW of the form (\ref{eq:2_aspqw_app}) and $l_{min}$ be the smallest priority level among weak constraints in $P_2$, then the \emph{countermove program} for $\Pi$ is:
    \[
    ctr(\Pi) = \left\{\begin{array}{lr}
        P_2 \cup relaxed(\neg C,l_{min}-1) & if\ \Box = \exists^{st}  \\
        P_2 \cup relaxed(C,l_{min}-1)      & if\ \Box = \forall^{st}  \\
    \end{array}\right.
    \]
\end{definition}
\setcounter{definition}{10}
\setcounter{prop}{3}
\begin{prop}\label{prop:ctr_move_app}
Let $\Pi$ be a \TWOASPQW program of the form~(\ref{eq:2_aspqw_app}), $M_1 \in OptAS(P_1)$ be a move for $\Box$. There exists $M_2 \in OptAS(ctr(\Pi)\cup \fix{P_1}{M_1})$ such that $unsat \notin M_2$ if and only if $M_2|_{\heads{P_2}}$ is a countermove to $M_1$ for $\overline{\Box}$.
\end{prop}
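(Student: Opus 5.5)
The plan is to relate the answer sets of $Q = ctr(\Pi)\cup\fix{P_1}{M_1}$ to those of $R = P_2\cup\fix{P_1}{M_1}$, and then use the fact that the new weak constraint sits strictly below every level of $P_2$. I read the statement as: there is $M_2\in OptAS(Q)$ with $unsat\notin M_2$ iff some countermove to $M_1$ for $\overline{\Box}$ exists, and the witnesses correspond via $M_2|_{\heads{P_2}}$. I treat $\Box=\exists^{st}$, where $ctr(\Pi)=P_2\cup relaxed(\neg C,l_{min}-1)$. The case $\Box=\forall^{st}$ is symmetric, with $relaxed(C,\cdot)$ and "satisfies $C$" in place of "violates $C$".

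\emph{Step 1 (splitting).} Atoms of $C$, and hence of $\neg C$ and $relaxed(\neg C,l)$, that are heads there are fresh with respect to $P_2$. So $U=at(R)$ is a splitting set of $Q$, and the bottom part is $R$. By the splitting theorem of \citeN{DBLP:conf/iclp/LifschitzT94}, every $N\in AS(Q)$ has the form $X\cup Y$, where $X\in AS(R)$ and $Y$ is an answer set of the stratified top $e_U(relaxed(\neg C,l_{min}-1),X)$. Since $relaxed(\neg C,\cdot)$ is stratified and always coherent, by the proposition on $relaxed$, $Y$ is unique and determined by $X$. Moreover $unsat\in Y$ iff some hard constraint of $\neg C$ fires under $X$. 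The only hard constraint of $\neg C$ is $\leftarrow\naf v$, and $v$ is derived iff $C$ is violated by $X$. Hence $unsat\in Y$ iff $X$ satisfies $C$. This gives a bijection $X\mapsto X\cup Y_X$ from $AS(R)$ to $AS(Q)$, with $unsat\notin X\cup Y_X$ iff $X$ violates $C$.

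\emph{Step 2 (optimality).} The costs of $X\cup Y_X$ at levels $\ge l_{min}$ coincide with those of $X$ in $R$, since weak constraints of $P_2$ mention only atoms of $U$. The single added weak constraint lives at level $l_{min}-1$, which is lower than all levels of $P_2$. Comparing lexicographically from the highest level, $X\cup Y_X$ is optimal in $Q$ iff two conditions hold: $X\in OptAS(R)$, and among the optimal answer sets of $R$ (which all share the same cost vector) $X$ minimises the $unsat$ penalty. That is, either $X$ violates $C$, or no optimal answer set of $R$ violates $C$.

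\emph{Step 3 (conclusion).} Suppose some $M_2\in OptAS(Q)$ has $unsat\notin M_2$. Write $M_2=X\cup Y_X$; then $X\in OptAS(R)$ violates $C$, so $X|_{\heads{P_2}}=M_2|_{\heads{P_2}}$ is a countermove. Conversely, let $X\in OptAS(R)$ violate $C$. Then $X\cup Y_X$ has penalty $0$ at level $l_{min}-1$ and the optimal cost at all higher levels, so by Step 2 it is in $OptAS(Q)$ and does not contain $unsat$.

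The main obstacle is Step 1. It requires checking carefully that the heads of $relaxed(\neg C,\cdot)$ ($v$, $unsat$, and the defined atoms of $C$) are disjoint from $at(P_2)$, which the paper implicitly assumes because $C$ only reads atoms of $P_1,P_2$. It also requires the bookkeeping of the trivial case where $P_2$ has no weak constraints, in which $l_{min}$ must be taken as any fixed level.
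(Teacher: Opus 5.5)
Your proof is correct. The paper gives no formal proof of this proposition. It offers only the informal justification preceding the definition of $ctr(\Pi)$: relaxing $C$ keeps the program coherent, so no optimal answer set of $P_2\cup\fix{P_1}{M_1}$ is filtered out, and the penalty at level $l_{min}-1$ leaves $P_2$'s own optimization dominant. It also works through an example. Your splitting-set bijection between $AS(P_2\cup\fix{P_1}{M_1})$ and $AS(ctr(\Pi)\cup\fix{P_1}{M_1})$, followed by the lexicographic comparison at the extra bottom level, is a rigorous version of exactly that reasoning, in the same style as the appendix lemmas. The assumption you flag, that the heads of $C$ (and $v$, $unsat$) are fresh with respect to $at(P_1)\cup at(P_2)$, is genuinely needed for the statement to hold.
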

\setcounter{prop}{4}

The following example better clarifies the intuition behind Proposition~\ref{prop:ctr_move_app}.
\begin{example}\label{ex:twoaspw_program_example_app}
    Let $\Pi$ be the following \TWOASPQW:
    
    \begin{minipage}{.30\textwidth}
        \[
            P_1 = \left\{\begin{array}{lll}
                a  & \leftarrow & \naf na\\
                na & \leftarrow & \naf a\\
                b  & \leftarrow & \naf nb\\
                nb & \leftarrow & \naf b\\
              \end{array}\right\}
        \]
    \end{minipage}
    \begin{minipage}{.39\textwidth}
        \[
            P_2 = \left\{\begin{array}{lll}
                c  & \leftarrow             & \naf nc\\
                nc & \leftarrow             & \naf c\\
                   & \leftarrow^{\omega}    & a,\ \naf c\ [1@1]\\
                   & \leftarrow^{\omega}    & b,\ \naf nc\ [1@1]\\
              \end{array}\right\}
        \]    
    \end{minipage}
    \begin{minipage}{.27\textwidth}
        \[
            C = \left\{\begin{array}{l}
            \leftarrow b,\ c\\
            \leftarrow nb,\ nc\\
            \leftarrow b,\ a,\ nc\\
              \end{array}\right\}
        \]
    \end{minipage}
    \smallskip

    In this case $ctr(\Pi)$ is the following program:
    \[
        ctr(\Pi) = \left\{\begin{array}{lclcl}
            c\leftarrow \naf nc                     & & nc \leftarrow \naf c                    & & \\
            \leftarrow^{\omega} a,\ \naf c\ [1@1]   & & \leftarrow^{\omega} b,\ \naf nc\ [1@1]  & & \\
            v \leftarrow b,\ c                      & & v \leftarrow nb,\ nc                    & & v \leftarrow b,\ a,\ nc\\
            unsat \leftarrow \naf v                 & & \leftarrow^{\omega} unsat\ [1@0]        & & \\
          \end{array}\right\}
    \]

    Let us consider $M_1 =\{na,nb\} \in OptAS(P_1)$. 
    In this case, the optimal answer sets of $P_2\cup\fix{P_1}{M_1}$ are $\{na, nb, c\}$ and $\{na, nb, nc\}$. 
    Since $\{na, nb, nc\}$ violates the hard constraint $\leftarrow nb,nc \in C$ then $\{nc\}$ is a countermove to $M_1$ for $\overline{\Box}$.
    Similarly, let $P = ctr(\Pi)\cup\fix{P_1}{M_1}$ admits two answer sets, $M_2=\{na, nb, c, unsat\}$ and $M_2'=\{na, nb, nc, v\}$, which correspond to optimal answer sets of $P_2\cup\fix{P_1}{M_1}$.
    Here we observe that, $\mathcal{C}(P,M_2,1) = \mathcal{C}(P,M_2',1) = 0$ whereas $\mathcal{C}(P,M_2,0) > \mathcal{C}(P,M_2',0)$ as $M_2$ contains $unsat$ and $M_2'$ does not.
    Thus, $M_2'$ is the only optimal answer set of $P$. 
    Since $unsat$ is false w.r.t. $M_2'$, then $M_2'|_{\heads{P_2}} = \{nc\}$ is a countermove to $M_1$ for $\overline{\Box}$.
    
    Let us now consider $M_1=\{na,b\}$.
    In this case, the answer sets of $P_2\cup\fix{P_1}{M_1}$ are $\{na, b, c\}$ and $\{na, b, nc\}$.
    Here we can observe that $\{na, b, c\}$ violates the weak constraint $\leftarrow^{\omega} b,\naf nc \in P_2$ whereas $\{na, b, nc\}$ satisfies all weak constraints in $P_2$.
    Thus, $\{na, b, nc\}$ is the only optimal answer set of $P_2\cup\fix{P_1}{M_1}$. Since $\{na, b, nc\}$ satisfies $C$ then no countermoves to $M_1$ for $\overline{\Box}$ exists.
    Similarly, the answer sets of $P = ctr(\Pi)\cup\fix{P_1}{M_1}$ are $M_2 = \{na, b, c, v\}$ and $M_2' = \{na, b, nc,unsat\}$. 
    In this case, $M_2$ is dominated by $M_2'$ as $\mathcal{C}(P,M_2,1) > \mathcal{C}(P,M_2',1)$, and thus the only optimal answer set of $P$ is $M_2'$.
    Moreover, $unsat$ is true w.r.t $M_2'$ then no countermove to $M_1$ for $\overline{\Box}$ exists.
\end{example}

\subsection{Abstraction Refinement}\label{sec:program_transform_ref}
Let $\Pi$ be a \TWOASPQW of the form~\ref{eq:2_aspqw_app}, $M_1,M_1'$ be two moves for $\Box$, and $M_2$ be a countermove to $M_1$ for $\overline{\Box}$.

Then, then $M_2$ is not a countermove to $M_1'$ for $\overline{\Box}$ if one of the following condition holds:
\begin{enumerate}
    \item there is no $M_2' \in AS(P_2\cup\fix{P_1}{M_1'})$ such that $M_2'|_{\heads{P_2}} = M_2$\label{cond:no_ans_app};
    \item there exists $M_2' \in AS(P_2\cup\fix{P_1}{M_1'})$ such that $M_2'|_{\heads{P_2}} = M_2$ but $M_2'\notin OptAS(P_2\cup\fix{P_1}{M_1'})$ (i.e. $M_2'$ is dominated by some answer set of $P_2\cup\fix{P_1}{M_1'}$);\label{cond:no_opt_ans_app}
    \item there exists $M_2' \in OptAS(P_2\cup\fix{P_1}{M_1'})$ such that $M_2'|_{\heads{P_2}} = M_2$ but $M_2'$ satisfies $C$ and $\Box = \exists^{st}$ (resp. violates $C$ and $\Box=\forall^{st}$).\label{cond:sat_c_app}
\end{enumerate}

The following transformation will be instrumental for defining the \emph{Refinement program}.

For producing rules over a fresh signature we need to introduce a rewriting transformation.
To this end, we introduce a predicate substitution function that maps each predicate $p$ with a fresh ones of the form $p^{\alpha}_\beta$, where $\alpha$ and $\beta$ are strings, to obtain rules over a fresh signature for each discovered countermove.
More in detail, let $L$ be a set of literals and $\epsilon$ be an ASP expression (i.e. rule, program, etc.), then $\sigma^{\alpha}_\beta(L,\epsilon)$ denotes the expression obtained from $\epsilon$ by mapping each positive (resp. negative) literal $p(\vec{t}) \in L$ (resp. $\naf p(\vec{t}) \in L$) with $p^{\alpha}_\beta(\vec{t})$ (resp. $\naf p^{\alpha}_\beta(\vec{t})$).

\setcounter{definition}{3}
\begin{definition}[Check Answer Set]\label{def:same_transformation_app}
    Let $P$ be an ASP program and $M$ be an interpretation, then $checkAS(P,M)$ is:
    \[
    checkAS(P, M) = \left\{\begin{array}{lr}
    \sigma_M^-(\heads{P},\{a \leftarrow \vert a \in M\})&\\
    \sigma_M^+(\heads{P}, \sigma_M^-(\overline{\heads{P}},r)) & \forall r \in P, H_r\neq \emptyset\\
    fail_M \leftarrow \sigma_M^+(\heads{P}, \sigma_M^-(\overline{\heads{P}}, B_r)) & \forall r \in P, H_r = \emptyset\\
    fail_M \leftarrow p(\vec{t})_M^+,\naf p(\vec{t})_M^- & \forall\ p(\vec{t}) \in \heads{P} \\
    fail_M \leftarrow p(\vec{t})_M^-,\naf p(\vec{t})_M^+ & \forall\ p(\vec{t}) \in \heads{P}\\
    as_M \leftarrow \naf fail_M\ & \
    \end{array}\right.
    \]
\end{definition}
\setcounter{definition}{10}

The following example better clarifies how $CheckAS(\cdot,\cdot)$ works.
\begin{example}
    Consider $P_2$ from Example~\ref{ex:twoaspw_program_example_app} and $M = \{c\}$ then $checkAS(P_2,M)$ is of the following form:
    \[
    checkAS(P_2, M) = \left\{\begin{array}{lcl}
    c^{+}_{M} \leftarrow \naf nc^{-}_M   & \quad & nc^{+}_{M} \leftarrow \naf c^{-}_M\\
    c^{-}_{M} \leftarrow                 &       & \\
    fail_M \leftarrow c_M^+,\naf c_M^-   & \quad & fail_M \leftarrow nc_M^+,\naf nc_M^- \\
    fail_M \leftarrow c_M^-,\naf c_M^+   & \quad & fail_M \leftarrow nc_M^-,\naf nc_M^+ \\
    as_M \leftarrow \naf fail_M\\
    \end{array}\right\}
    \]
    In this case, the program $P_2$ is made of rules $c \leftarrow \naf nc$ and $nc \leftarrow \naf c$.
    These two rules are rewritten as follows: ($i$) negative literals over atoms in $\heads{P_2}$ (i.e. $\overline{\heads{P_2}} = \{\naf c,\naf nc\}$) are replaced, respectively, by $\naf c_M^-$ and $\naf nc_M^-$, which form the negative signature; and ($ii$) positive literals over atoms in $\heads{P_2}$ (i.e. $\heads{P_2} = \{c, nc\}$) are replaced, respectively, by $c_M^+$ and $nc_M^+$, which form positive signature.
    The interpretation $M = \{c\}$ is then encoded as facts over the negative signature, which allows the previous rules to be simplified thus yielding the the GL-reduct of $P_2$ w.r.t. $M$ (i.e. $P_2^M$).
    Next, the rules having $fail_M$ in the head check whether the truth values of atoms over the positive signature (i.e. a model of the reduct) match the truth value of those over the negative signature (i.e. the interpretation $M$).
    Finally, the atom $as_M$ is derived if and only if the truth values of all atoms over the two signatures coincide, that is, $fail_M$ cannot be derived as true.
\end{example}

\setcounter{definition}{5}
\begin{definition}[Dominated program]\label{dominated_transformation_app}
    Let $P_1$ and $P_2$ be two ASP programs, then $checkDom(P_1,P_2)$ is the following program: 
    \[
    \begin{array}{lclr}
    diff_{P_1,P_2} & \leftarrow & cl_{P_1}(C1,L), cl_{P_2}(C2,L), C1 \neq C2. & \\
    hasHigher_{P_1,P_2}(L) & \leftarrow & diff_{P_1,P_2}(L), diff_{P_1,P_2}(L1), L<L1.& \\
    highest_{P_1,P_2}(L) & \leftarrow & diff_{P_1,P_2}(L), \naf hasHigher_{P_1,P_2}(L).& \\
    dom_{P_1,P_2} & \leftarrow & highest_{P_1,P_2}(L), cl_{P_1}(C1,L), cl_{P_2}(C2,L), C2 < C1 & \\
    \end{array}
    \]
    where $cl_{P_1}$ and $cl_{P_2}$ are, respectively, the predicates introduced by $cost(P_1)$ and $cost(P_2)$ (Definition~\ref{cost_transformation_app}); whereas $diff$, $hasHigher$, $highest$, and $dominated$ are fresh predicates.
\end{definition}

\begin{definition}[Controlled program]\label{controlled_program_transformation_app}
    Let $P$ be an ASP program and $l$ be a literal not appearing in $P$, then 
    $or(P,l)$ is the program of the form $\{H_r \leftarrow B_r, l \mid r \in P\}$
\end{definition}

\begin{definition}[Refinement Program]\label{def:refinement_program_app}
Let $\Pi$ be a \TWOASPQW program of the form~(\ref{eq:2_aspqw_app}), $M$ be a move for $\Box$ and $CE$ be a countermove to $M$ for $\overline{\Box}$. The \textit{refinement program} is defined as follows:
     \[
    ref(\Pi, CE) = \left\{\begin{array}{lr}
        checkAS(P_2,CE) & \\
        \leftarrow^w as_{CE}\ [1@l_{min}-1] & \\
        or(cost(P_2^{CE}),as_{CE}) & \\
        or(clone(\mathcal{R}(P_2)),as_{CE}) & \\
        or(cost(clone(P_2)),as_{CE}) & \\
        or(checkDom(P_2^{CE}, clone(P_2)),as_{CE}) & \\
        \leftarrow^w as_{CE}, \naf dom_{P_2^{CE},clone(P_2)}\ [1@l_{min}-2] & \\
        or(\sigma_{CE}^-(lits(P_2)\cup lits(C'), C'),as_{CE}) & \\
    \end{array}\right\}
    \]
where $P_2^{CE} = \sigma_{CE}^-(lits(P_2),P_2)$, $as_{CE}$ is the predicate introduced by $checkAS(P_2, CE)$, $C' = relaxed(C,l_{min}-3)$ if $\Box=\exists^{st}$; otherwise $C' = relaxed(\neg C,l_{min}-3)$, and $l_{min}$ is the smallest priority level among weak constraints of $P_1$.
\end{definition}
\setcounter{definition}{10}

To explain the idea behind $ref(\Pi,CE)$ we use the following example.
\begin{example}
Let $\Pi$ be the \TWOASPQW program from Example~\ref{ex:twoaspw_program_example_app}, we know that $M_1 = \{na,nb\} \in OptAS(P_1)$ is a move for $\Box$ and $M_2 = \{nc\}$ is a countermove to $M_1$ for $\overline{\Box}$. We therefore aim to refine the abstraction $P_1$ as to avoid all those optimal answer sets of $P_1$ that admit $M_2$ as countermove. 
To this end, we use the rules obtained by $Ref(\Pi, M_2)$ which are reported below.

\smallskip\noindent
\begin{minipage}[t]{.38\textwidth}
\noindent \% $checkAS(P_2,M_2)$
\begin{flalign*}
&
\begin{array}{l}
        c_{M_2}^+   \leftarrow \naf nc_{M_2}^-\\
        nc_{M_2}^+  \leftarrow          \naf c_{M_2}^-\\
        nc_{M_2}^-  \leftarrow          \\
        fail_{M_2}  \leftarrow          c_{M_2}^+ \naf c_{M_2}^-\\
        fail_{M_2}  \leftarrow          nc_{M_2}^+ \naf nc_{M_2}^-\\
        as_{M_2}    \leftarrow          \naf fail_{M_2}\\
                    \leftarrow^{\omega} as_{M_2}\ [1@-1]\\
\end{array}
&
\end{flalign*}    
\end{minipage}
\begin{minipage}[t]{.49\textwidth}
   \noindent \% $or(cost(P_2^{M_2}),as_{M_2})$
    \begin{flalign*}
    &
    \begin{array}{l}
            v_{P_2^{M_2}}(1,1)  \leftarrow a,\ \naf c_{M_2}^-, as_{M_2}\\
            v_{P_2^{M_2}}(1,1)  \leftarrow b,\ \naf nc_{M_2}^-, as_{M_2}\\
            cl_{P_2^{M_2}}(C,1) \leftarrow \#sum\{1,1:v_{P_2^{M_2}}(1,1)\}=C, as_{M_2}\\
    \end{array}
    &
    \end{flalign*} 
\end{minipage}

\smallskip\noindent
\begin{minipage}[t]{.38\textwidth}
\noindent \% $or(clone(\mathcal{R}(P_2)),as_{M_2})$
\begin{flalign*}
&
\begin{array}{l}
        c_{clone}   \leftarrow \naf nc_{clone}, as_{M_2}\\
        nc_{clone}  \leftarrow \naf c_{clone}, as_{M_2}\\
\end{array}
&
\end{flalign*}    
\end{minipage}
\begin{minipage}[t]{.49\textwidth}
    \noindent \% $or(cost(clone(P_2)),as_{M_2})$
    \begin{flalign*}
    &
    \begin{array}{l}
            v_{clone(P_2)}(1,1) \leftarrow a,\ \naf c_{clone}, as_{M_2}\\
            v_{clone(P_2)}(1,1) \leftarrow b,\ \naf nc_{clone}, as_{M_2}\\
            cl_{clone(P_2)}(C,1)\leftarrow \#sum\{1,1:v_{clone(P_2)}(1,1)\}=C, as_{M_2}\\
    \end{array}
    &
    \end{flalign*}
\end{minipage}

\smallskip\noindent
\begin{minipage}{\textwidth}
    \noindent \% $or(checkDom(P_2^{M_2}, clone(P_2)),as_{M_2}))$
\begin{flalign*}
&
\begin{array}{l}
        diff_{P_2^{M_2}, clone(P_2)} \leftarrow cl_{P_2^{M_2}}(C1,L), cl_{clone(P_2)}(C2,L), C1 \ne C2, as_{M_2}\\
        hasHigher_{P_2^{M_2},clone(P_2)}(L) \leftarrow diff_{P_2^{M_2}, clone(P_2)}(L), diff_{P_2^{M_2}, clone(P_2)}(L1), L<L1, as_{M_2}\\
        highest_{P_2^{M_2}, clone(P_2)}(L) \leftarrow  diff_{P_2^{M_2}, clone(P_2)}(L), \naf hasHigher_{P_2^{M_2}, clone(P_2)}(L), as_{M_2}\\
        dom_{P_2^{M_2}, clone(P_2)} \leftarrow highest_{P_2^{M_2}, clone(P_2)}(L), cl_{P_2^{M_2}, clone(P_2)}(C1,L), as_{M_2}\\
                                    \leftarrow^{\omega} \naf as_{M_2}, \naf dom_{P_2^{M_2},clone(P_2)}\ [1@-2]\\
\end{array}
&
\end{flalign*}
\end{minipage}

\smallskip\noindent
\begin{minipage}{.6\textwidth}
\noindent \% $or(\sigma_{M_2}^-(lits(P_2)\cup lits(C), relaxed(C,-3)),as_{M_2})$
\begin{flalign*}
&
    \begin{array}{l}
            unsat_{M_2}^{-} \leftarrow b, c_{M2}^{-}, as_{M_2}.\\
            unsat_{M_2}^{-} \leftarrow nb, nc_{M_2}^{-}, as_{M_2}.\\
            unsat_{M_2}^{-} \leftarrow b, a, nc_{M_2}^{-}, as_{M_2}.\\
                            \leftarrow^{\omega} unsat_{M_2}^{-}.\ [1@-3] \\
    \end{array}
&
\end{flalign*}
\end{minipage}
\smallskip

As it can be observed, the refinement rules are made of several blocks which encode the the three conditions for detecting if $M_2$ is again a countermove. 
More precisely, the first block contains $checkAS(P_2,M_2)$ and thus simulates the reduct of $P_2$ w.r.t. $M_2$. 
Here, we can observe that $M_2 = \{nc\}$ is encoded as the fact $nc_{M_2}^-\leftarrow$ over the negative signature. Since $nc_{M_2}^-$ is true, the rule $c_{M_2}^+\leftarrow \naf nc_{M_2}^-$ has a false body, whereas the rule $nc_{M_2}^+\leftarrow \naf c_{M_2}^-$ can be simplified as $c_{M_2}^-$ does not appear in the head of any rule and is therefore false. As a result, during the evaluation we will obtain the reduct of $P_2$ w.r.t. $M_2$.
The remaining rules derive the atom $as_{M_2}$ if and only if the negative signature matches the positive one (i.e. $fail_{M_2}$ is false). 
If $as_{M_2}$ is derived as false, then condition~\ref{cond:no_ans_app} is satisfied.
Consequently, the weak constraint $\leftarrow^{\omega} as_{M_2}$ enforces a preference for moves that satisfy condition~\ref{cond:no_ans_app}.
Moreover, when $as_{M_2}$ is derived as false, since $as_{M_2}$ appears in the body all the remaining rules, then these rules are trivially satisfied and no further checks are required. Conversely, if $as_{M_2}$ is derived as true, all the remaining rules are activated in order to 
verify the other conditions.

To check condition~\ref{cond:no_opt_ans_app}, we instead use the blocks $\%or(cost(P_2^{M_2}),as_{M_2})$ - \%$or(checkDom(P_2^{M_2}$ and $clone(P_2)),as_{M_2})$.
More precisely, the rules in $cost(P_2^{M_2})$ compute the cost of $M_2$ w.r.t. weak constraints in $P_2$. Note that, $P_2^{M_2}$ is obtained as $\sigma_{M_2}^-(lits(P_2),P_2)$ since $M_2$ has been encoded as facts over the negative signature (i.e. $nc_{M_2}^-$). Thus, the negative signature for atoms in $P_2$ is used to compute the cost of $M_2$.
To verify that $M_2$ is optimal, it must be compared with all answer sets of $P_2$. This one-to-one comparison, is achieved by adding a clone of the rules from program $P_2$ (i.e. the block $clone(\mathcal{R}(P_2))$). Note that, the weak constraints in $P_2$ are not cloned in order to preserve all the answer sets of $P_2$. The rules in $cost(clone(P_2))$ are then used to compute the cost of an answer set $M_{clone}$ of $clone(P_2)$, in the same spirit as $cost(P_2^{M_2})$. As a result, atoms of the form $cl_{P_2^{M_2}}(\cdot,\cdot)$ and $cl_{clone(P_2)}(\cdot,\cdot)$ encode the cost of $M_2 \in AS(P_2)$ and $M_{clone} \in AS(clone(P_2))$, respectively.
Finally, rules in the block $checkDom(P_2^{M_2}, clone(P_2))$ verify whether $M_2$ is dominated by $M_{clone}$. More precisely, the atom $dom_{P_2^{M_2},clone(P_2)}$ is derived as true if and only if $M_2$ is dominated by $M_{clone}$, and thus condition~\ref{cond:no_opt_ans_app} is satisfied. Consequently, the weak constraint $\leftarrow^{\omega} as_{M_2},\naf dom_{P_2^{M_2},clone(P_2)}\ [1@l_{min}-2]$ enforces a preference for moves that satisfy condition~\ref{cond:no_opt_ans_app}.

The last condition, namely condition~\ref{cond:sat_c_app}, is encoded by the block $or(\sigma_{M_2}^-(lits(P_2)\cup lits(C), relaxed(C,-3)), as_{M_2})$.
The rules contained in this block check whether $C$ is satisfied or not. 
In this example, condition~\ref{cond:sat_c_app} is satisfied if and only if $C$ is coherent. Accordingly, $relaxed(C,-3)$ is used to capture the incoherence of $C$ by means of the atom $unsat$. 
Note that the program $C$ may contain atoms appearing in $P_2$.
Since $M_2$ has been encoded as facts over the negative signature (i.e., $nc_{M_2}^-)$,the rules from $relaxed(C,-3)$ must be mapped onto this signature using $\sigma_{M_2}^-(\cdot)$.
As a result, the atom $unsat$ is mapped to $unsat_{M_2}^-$, which is derived as false if and only if $M_2$ satisfies $C$, and thus condition~\ref{cond:sat_c_app} holds. Consequently, the weak constraint $\leftarrow^{\omega} unsat_{M_2}^-\ [1@-3]$ enforces a preference for moves that satisfy condition~\ref{cond:sat_c_app}.

Finally, the next move for $\Box$ can be computed from an optimal answer set $M_1'$ of $P_1\cup ref(\Pi,M_2)$. 
Note that, if all the weak constraints introduced by $ref(\Pi,M_2)$ are violated w.r.t. $M_1'$, then conditions~\ref{cond:no_ans_app},~\ref{cond:no_opt_ans_app}, and~\ref{cond:sat_c_app} are violated, and so, $M_2$ is a countermove also to $M_1'$. Hence, $M_1'$ can be considered as the next move for $\Box$ if and only if at least one weak constraint in $ref(\Pi,M_2)$ is satisfied w.r.t. $M_1'$.
Conversely, if $M_1'$ violates all the weak constraints of $ref(\Pi,M_2)$, then all the optimal answer sets of $P_1\cup ref(\Pi,M_2)$ violate the weak constraints in $ref(\Pi,M_2)$. In this case, all possible moves for $\Box$ have $M_2$ as countermove and thus no further move exists for $\Box$.
\end{example}

\section{Solving \TWOASPQW Programs}\label{sec:extended_cegar}
In this section we give a more detailed description of the CEGAR-based algorithm for solving \TWOASPQW.

\begin{algorithm}
\footnotesize
    \caption{CEGAR for \TWOASPQW}
    \textbf{Input}: A \TWOASPQW of the form $\Box_1 P_1 \overline{\Box_1} P_2:C$\\
    \textbf{Output}:$M_1$ if a winning move exists for $\Box_1$, $NULL$ otherwise
    \begin{algorithmic}[1]\label{alg_2aspq_app}
        \STATE $CM_S = \emptyset$\label{alg:initialize_countermoves_app}
        \STATE $M_1 = solveOpt(P_1)$\label{alg:first_candidate_app}
        \WHILE{$M_1 \ne \bot$}\label{alg:open_while_app}
        \STATE $M_2 = solveOpt(ctr(\Pi) \cup \fix{P_1}{M_1})$\label{alg:solve_ce_program_app}
        \IF{$M_2 = \bot \lor \mathcal{C}(ctr(\Pi) \cup \fix{P_1}{M_1}, M_2, 0)=1$} \STATE $return\ M_1$\label{alg:no_counterexample_app}
        \ELSE
        \STATE $CE = M_2 |_{\heads{P_2}}$\label{alg:ce_found_app}
        \STATE $CM_s = CM_s \cup CE$\label{alg:add_ce_to_countermoves_app}
        \ENDIF
        \STATE $A = P_1 \bigcup_{CE \in CM_s} ref(\Pi, CE)$\label{alg:construct_refinement_app}
        \STATE $N = solveOpt(A)$\label{alg:find_new_candidate_app}
        \IF{$(\mathcal{C}(A, M_1,l_{min}-1) = 1 \land \mathcal{C}(A, M_1,l_{min}-2) = 1 \land \mathcal{C}(A, M_1,l_{min}-3) = 1 )$}
        \STATE $return\ NULL$\label{alg_no_new_canidate_weak_app}
        \ENDIF\\
        $M_1 = N|_{\heads{P_1}}$ 
        \ENDWHILE
        \STATE $return\ NULL$\label{alg:no_new_candidate_unsat_app}
    \end{algorithmic}
\end{algorithm}

Based on the definitions of \emph{countermove program} (i.e., Definition~\ref{def:countermove_program_app}) and \emph{refinement program} (i.e., Definition~\ref{def:refinement_program_app}), we can construct a procedure based on CEGAR for deciding the coherence of \TWOASPQW programs (see Algorithm~\ref{alg_2aspq_app}).
In particular, such a procedure relies on an off-the-shelf ASP solver, which is used to compute optimal answer sets of both the (refined) abstraction and the countermove program.
We denote by $solveOpt(P)$ the call to an ASP solver that returns an optimal answer set of an ASP program $P$, if one exists.  
Algorithm~\ref{alg_2aspq_app} takes as input a \TWOASPQW of the form~\ref{eq:2_aspqw_app} and returns a winning move for $\Box$ if any, otherwise returns $NULL$.
The algorithm starts by initializing the set of countermoves to the empty set (line~\ref{alg:initialize_countermoves_app}) and by computing an optimal answer set of the abstraction $P_1$ (line~\ref{alg:first_candidate_app}).
If $P_1$ is incoherent, the 
algorithm terminates returning $NULL$, since no winning move for $\Box$ exists (line~\ref{alg:no_new_candidate_unsat_app}).
Otherwise, $P_1$ admits an optimal answer set $M_1$, which is a move for $\Box$. Algorithm~\ref{alg_2aspq_app} then enters the CEGAR loop, which alternates countermove search and abstraction refinement steps (line~\ref{alg:open_while_app}). 

Let $M_1$ be a move for $\Box$. At each iteration of the loop, a countermove to $M_1$ is searched by solving $ctr(\Pi) \cup \fix{P_1}{M_1}$ (line~\ref{alg:solve_ce_program_app}). 
If $ctr(\Pi) \cup \fix{P_1}{M_1}$ is incoherent, then $P_2\cup\fix{P_1}{M_1}$ is incoherent and hence no candidate countermove to $M_1$ for $\overline{\Box}$ exists.
In this case, Algorithm~\ref{alg_2aspq_app} returns $M_1$ as winning move.
Otherwise, $crt(\Pi)\cup \fix{P_1}{M_1}$ is coherent, and $solveOpt(crt(\Pi)\cup \fix{P_1}{M_1})$ returns an optimal answer set $M_2$.
If $unsat\in M_2$, then there exists no $M_2' \in OptAS(ctr(\Pi)\cup\fix{P_1}{M_1})$ such that $unsat \notin M_2'$, otherwise $M_2$ would be dominated by $M_2'$.
Hence, from Proposition~\ref{prop:ctr_move_app}, none of the candidate countermoves is an actual countermove to $M_1$ for $\overline{\Box}$, and so Algorithm~\ref{alg_2aspq_app} returns $M_1$ as winning move.
If, instead, $M_2$ does not contain the atom $unsat$, then from Proposition~\ref{prop:ctr_move_app}, $M_2|_{\heads{P_2}}$ is a countermove to $M_1$ for $\overline{\Box}$.
Accordingly, Algorithm~\ref{alg_2aspq_app} extracts the countermove $CE$ by projecting $M_2$ onto the atoms appearing in the heads of the rules in $P_2$ (line~\ref{alg:ce_found_app}), and adds $CE$ to the set of countermoves $CMs$ (line~\ref{alg:add_ce_to_countermoves_app}). 
The algorithm then computes the refined abstraction $A$ by refining the abstraction $P_1$ according to the the countermoves in $CMs$ (line~\ref{alg:construct_refinement_app}). 
The refined abstraction is subsequently solved to compute the next move for $\Box$ (line~\ref{alg:find_new_candidate_app}). 
Let $N$ be an answer set of the refined abstraction $A$, and $M_1 = N|_{\heads{P_1}}$ be a move for $\Box$. If there exists some $CE \in CMs$ such that $N$ violates all the weak constraints in $ref(\Pi,CE)$, then $CE$ is again a valid countermove to $M_1$. In this case, the cost of $N$ at priority levels $l_{min}-1$, $l_{min}-2$, and $l_{min}-3$ is equal to $1$, which is the maximum possible cost at these levels.
Accordingly, if the cost of an optimal answer set of the refined abstraction $A$ at levels $l_{min}-1$, $l_{min}-2$, and $l_{min}-3$ is equal to $1$, then every possible move admits at least one $CE \in CMs$ as a countermove. Hence, no winning move for $\Box$ exists, and Algorithm~\ref{alg_2aspq_app} returns $NULL$ (line~\ref{alg_no_new_canidate_weak_app}).
In all other cases, $M_1$ does not admit any countermove in $CMs$, and the loop continues by taking $M_1$ as the next move for $\Box$.

\end{document}